\keywords{epistemic skills, upskilling, downskilling, reskilling, learning, knowability, forgettability, model checking, satisfiability, complexity.}
\tikzset{
	modal/.style={>=stealth', shorten >=1pt, shorten <=1pt, auto, node distance=1.5cm, semithick},
	world/.style={circle, draw, fill=gray!15},
	point/.style={circle, draw, fill=black, inner sep=0.5mm},
	reflexive/.style={-,in=120, out=60, loop, looseness=#1},
	reflexive/.default={5},
	reflexive point/.style={-, in=135, out=45, loop, looseness=#1},
	reflexive point/.default={25},
	reflexive above/.style={-, loop, in=120, out=60, looseness=#1},
	reflexive above/.default={7},
	reflexive below/.style={-, loop, in=240, out=300, looseness=#1},
	reflexive below/.default={7},
	reflexive left/.style={-, loop, in=150, out=210, looseness=#1},
	reflexive left/.default={7},
	reflexive right/.style={-,loop, in=30, out=330, looseness=#1},
	reflexive right/.default={7}
}
\algrenewcommand\algorithmicrequire{\textbf{Input:}}
\algrenewcommand\algorithmicensure{\textbf{Output:}}
\algnewcommand{\Initialize}[1]{\State\textbf{Initialize:} #1}
\algnewcommand\True{\textbf{true}\xspace}
\algnewcommand\False{\textbf{false}\xspace}
\def\diamondplus{\mathbin{\mathpalette\diamondplus@\relax}}
\def\diamondplus@#1#2{%
  \vcenter{%
    \hbox{%
      \setbox\z@=\hbox{$\m@th#1\oplus$}%
      \dimen@=\ht\z@ \advance\dimen@ \dp\z@
      \resizebox{!}{\dimen@}{%
        \rotatebox[origin=c]{45}{$\m@th#1\boxtimes$}%
      }% resizebox
    }% hbox
  }% vcenter
}
\def\ImportFromMnSymbol#1{%
  \DeclareFontFamily{U} {MnSymbol#1}{}
  \DeclareFontShape{U}{MnSymbol#1}{m}{n}{
   <-6> MnSymbol#15
   <6-7> MnSymbol#16
   <7-8> MnSymbol#17
   <8-9> MnSymbol#18
   <9-10> MnSymbol#19
   <10-12> MnSymbol#110
   <12-> MnSymbol#112}{}
  \DeclareFontShape{U}{MnSymbol#1}{b}{n}{
   <-6> MnSymbol#1-Bold5
   <6-7> MnSymbol#1-Bold6
   <7-8> MnSymbol#1-Bold7
   <8-9> MnSymbol#1-Bold8
   <9-10> MnSymbol#1-Bold9
   <10-12> MnSymbol#1-Bold10
   <12-> MnSymbol#1-Bold12}{}
  \DeclareSymbolFont{MnSy#1} {U} {MnSymbol#1}{m}{n}
}
\newcommand\DeclareMnSymbol[4]{\DeclareMathSymbol{#1}{#2}{MnSy#3}{#4}}
\DeclareMnSymbol{\diamondminus}{\mathord}{C}{120}
\providecommand{\bigsqcap}{%
  \mathop{%
    \mathpalette\@updown\bigsqcup
  }%
}
\newcommand*{\@updown}[2]{%
  \rotatebox[origin=c]{180}{$\m@th#1#2$}%
}
\newcommand{\algand}{\textbf{and}\xspace}
\newcommand{\ab}{\ensuremath{A}\xspace}
\newcommand{\ag}{\text{\normalfont\textsf{A}}\xspace}
\newcommand{\cl}{cl}
\newcommand{\creach}{\rightsquigarrow}
\newcommand{\gr}{\text{\normalfont\textsf{G}}\xspace}
\newcommand{\Lra}{\Leftrightarrow}
\newcommand{\lra}{\leftrightarrow}
\renewcommand{\phi}{\varphi}
\newcommand{\pr}{\text{\normalfont\textsf{P}}\xspace}
\newcommand{\Ra}{\Rightarrow}
\newcommand{\ra}{\rightarrow}
\newcommand{\reach}{\leadsto}
\newcommand{\sk}{\text{\normalfont\textsf{S}}\xspace}
\newcommand{\tail}{tail}
\newcommand{\univ}{U}
\newcommand{\mbN}{\mathbb{N}}
\newcommand{\mbR}{\mathbb{R}}
\newcommand{\lang}{\ensuremath{\mathcal{L}}\xspace}
\newcommand{\langc}{\ensuremath{\mathcal{L}_{C}}\xspace}
\newcommand{\langd}{\ensuremath{\mathcal{L}_{D}}\xspace}
\newcommand{\langdef}{\ensuremath{\mathcal{L}_{DEF}}\xspace}
\newcommand{\langcu}{\ensuremath{\mathcal{L}_{CU}}\xspace}
\newcommand{\langdm}{\ensuremath{\mathcal{L}_{DF}}\xspace}
\newcommand{\langu}{\ensuremath{\mathcal{L}_{U}}\xspace}
\newcommand{\langG}{\ensuremath{\mathcal{L}_{CDEF}}\xspace}
\newcommand{\langGA}{\ensuremath{\mathcal{L}_{CDEF\univ}}\xspace}
\newcommand{\langGUQ}{\ensuremath{\mathcal{L}_{CDEF+-=\equiv\boxplus\boxminus\Box}}\xspace}
\newcommand{\cpdl}{\text{\normalfont CPDL}\xspace}
\renewcommand{\l}{\text{\normalfont L}\xspace}
\newcommand{\lc}{\text{\normalfont L$_C$}\xspace}
\newcommand{\lcdefu}{\text{\normalfont L$_{CDEF\univ}$}\xspace}
\newcommand{\lcu}{\text{\normalfont L$_{C\univ}$}\xspace}
\newcommand{\ld}{\text{\normalfont L$_D$}\xspace}
\newcommand{\ldef}{\text{\normalfont L$_{DEF}$}\xspace}
\newcommand{\lG}{\text{\normalfont L$_{CDEF}$}\xspace}
\newcommand{\lGA}{\text{\normalfont L$_{CDEF\univ}$}\xspace}
\newcommand{\logic}[1]{\text{\normalfont L$_{#1}$}\xspace}
\newcommand{\lu}{\text{\normalfont L$_{\univ}$}\xspace}
\newcommand{\kbone}{\text{\normalfont KB$_1$}\xspace}
\newcommand{\kc}{\text{\normalfont K$^C_2$}\xspace}
\newcommand{\kdn}{\text{\normalfont K$^D_n$}\xspace}
\newcommand{\kutwo}{\text{\normalfont K$^U_2$}\xspace}
\newcommand{\sfivec}{\text{\normalfont S5$^C_2$}\xspace}
\newcommand{\RNum}[1]{\textnormal{\uppercase\expandafter{\romannumeral #1}\relax}}
\title[Epistemic Skills: Reasoning about knowledge and oblivion]{Epistemic Skills\texorpdfstring{\\}{}Reasoning about knowledge and oblivion}
\author[X.~Liang]{Xiaolong Liang\lmcsorcid{0009-0007-9079-7752}}[a]
\author[Y.N.~W\'{a}ng]{Y\`{i} N. W\'{a}ng\lmcsorcid{0000-0002-0650-4993}}[b]
\address{School of Philosophy, Shanxi University, Taiyuan, Shanxi, China}
\email{lianghillon@gmail.com}
\address{School of Philosophy and Social Development, Shandong University, Jinan, Shandong, China}
\email{ynw@xixilogic.org, corresponding author}
\begin{document}

\begin{abstract}
This paper presents a class of epistemic logics that captures the dynamics of acquiring knowledge and descending into oblivion, while incorporating concepts of group knowledge. The approach is grounded in a system of weighted models, introducing an ``epistemic skills'' metric to represent the epistemic capacities tied to knowledge updates. Within this framework, knowledge acquisition is modeled as a process of upskilling, whereas oblivion is represented as a consequence of downskilling. The framework further enables exploration of ``knowability'' and ``forgettability,'' defined as the potential to gain knowledge through upskilling and to lapse into oblivion through downskilling, respectively. Additionally, it supports a detailed analysis of the distinctions between epistemic de re and de dicto expressions. The computational complexity of the model checking and satisfiability problems is examined, offering insights into their theoretical foundations and practical implications.
\end{abstract}

\maketitle

\section{Introduction}

Epistemic logic has served as a cornerstone of applied modal logic since its foundational development in formal epistemology \cite{Wright1951,Hintikka1962} and its subsequent integration into computer science \cite{FHMV1995,MvdH1995}.
Building upon this tradition, \emph{Weighted epistemic logic} \cite{DLW2021,LW2022,LW2022b} develops over weighted model logic \cite{LM2014,HLMP2018} by incorporating the concept of \emph{epistemic skills}---broadly defined as any agent capability that enables knowledge updates.

In this framework, the weights assigned to model edges represent the skills that failed to distinguish between pairs of possible worlds, established by a similarity measure. These epistemic skills can be \emph{explicitly} formalized within the language; for instance, the formula $K_a^r \phi$ denotes that agent $a$ knows $\phi$ utilizing a degree $r$ of skills. This aligns with epistemic logics that employ similarity or distance metrics \cite{NT2015,DLW2021,LW2024c}. Formally, in a weighted Kripke model $M = (W, E, V)$ designed for explicit epistemic skills, the standard binary accessibility relation for each agent is replaced by an edge function $E: \ag \to (W \times W) \to \mbR$. Here, \ag represents the set of agents, and $E(a)(w, w')$ represents the degree of \emph{similarity} (or \emph{distance}, depending on the model's configuration) between worlds $w$ and $w'$ for agent $a$. The explicit knowledge formula is interpreted as follows:
\[
M, w \models K_a^r \phi \iff \text{for all $w'$: if $r \leq E(a)(w, w')$, then $M, w' \models \phi$.}
\]
Intuitively, $K_a^r \phi$ holds in world $w$ if and only if $\phi$ is true in all worlds that agent $a$ deems \emph{similar} to $w$---specifically, those worlds where the required skill degree $r$ does not exceed the similarity degree between $w$ and $w'$.
Alternatively, an agent's skills may be \emph{implicitly} defined in the model. In this case, the unindexed epistemic formula $K_a \phi$ expresses that agent $a$ knows $\phi$ based on their model-defined skill set \cite{LW2022b,LW2022,LW2024,LW2024b}. Formally, a weighted Kripke model for implicit epistemic skills, $M = (W, E, C, V)$, extends the model for explicit epistemic skills by including a \emph{capability function} $C : \ag \to \mbR$, where $C(a)$ represents agent $a$'s skill degree. The interpretation of the implicit knowledge formula is:
\[
M, w \models K_a \phi \iff \text{for all $w'$: if $C(a) \leq E(w, w')$, then $M, w' \models \phi$.}
\]
Here, $K_a \phi$ is true in world $w$ if and only if $\phi$ is true in all worlds $w'$ where agent $a$'s capability $C(a)$ is no higher than the similarity between $w$ and $w'$. In this implicit version, the edge function $E$ is often treated as agent-independent, meaning $E(a)(w, w') = E(b)(w, w')$ for all agents $a$ and $b$. Although initially defined as degrees, epistemic skills can be generalized to systems lacking a linear order, such as abstract skill sets, which is the approach adopted in this paper.

A central theme in epistemic logic is the elucidation of various forms of \emph{group knowledge}, with \emph{mutual knowledge} (what all agents know), \emph{common knowledge}, and \emph{distributed knowledge} standing out as well-recognized concepts. While these concepts have been extensively studied in classical epistemic logic \cite{DHK2008}, their theories and applications within weighted epistemic logic have only recently garnered attention \cite{LW2024,LW2024b}. The weighted framework presented here seamlessly integrates \emph{field knowledge} while retaining classical notions of \emph{mutual}, \emph{common}, and \emph{distributed knowledge}.

Classical epistemic logic has also spurred \emph{dynamic} explorations into knowledge-altering actions, such as public announcements, birthing the subfield of dynamic epistemic logic \cite{DHK2008}. This discipline enriches its language with update modalities to depict evolving epistemic states. Prominent frameworks like Public Announcement Logic \cite{Plaza1989} and Action Model Logic \cite{BMS1998}---the former a subset of the latter's broader scope---exemplify this approach. Extensions incorporating the concept of \emph{knowability} have since gained traction \cite{BBDHHL2008,ABDS2010}, illuminating the potential for knowledge acquisition in dynamic informational contexts.

Parallel efforts have tackled the elusive phenomenon of \emph{forgetting}, spanning classical and non-classical logics. Two distinct strategies dominate: syntactical methods, such as the AGM paradigm \cite{AGM1985}, which excise formulas from an agent's knowledge base akin to belief contraction, and semantical methods, which reinterpret knowledge through techniques like erasing propositional truth values \cite{LR1994,LLM2003,DHLM2009,ZZ2009} or redefining agents' awareness scopes \cite{FH1988}. These varied approaches underscore the difficulty of modeling oblivion.

This paper develops a \emph{unified} logical framework for modeling group knowledge, knowledge updates, knowability, and forgettability based on weighted epistemic logic with \emph{implicit} skill sets. In this framework, each agent's skill set is implicitly defined in the model, and \emph{update modalities} drive the representation of knowledge acquisition, the descent into oblivion, and epistemic revision.
%---achieved through direct assignment or adoption of another agent's skills.
These processes are formalized as \emph{upskilling} ($(+S)_a \phi$), \emph{downskilling} ($(-S)_a \phi$), \emph{reskilling} ($(=S)_a \phi$), and \emph{learning} ($(\equiv_b)_a \phi$). Semantically, these operators function as model updates that modify the capability function $C$: expanding agent $a$'s skill set by $S$ (upskilling), reducing it by $S$ (downskilling), resetting it to $S$ (reskilling), or adopting agent $b$'s skills (learning).

By focusing on skill-modifying operations, our analysis extends to \emph{knowability} and \emph{forgettability}, quantifying the potential updates that lead to knowledge or oblivion. Drawing on \cite{BBDHHL2008} (titled ```knowable' as `known after an announcement'\,''), we posit that: the knowable reflects what becomes known via upskilling, while the forgettable captures what fades into the unknown via downskilling. Formally, utilizing \emph{quantifying modalities} for arbitrary upskilling ($\boxplus_a$), downskilling ($\boxminus_a$), and reskilling ($\Box_a$), the statement ``$\phi$ is knowable by agent $a$'' is expressed as $\neg {\boxplus_a} \neg K_a \phi$ or $\neg \Box_a \neg  K_a \phi$, and ``$\phi$ is forgettable by agent $a$'' as $\neg {\boxminus_a} K_a \phi$ or $\neg \Box_a K_a \phi$.

This approach also refines the distinction between \emph{de re} and \emph{de dicto} epistemic expressions \cite{Wright1951,Quine1956}, associating de re knowledge ($\exists x K_a \phi(x)$) with ``knowing how'', and de dicto knowledge ($K_a \exists x \phi (x)$)  with ``knowing that'' (see Section~\ref{sec:de-re}). Through these mechanisms, the proposed framework captures the dynamics of acquiring knowledge and descending into oblivion, as well as the potentials for knowability and forgettability.

We also analyze the computational complexity of these logics. Model checking for logics without quantifying modalities is shown to be in P, while logics with quantifying modalities are PSPACE complete. The satisfiability problem presents greater challenges: logics lacking common knowledge, update or quantifying modalities are PSPACE complete; those including common knowledge in addition, but without update or quantifying modalities, become EXPTIME complete. The complexity of satisfiability for logics with update or quantifying modalities remains an open question.

This paper extends \cite{LW2024} that explored basic epistemic language under various similarity/distance measures and introduced group knowledge, update and quantifying modalities into weighted epistemic logic and provided computational complexity results of the model checking problems. Here, we further introduce two variants of the logic using generalized skill sets (fuzzy sets or lattice structures; see Section~\ref{sec:variants}) to enhance versatility, analyze the complexity of satisfiability problems for logics without update or quantifying modalities (Section~\ref{sec:sat}), and provide a comprehensive revision of the earlier study.

The paper is organized as follows: Section~\ref{sec:logics} defines the formal syntax and semantics of the proposed logics (\ref{sec:syntax}--\ref{sec:semantics}), illustrates model abstraction from datasets and gives examples of model checking of formulas (\ref{sec:rough}--\ref{sec:repres}), generalizes skill sets to fuzzy sets and lattice structures (\ref{sec:variants}), and examines epistemic de re and de dicto expressions (\ref{sec:de-re}). Sections~\ref{sec:mc} and \ref{sec:sat} analyze the computational complexity of model checking and satisfiability problems, respectively. The paper concludes with Section~\ref{sec:conclusion}, offering final remarks. 

\section{Logics}
\label{sec:logics}

Classical epistemic logic \cite{FHMV1995,MvdH1995} is extended in this study through the integration of epistemic skills into the models. An \emph{epistemic skill} is conceptualized broadly here, transcending the conventional notion of a skill. It may encompass a profession inherently tied to specific abilities or a set of skills, as well as a position or privilege that provides resources for acquiring knowledge. For instance, an individual with access to the JFK Assassination Records possesses such an epistemic skill. More generally, any capacity that enhances knowledge can be classified as an epistemic skill. This extension, detailed in this section, offers a unified framework for modeling knowledge and oblivion, alongside diverse forms of group knowledge---namely, \emph{mutual}, \emph{common}, \emph{distributed}, and \emph{field knowledge}.

\begin{conv}[Parameters of the logics]
Four sets, three of which are primitive, are defined as parameters prior to defining the formal languages:
\begin{itemize}
\item \pr: the \emph{set of atomic propositions};
\item \ag: the \emph{set of agents};
\item $\gr \subseteq \wp (\ag)$: the \emph{set of finite, nonempty groups}, where $\wp(\ag)$ is the power set of $\ag$;
\item \sk: the \emph{set of epistemic skills} (e.g., capabilities, professions, or privileges).
\end{itemize}
For simplicity, the sets \pr, \ag and \sk are assumed to be countably infinite throughout this paper, implying that \gr is also countably infinite. These sets are fixed as parameters across all languages considered herein. Alternatively, these sets may be treated as having arbitrary cardinality or as adjustable parameters tailored to specific languages, provided their cardinality is sufficient to support the required expressive power and practical application.
\end{conv}

\subsection{Syntax}
\label{sec:syntax}

The most expressive language introduced here, denoted \langGUQ, has its grammar defined as follows:
\begin{align*}
\phi ::= &\ p \mid \neg \phi \mid (\phi \ra \phi) \mid K_a \phi \mid C_G \phi \mid D_G \phi \mid E_G \phi \mid F_G \phi \mid \\
&\ (+_S)_{a} \phi \mid (-_S)_{a} \phi \mid ({=}_S)_{a} \phi \mid ({\equiv}_b)_{a} \phi \mid
\boxplus_{a} \phi \mid \boxminus_{a} \phi \mid \Box_a \phi
\end{align*}
where $p \in \pr$, $a, b \in \ag$, $G \in \gr$, and $S \subseteq \sk$.

This language subsumes multiple sublanguages of interest. The basic language, \lang, is constructed recursively from atomic propositions using Boolean operators (negation and implication as primitives) and the modal operator $K_a$ ($a \in \ag$), which expresses \emph{individual knowledge}. Thus, \lang serves as the formal language of classical multi-agent epistemic logic, providing a baseline for further extensions.

Four types of \emph{group-knowledge modalities} are incorporated: $C_G$ for \emph{common knowledge}, $D_G$ for \emph{distributed knowledge}, $E_G$ for \emph{mutual knowledge}, and $F_G$ for \emph{field knowledge}, where $G \in \gr$ is a group of agents. Intuitions of these modalities are found in \cite{LW2024b}.

Four types of \emph{update modalities} are introduced to express skill-based epistemic dynamics: $(+_S)_{a}$, $(-_S)_{a}$, $(=_S)_a$ and $(\equiv_b)_a$, where $a, b \in \ag$ are agents and $S \subseteq \sk$ is a skill set. These operators represent, respectively, agent $a$'s \emph{upskilling} (augmenting skills by $S$), \emph{downskilling} (removing skills $S$), \emph{reskilling} (replacing the skill set with $S$), and \emph{learning} (adopting agent $b$'s skill set%
\footnote{Alternative learning operators could be defined, such as $(+_b)_a$ (adding $b$'s skills to $a$'s) or $(-_b)_a$ (removing $b$'s skills from $a$'s), but such extensions are omitted here to avoid unnecessary complexity.}%
). These operators are self-dual, a property verifiable once semantics is introduced.

Additionally, three \emph{quantifying modalities}, or \emph{quantifiers}, are included: $\boxplus_a$, $\boxminus_a$ and $\Box_a$, representing agent $a$'s ability to add, subtract, and modify an arbitrary skill set, respectively. Their duals, $\diamondplus_a$, $\diamondminus_a$ and $\Diamond_a$, are non-primitive and defined accordingly.

Languages extending \lang are named using combinations of subscripts $C$, $D$, $E$, $F$, $+$, $-$, $=$, $\equiv$, $\boxplus$, $\boxminus$ and $\Box$ to indicate the inclusion of specific types of group-knowledge, update or quantifying modalities. For instance, \langdm denotes the extension of \lang with distributed ($D_G$) and field ($F_G$) knowledge modalities, while $\lang_{C + \boxplus}$ extends \lang with common knowledge modality ($C_G$), upskilling modality ($(+_S)_{a}$), and the quantifier for arbitrary upskilling ($\boxplus_a$), applicable for any $a \in \ag$, $G \in \gr$ and $S \subseteq \sk$.

This produces $2^{11} = 2048$ distinct languages extending \lang, determined by the presence or absence of each operator type---four group-knowledge modalities, four update modalities, and three quantifiers---though not all combinations are highlighted here. Additional Boolean operators, such as conjunction and disjunction, follow classical definitions. A \emph{formula} refers to an element of one of these languages, with its specific language determined by context unless specified otherwise.

\subsection{Semantics}
\label{sec:semantics}\label{sec:models}

A class of models is introduced to interpret the languages defined previously.

\begin{defi}\label{def:models}
A \emph{weighted Kripke model} (\emph{model} for short) is a quadruple $(W, E, C, \beta)$, where:
	\begin{itemize}
	\item $W$ is a nonempty set of (possible) worlds or states;
	\item $E : W \times W \to \wp(\sk)$ is an \emph{edge function}, assigning a skill set to each pair of worlds;
	\item $C: \ag \to \wp(\sk)$ is a \emph{capability function} that assigns a skill set to each agent;
	\item $\beta: W \to \wp(\pr)$ is a valuation, mapping each world to a set of true atomic propositions.
	\end{itemize}
The model satisfies two constraints in addition:%
\footnote{They are the constraints of a similarity metric. More or less constraints can be enforced if needed.}
	\begin{itemize}
		\item Positivity (i.e., congruence implies equality): for all $w, u \in W$, if $E(w, u) = \sk$, then $w = u$;
		\item Symmetry: for all $w, u \in W$, $E(w, u) = E(u, w)$.
	\end{itemize}
The tuple $(W, E)$, i.e., the pair of the first two elements of a model, is called a \emph{frame}.
\end{defi}
In this definition, the edge function $E$ specifies the skills ineffective for distinguishing between worlds: for any pair $(w, u)$, an agent can differentiate $w$ from $u$ only if her skill set, as assigned by $C$, contains at least one skill not in $E(w, u)$. The positivity condition ensures that if $E(w, u) = \sk$---implying no skill enables discernment---the worlds $w$ and $u$ are identical. Symmetry, meanwhile, guarantees that the epistemic accessibility relation remains symmetric.

Given a capability function $C: \ag \to \wp(\sk)$, agents $a, b, x \in \ag$ and a skill set $S \subseteq \sk$, the following modified capability functions are defined:
\begin{multicols}{2}
$\begin{array}{lll}
C^{a \cup S} (x) &=& \left \{ \begin{tabular}{ll} $C(a) \cup S$,& if $x = a$, \\ $C(x)$,& if $x \neq a$; \end{tabular} \right.
\\[2ex]
C^{a \setminus S} (x) &=& \left \{ \begin{tabular}{ll} $C(a) \setminus S$,& if $x = a$, \\ $C(x)$,& if $x \neq a$; \end{tabular} \right.
\end{array}$

$\begin{array}{lll}
C^{a=S} (x) &=& \left \{ \begin{tabular}{ll} $S$,& if $x = a$, \\ $C(x)$,& if $x \neq a$; \end{tabular} \right.
\\[2ex]
C^{a \equiv b} (x) &=& \left \{ \begin{tabular}{ll} $C(b)$,& if $x = a$, \\ $C(x)$,& if $x \neq a$. \end{tabular} \right.
\\
\end{array}$
\end{multicols}

\noindent Here, $C^{a \cup S}$ denotes a capability function identical to $C$ except at agent $a$, whose skill set is expanded by $S$ (upskilling). Similarly, $C^{a \setminus S}$ reduces $a$'s skill set by $S$ (downskilling), $C^{a = S}$ sets $a$'s skill set to $S$ (reskilling), and $C^{a \equiv b}$ aligns $a$'s skill set with $b$'s (learning). An additional variant, $C^{a \cap S}$, where $a$'s skill set becomes $C(a) \cap S$, is not explicitly included but can be expressed as $C^{a \setminus (\sk \setminus S)}$, consistent with the definition of set intersection through set difference.

The satisfaction criteria for formulas are defined as follows.

\begin{defi}\label{def:semantics}
Given a formula $\phi$, a model $M = (W, E, C, \beta)$, and a world $w \in W$, the notation $M, w \models \phi$ indicates that $\phi$ is \emph{true} or \emph{satisfied} at $w$ in $M$. This relation is defined inductively by the following conditions:
\begingroup
\addtolength{\jot}{-.6ex}
\begin{alignat*}{3}
&M, w \models p & \iff\  & p \in \beta(w)
\\
&M, w \models \neg \psi& \iff\  & \text{not } M, w \models \psi
\\
&M, w \models (\psi \ra \chi)& \iff\  & \text{if $M, w \models \psi$, then $M, w \models \chi$}
\\
&M, w \models K_a \psi & \iff\  & \text{for all $u \in W$, if $C(a) \subseteq E(w, u)$ then $M, u \models \psi$}
\\
&M, w \models E_G \psi& \iff\  & M, w \models K_a\psi \text{ for all $a\in G$}
\\
&M, w \models C_G \psi & \iff\  & \text{for all positive integers $n$, $M, w \models E_G^n \psi$,}
\\
&& & \text{ where $E_G^1\psi := E_G \psi$ and $E_G^n\psi := E^1_G E_G^{n-1}\psi$}
\\
&M, w \models D_G \psi& \iff\  & \text{for all $u \in W$, if $\textstyle\bigcup_{a\in G} C(a) \subseteq E(w, u)$ then $M, u \models \psi$}
\\
&M, w \models F_G \psi & \iff\  & \text{for all $u \in W$, if $\textstyle\bigcap_{a\in G} C(a) \subseteq E(w, u)$ then $M, u \models \psi$}
\\
&M, w \models (+_S)_{a} \psi& \iff\  & M^{a \cup S}, w \models \psi, \text{ where } M^{a \cup S} = (W, E, {C^{a \cup S}}, \beta)
\\
&M, w \models (-_S)_a \psi& \iff\  & M^{a \setminus S}, w \models \psi, \text{ where } M^{a \setminus S} = (W, E, C^{a \setminus S}, \beta)
\\
&M, w \models ({=}_S)_a \psi& \iff\  & M^{a = S}, w \models \psi, \text{ where } M^{a = S} = (W, E, C^{a = S}, \beta)
\\
&M, w \models ({\equiv}_b)_a \psi& \iff\  & M^{a \equiv b}, w \models \psi, \text{ where } M^{a \equiv b} = (W, E, C^{a \equiv b}, \beta)
\\
&M, w \models \boxplus_a \psi& \iff\  & \text{for all $S \subseteq \sk$}, M, w \models (+_S)_a \psi
\\
&M, w \models \boxminus_a \psi& \iff\  & \text{for all $S \subseteq \sk$}, M, w \models (-_S)_a \psi
\\
&M, w \models \Box_a \psi& \iff\  & \text{for all $S \subseteq \sk$}, M, w \models {({=}_{S})_a} \psi.
\end{alignat*}
\endgroup
A formula $\phi$ is \emph{valid} if $M, w \models \phi$ holds for all models $M$ and all worlds $w$, and \emph{satisfiable} if $M, w \models \phi$ holds for some model $M$ and some world $w$.
\end{defi}

Given that $G$ is a finite group, the formula $E_G \psi$ is logically equivalent to $\bigwedge_{a \in G} K_a \psi$. This equivalence suggests that its inclusion in the language is not strictly necessary, serving primarily to ensure comprehensiveness. While $G$ could be allowed to be infinite, the present framework adheres to classical epistemic logic, where groups are conventionally finite (see, e.g., \cite{FHMV1995}). Nevertheless, this equivalence potensionally influences the language's succinctness, preventing $E_G \psi$ from being treated as a simple syntactic shorthand for $\bigwedge_{a \in G} K_a \psi$ in such analyses.

For a group $G \in \gr$, a \emph{$G$-path in a model $M = (W, E, C, \beta)$ from a world $w$ to a world $u$} is a finite sequence $\langle w_0, w_1, \dots, w_n \rangle$ such that $w_0 = w$, $w_n = u$, and for all $i$ where $1 \leq i \leq n$, there exists an agent $a_i \in G$ satisfying $C(a_i) \subseteq E (w_{i-1}, w_{i})$. We denote $w \reach^M_G u$ if there exists a $G$-path from $w$ to $u$ in $M$; omitting the superscript $M$ when the model is clear from context.  The semantics of $C_G \psi$ is equivalently expressed as:
\[ M, w \models C_G \psi \iff \text{for all $u \in W$, if $w \reach_G u$ then $M, u \models \psi.$}\]

Formulas such as $({=}_\emptyset)_{a} \phi$, where agent $a$ is assigned an empty skill set, are permissible. This could alternatively be expressed without an empty set: $({=}_\emptyset)_{a} \phi$ is equivalent to $({=}_S)_a (-_{S})_a \phi$ for any $S \subseteq \sk$. Additionally, both $(+_\emptyset)_a \phi$ and $(-_\emptyset)_a \phi$ are equivalent to $\phi$, as verified through the semantics.% Nonetheless, explicitly allowing empty skill sets alters the complexity of the satisfiability problem for these logics, as detailed in Section~\ref{sec:sat}.

%Note that although $({=}_S)_{a} \phi$ is not a legal formula when $S$ is the empty set $\emptyset$, we can regard it as a defined formula, i.e., $({=}_\emptyset)_{a} \phi := ({=}_S)_a (-_{S})_a \phi$ (for any qualified set $S$). In the mean time, it is not hard to verify that both $(+_\emptyset)_a \phi$ and $(-_\emptyset)_a \phi$, if allowed, are equivalent to $\phi$, so there is no need to worry about the cases with empty sets.

A logic is defined over a given formal language, consisting of the set of valid formulas under the specified semantics. Each logic adopts the naming convention of its corresponding formal language but is denoted in upright Roman typeface, e.g., \l, $\l_{F+\boxplus}$ and $\l_{CDEF+-=\equiv\boxplus\boxminus\Box}$.

\subsection{Abstracting a model from a dataset}
\label{sec:rough}

A primary motivation for the development of weighted epistemic logic stems from the theoretical correspondence between classical modal logic and rough set theory, a connection established in the 1980s and 1990s \cite{OP1984,Vakarelov1989,Orlowska1990,YL1996}. Within this context, the universe of an \emph{approximation space} (a dataset comprising a set of objects) corresponds to the universe of a Kripke model, while a category (a subset of the universe) is characterized by a proposition, akin to the extension (truth set) of that proposition within the model. Consequently, the \emph{upper} and \emph{lower approximations} of a proposition $p$ correspond to the modal formulas $\Diamond p$ and $\Box p$, respectively, enabling the description of approximation spaces through modal logic.

Weighted Kripke models extend this correspondence by interpreting the attributes of an approximation space as epistemic skills. This conceptual shift permits operations over attributes that classical modal logic cannot express without introducing modal operators for each attribute set. Furthermore, the update mechanisms inherent in weighted Kripke models enhance the expressive power of rough sets, as studied in recent work \cite{LW2025}.

To illustrate this framework, we consider the solar system example adapted from \cite[Example~1]{Orlowska1988}, which categorizes planets based on specific attributes (see Table~\ref{tbl:planets}).%
\footnote{The object Pluto is removed due to the redefinition of the term \emph{planet} by the International Astronomical Union in 2006.}
We analyze this dataset using weighted epistemic logic, distinguishing our approach from standard rough set analysis.
\begin{table}[h]
\caption{\label{tbl:planets}Attributes of Solar System Planets (adapted from \cite[Example~1]{Orlowska1988}).}
\medskip
\begin{center}
\begin{tblr}{
	hline{1,10} = {1pt, solid},
	hline{2} = {.5pt, solid},
%	rows = {rowsep=1pt},
}
Planet & Size $(1)$ & Distance ($2$) & Moon ($3$) \\
Mercury ($m_e$) & small & near & no \\
Venus ($v$) & small & near & no \\
Earth ($e$) & small & near & yes \\
Mars ($m_a$) & small & near & yes \\
Jupiter ($j$) & large & far & yes \\
Saturn ($s$) & large & far & yes \\
Uranus ($u$) & medium & far & yes \\
Neptune ($n$) & medium & far & yes \\
\end{tblr}
\end{center}
\end{table}

In this model, the planets Mercury, Venus, Earth, Mars, Jupiter, Saturn, Uranus, and Neptune are denoted by $m_e$, $v$, $e$, $m_a$, $j$, $s$, $u$, and $n$, respectively. The attribute set $\sk = \{ 1, 2, 3 \}$, where $1$ represents ``size,'' $2$ represents ``distance from the sun,'' and $3$ represents ``possession of a moon.''

We derive a frame $(W, E)$ from the data in Table~\ref{tbl:planets}, where the set of worlds is $W = \{m_e, v, e, m_a, j, s, u, n \}$ and the edge function $E$ is detailed in Table~\ref{tbl:edge}. Additionally, we assume a capability function $C$ and a valuation $\beta$ defined as follows:
\begin{itemize}
\item $C(a) = \{1, 2\}$: Agent $a$ has access to attributes $1$ (size) and $2$ (distance); 
\item $C(b) = \{1, 3\}$: Agent $b$ has access to attributes $1$ (size) and $3$ (moon);
\item $C(c) = \{2, 3\}$: Agent $c$ has access to attributes $2$ (distance) and $3$ (moon);
\item The proposition $p$ (representing ``looks beautiful'') holds exactly at the worlds $e$, $m_a$, $j$, and $s$. Formally, $p \in \beta(e) \cap \beta(m_a) \cap \beta(j) \cap \beta(s)$.
\end{itemize}

\begin{table}
\centering
\parbox{.8\textwidth}{
\caption{\label{tbl:edge}Edge Function $E$ Derived from Table~\ref{tbl:planets}.\\
Note: $E$ is symmetric; blank cells mirror the diagonal.}
}
\small
\begin{tblr}{
	hline{2} = {.5pt, solid},
	hline{3-9} = {.2pt, dashed},
	vline{2} = {.5pt, solid},
	columns = {c},
	rows = {mode=math},
}
E & m_e & v & e & m_a & j & s & u & n \\
m_e & \{1, 2, 3\} & \{1, 2, 3\} & \{1, 2\} & \{1, 2\} & \emptyset & \emptyset & \emptyset & \emptyset \\
v & & \{1, 2, 3\} & \{1, 2\} & \{1, 2\} & \emptyset & \emptyset & \emptyset & \emptyset \\
e & & & \{1, 2, 3\} & \{1, 2, 3\} & \{3\} & \{3\} & \{3\} & \{3\} \\
m_a & & & & \{1, 2, 3\} & \{3\} & \{3\} & \{3\} & \{3\} \\
j & & & & & \{1, 2, 3\} & \{1, 2, 3\} & \{2, 3\} & \{2, 3\} \\
s & & & & & & \{1, 2, 3\} & \{2, 3\} & \{2, 3\} \\
u & & & & & & & \{1, 2, 3\} & \{1, 2, 3\} \\
n & & & & & & & & \{1, 2, 3\} \\
\end{tblr}
\end{table}

This configuration yields the model $M = (W, E, C, \beta)$. We can verify the following properties within this model:
\begin{itemize}
\item Individual knowledge:
	\begin{itemize}
	\item $M, e \not\models K_a p$: Agent $a$ does not know that Earth looks beautiful. Intuitively, because $C(a) = \{1, 2\}$, agent $a$ cannot distinguish Earth ($e$) from other planets like Mercury ($m_e$) or Venus ($v$) that share similar size and distance attributes. Since $p$ is false at $m_e$ (Mercury does not look beautiful in this valuation), agent $a$ does not know $p$ at $e$.
	\item $M, e \models \neg K_a \neg p$: Agent $a$ considers it \emph{possible} that Earth looks beautiful.
	\item $M, e \models K_b p \wedge K_c p$: Both agents $b$ and $c$ know that Earth looks beautiful.
	\end{itemize}
\item Group knowledge (see \cite{LW2024b} for a detailed explanation of these operators):
	\begin{itemize}
	\item It follows that $E_{\{a, b\}} p$, $E_{\{a, c\}} p$ and $E_{\{a, b, c\}} p$ are all false (at $e$ in $M$), implying that mutual knowledge of $p$ does not exist for these groups. Consequently, common knowledge $C_{\{a, b\}} p$, $C_{\{a, c\}} p$ and $C_{\{a, b, c\}} p$ are also false.
	\item $M, e \models C_{\{b, c\}} p$: It is common knowledge between $b$ and $c$ that Earth looks beautiful. In this specific model, $E^n_{\{b, c\}} p$ is true at both $e$ and $m_a$ for all $n \in \mbN$ (in particular, $E_{\{b, c\}} p$ is true).	
	\item $M, e \models D_{\{a, b\}} p$: Agents $a$ and $b$ possess distributed knowledge of $p$. By pooling their skills, the group capability becomes $\{1, 2, 3\}$. With this complete skill set, the only planets indistinguishable from Earth is itself and Mars ($m_a$). Since $p$ is true at both Earth and Mars, the group \emph{distributedly knows} $p$. Similarly, distributed knowledge of $p$ holds for groups $\{a, c\}$, $\{b, c\}$, and $\{a, b, c\}$.
	\item $M, e \not\models F_{\{a, b\}} p$: Field knowledge of $p$ fails for group $\{a, b\}$. The shared skill set is $\{1\}$ (size). Based on size alone, the set of similar planets is $\{m_e, v, e, m_a\}$. Since $p$ is false at $m_e$, the group does not possess filed knowledge of $p$. For similar reasons, field knowledge of $p$ fails for groups $\{a, c\}$, $\{b, c\}$, and $\{a, b, c\}$.
	\end{itemize}
\item Update modalities (to see the following, one just check carefully the skill set of each agent):
	\begin{itemize}
	\item Upskilling: $M, e \not\models (+_{\{1, 2\}})_a K_a p$, but $M, e \models (+_{\{3\}})_a K_a p$;
	\item Downskilling: $M, e \models (-_{\{2\}})_b K_b p$, but $M, e \not \models (-_{\{1\}})_b K_b p$ and $M, e \not \models (-_{\{3\}})_b K_b p$;
	\item Reskilling: $M, e \models (=_{\{1, 3\}})_c K_c p$ and $M, e \models (=_{\{2, 3\}})_c K_c p$, but $M, e \not\models (=_{\{1, 2\}})_c K_c p$;
	\item Learning: $M, e \models (\equiv_{b})_a K_a p$ and $M, e \models (\equiv_{c})_a K_a p$, but $M, e \not\models (\equiv_{a})_b K_b p$.
	\end{itemize}
\item Quantifiers:
	\begin{itemize}
	\item Knowability: From the above we conclude that proposition $p$ is knowable by all agents;
	\item Forgettability: Similarly, proposition $p$ is forgettable by all agents;
	\item Arbitrary updates: $M, e \models \Box_a p$, but $M, e \not\models \Box_a K_a p$ and $M, e \not\models \Box_a \neg K_a p$.
	\end{itemize}
\end{itemize}

\subsection{A sophisticated model: representation and model checking}
\label{sec:repres}

This section presents a more complex formal model to illustrate the expressive power of the epistemic logic framework, showcasing several formulas that hold within it. The model is adapted from \cite[Example~3.2]{LW2022b}, which offers an intuitive interpretation of its structure. For a comprehensive illustration of group knowledge in the context of weighted models, we refer to \cite{LW2024b}. Recent work establishes connections between rough sets theory and weighted modal logic \cite{LW2025}, enabling applications of this framework in data science, particularly for modeling knowledge and uncertainty in complex datasets.

\subsubsection{Model definition}
Let $s_1, s_2, s_3, s_4 \in \sk$ denote epistemic skills and $a, b, c \in \ag$ represent agents. The model is defined as $M = (W, E, C, \beta)$ with the following components:
\begin{itemize}
\item $W = \{ w_1, w_2, w_3, w_4, w_5 \}$ constitutes the set of possible worlds.
\item $E : W \times W \to \wp(\sk)$, the edge function, is defined by:
	\begin{itemize}
	\item $E(w_1,w_1) = E(w_2,w_2) = E(w_3,w_3) = E(w_4,w_4) = E(w_5,w_5) = \{ s_1, s_2, s_3, s_4 \}$,
	\item $E(w_1, w_2) = E(w_2, w_1) = E(w_3, w_5) = E(w_5, w_3) = \{ s_1, s_4\}$,
	\item $E(w_1, w_3) = E(w_2, w_5) = E(w_3, w_1) = E(w_5, w_2) = \{ s_1, s_2, s_3\}$,
	\item $E(w_1, w_4) = E(w_4, w_1) = \emptyset$,
	\item $E(w_1, w_5) = E(w_2, w_3) = E(w_3, w_2) = E(w_5, w_1) = \{ s_1 \}$,
	\item $E(w_2, w_4) = E(w_4, w_2) = \{ s_2, s_3 \}$,
	\item $E(w_3, w_4) = E(w_4, w_3) = \{ s_4 \}$,
	\item $E(w_4, w_5) = E(w_5, w_4) = \{ s_2, s_3, s_4 \}$.
	\end{itemize}
\item $C: \ag \to \wp(\sk)$, the capability function, assigns skill sets to agents $a$, $b$ and $c$:
	\begin{itemize}
	\item $C(a) = \{s_1, s_2, s_3\}$,
	\item $C(b) = \{s_2, s_3, s_4\}$,
	\item $C(c) = \{s_4\}$.
	\end{itemize}
\item $\beta: W \to \wp(\pr)$, the valuation function, assigns proposition sets to each world:
	\begin{itemize}
	\item $\beta(w_1) = \{p_1, p_2\}$
	\item $\beta(w_2) = \{p_1, p_3\}$
	\item $\beta(w_3) = \{p_1, p_2, p_4\}$
	\item $\beta(w_4) = \{p_3, p_4\}$
	\item $\beta(w_5) = \{p_1,p_3,p_4\}$.
	\end{itemize}
\end{itemize}
That $M$ satisfies the model conditions---positivity and symmetry---can be readily confirmed. Representing $M$ diagrammatically often aids understanding (see Figure~\ref{fig:sim-model}). In such a diagram, nodes correspond to worlds, and undirected edges indicate accessibility relations, labeled with the skill sets from $E$ that define indistinguishability between worlds. An edge labeled with $\emptyset$, as between $w_1$ and $w_4$, signifies that all agents can distinguish the pair except for totally incompetent agents (i.e., agents with an empty skill set), and such edges are typically omitted from the diagram. This visualization clarifies the model's structure and connectivity.

\begin{figure}
\centering
\parbox{.66\textwidth}{%
\begin{tikzpicture}[modal, node distance=1.2cm and 2cm, world/.append style={minimum width=1cm, minimum height=.5cm}, inner sep=.5ex]
\scriptsize
\node[world, ellipse split] (w1) {$w_1$ \nodepart{lower} $p_1, p_2$};
\node[world, ellipse split] (w2) [right=of w1] {$w_2$ \nodepart{lower} $p_1, p_3$};
\node[world, ellipse split] (w3) [below=of w2] {$w_3$ \nodepart{lower} $p_1, p_2, p_4$};
\node[world, ellipse split] (w4) [right=of w3] {$w_4$ \nodepart{lower} $p_3, p_4$};
\node[world, ellipse split] (w5) [right=of w2] {$w_5$ \nodepart{lower} $p_1, p_3, p_4$};

\path (w1) edge [reflexive above=3] node[above] {\tiny $s_1, s_2, s_3, s_4$} (w1);
\path (w1) edge node[above] {\tiny $s_1, s_4$} (w2);
\path (w1) edge node[below left] {\tiny $s_1, s_2, s_3$}(w3);
\path (w1) edge [bend left = 36] node[above] {\tiny $s_1$} (w5);
\path (w2) edge [reflexive above=3] node[above] {\tiny $s_1, s_2, s_3, s_4$} (w2);
\path (w2) edge node [left] {\tiny $s_1$} (w3);
\path (w2) edge node [left, xshift=-3ex, yshift=1.5ex] {\tiny $s_2, s_3$} (w4);
\path (w2) edge node [above] {\tiny $s_1, s_2, s_3$} (w5);
\path (w3) edge [reflexive below=3] node[below] {\tiny $s_1, s_2, s_3, s_4$} (w3);
\path (w3) edge node[below] {\tiny $s_4$} (w4);
\path (w3) edge node [right, xshift=3ex, yshift=1.5ex] {\tiny $s_1, s_4$} (w5);
\path (w4) edge [reflexive below=3] node[below] {\tiny $s_1, s_2, s_3, s_4$} (w4);
\path (w4) edge node[right] {\tiny $s_2, s_3, s_4$} (w5);
\path (w5) edge [reflexive above=3] node[above] {\tiny $s_1, s_2, s_3, s_4$} (w5);
\end{tikzpicture}
}
\parbox{.3\textwidth}{%
$\footnotesize
\begin{array}{l}
\bigskip\bigskip\\
C(a) = \{s_1, s_2, s_3\}\\
C(b) = \{s_2, s_3, s_4\}\\
C(c) = \{s_4\}\bigskip\\
C^{a \cup \{s_4\}}(a)=\{s_1, s_2, s_3, s_4\}\\
C^{a \setminus \{s_2, s_3\}}(a)=\{s_1\}\\
C^{c=\{s_2\}}(c)=\{s_2\}\\
C^{b\equiv c}(b)=\{s_4\}\\
\end{array}$
}

\caption{Illustration of the model $M$. Curly brackets are omitted from set labels for brevity. Edges labeled with the empty set, such as between $w_1$ and $w_4$, indicate universal distinguishability---except by totally incompetent agents (those with an empty skill set)---and are not depicted in the diagram.}\label{fig:sim-model}
\end{figure}

\subsubsection{Model checking results}
\label{subsubsec:mc}

Given the model $M = (W, E, C, \beta)$ defined above, the following properties hold.

\paragraph{\mdseries\itshape Individual knowledge}
\begin{enumerate}
\item (Knowledge of a proposition) $M, w_2 \models K_a p_3$: In world $w_2$, agent $a$ knows proposition $p_3$.
\item (Uncertainty) $M, w_4 \models \neg K_b p_1 \wedge \neg K_b \neg p_1$: Agent $b$ does not know whether $p_1$ is true or false at $w_4$, reflecting uncertainty about $p_1$.
\item (Meta-knowledge) $M, w_3 \models K_c (K_a p_3 \vee K_a \neg p_3)$: In world $w_3$, agent $c$ knows that agent $a$ knows the truth value of $p_3$, even if $c$ does not know $p_3$ themselves.
\end{enumerate}

\paragraph{\mdseries\itshape Group knowledge}
\begin{enumerate}[resume]
\item (Mutual knowledge) $M, w_4 \models E_{\{a,b\}} (p_3 \wedge p_4)$: In world $w_4$, agents $a$ and $b$ mutually know both $p_3$ and $p_4$.
\item (Common knowledge) $M, w_5 \models (\neg C_{\{a,c\}} p_1 \wedge \neg C_{\{a,c\}} \neg p_1) \wedge (\neg C_{\{a,c\}} p_2 \wedge \neg C_{\{a,c\}} \neg p_2)$: In world $w_5$, neither $p_1$ nor $p_2$, nor their negations, constitute common knowledge between agents $a$ and $c$.
\item (Distributed knowledge) $M, w_4 \models D_{\{a,b\}} (\neg p_1 \wedge p_4)$: In world $w_4$, by pooling their skills $(C(a) \cup C(b))$, the group $\{a, b\}$ can distinguish enough worlds to know that $p_1$ is false and $p_4$ is true.
\item (Field knowledge) $M, w_4 \models \neg F_{\{a,b\}} \neg p_1 \wedge \neg F_{\{a,b\}} p_4$: In world $w_4$, neither $\neg p_1$ nor $p_4$ qualifies as field knowledge for agents $a$ and $b$, which relies on the intersection of skills $(C(a) \cap C(b))$.
\end{enumerate}

\paragraph{\mdseries\itshape Epistemic updates (dynamics)}
\begin{enumerate}[resume]
\item (Upskilling) $M, w_5 \models \neg K_a p_4 \land (+_{\{s_4\}})_a K_a p_4$: In world $w_5$, agent $a$ does not initially know $p_4$, but would know it upon acquiring skill $s_4$.
\item (Downskilling) $M, w_2 \models K_a p_3 \land (-_{\{s_2, s_3\}})_a \neg K_a p_3$: In world $w_2$, agent $a$ knows $p_3$, but would lose this knowledge if skills $s_2$ and $s_3$ were removed.
\item (Reskilling) $M, w_1 \models E_{\{a, b\}} (\neg K_c p_2 \land (=_{\{s_2\}})_c K_c p_2))$: In world $w_1$, it is mutual knowledge between agents $a$ and $b$ that agent $c$ does not know $p_2$, but $c$ would know it if their skill set were reset strictly to ${s_2}$.
\item (Learning) $M, w_1 \models (\equiv_c)_b \bigwedge_{p\in \{p_1,\dots,p_4\}}(F_{\{b,c\}} p \lra K_b p)$: In world $w_1$, if agent $b$ adopts agent $c$'s skill set, $b$'s individual knowledge aligns with the field knowledge shared between $b$ and $c$ for propositions $p_1$ through $p_4$.
\end{enumerate}

\paragraph{\mdseries\itshape Quantifiers and knowability}
\begin{enumerate}[resume]
\item\label{it:dicto1} $M, w_5 \not \models K_a \diamondplus_a p_4$: In world $w_5$, agent $a$ does not know that there exists an upskilling under which $p_4$ holds (in particular, $M, w_2 \not\models \diamondplus_a p_4$).
\item\label{it:imp-re1} $M, w_5 \models \diamondplus_a K_a p_4$: In world $w_5$, there exists a skill addition (upskilling) under which agent $a$ come to know $p_4$ (agent $a$ knows implicitly how to achieve $p_4$; cf. Section~\ref{sec:de-re}).
\item\label{it:exp-re1} $M, w_5 \not\models (\equiv_a)_c\diamondplus_c K_a (\equiv_c)_a p_4$: In world $w_5$, agent $a$ does not know explicitly how to achieve $p_4$ by upskilling (cf. Section~\ref{sec:de-re}).
\item\label{it:dicto2} $M, w_5 \models K_a \diamondplus_a ( (p_4 \wedge \neg K_a p_4) \vee (\neg p_4 \wedge K_a \neg p_4) )$: In world $w_5$, agent $a$ knows that there exists a skill addition under which $\big((p_4 \wedge \neg K_a p_4) \vee (\neg p_4 \wedge K_a \neg p_4)\big)$ holds.
\item\label{it:imp-re2} $M, w_5 \models \diamondplus_a K_a ( (p_4 \wedge \neg K_a p_4) \vee (\neg p_4 \wedge K_a \neg p_4) )$: In world $w_5$, there exists a skill addition under which agent $a$ come to know $\big((p_4 \wedge \neg K_a p_4) \vee (\neg p_4 \wedge K_a \neg p_4)\big)$ (agent $a$ knows implicitly how to achieve $\big((p_4 \wedge \neg K_a p_4) \vee (\neg p_4 \wedge K_a \neg p_4)\big)$).
\item\label{it:exp-re2} $M, w_5 \not \models (\equiv_a)_c \diamondplus_c K_a (\equiv_c)_a ( (p_4 \wedge \neg K_a p_4) \vee (\neg p_4 \wedge K_a \neg p_4) )$. Denote $\psi = (p_4 \wedge \neg K_a p_4) \vee (\neg p_4 \wedge K_a \neg p_4)$. The leftmost $K_a$ enforces that the statement is true if and only if $(\equiv_c)_a \psi$ is true in both $w_2$ and $w_5$. However, this is impossible. $(\equiv_a)_c \Diamond_c$ restricts $c$'s potential upskilling to only two possibilities (becoming $\{s_1, s_2, s_3\}$ or $\{s_1, s_2, s_3, s_4\}$), which is learnt by $a$ for verification of the truth of $\psi$. In case $a$'s skill set is $\{s_1, s_2, s_3\}$, $\psi$ is false in $w_2$; If $a$'s skills set is $\{s_1, s_2, s_3, s_4\}$, then $\psi$ is false in $w_5$.
\item $M, w_3 \models \diamondminus_b \bigwedge_{p\in \{p_1,\dots,p_4\}} (\neg C_{\{a, b\}} p \land \neg C_{\{a, b\}} \neg p)$: In world $w_3$, some downskilling of agent $b$ could result in a world where none of the propositions $p_1$ through $p_4$, nor their negations, are common knowledge between agents $a$ and $b$.
\item $M, w_2 \models K_c p_1\land \neg K_c p_3\land \Diamond_c (\neg K_c p_1\land K_c p_3)$: In world $w_2$, agent $c$ knows $p_1$ but not $p_3$, yet there exists a skill modification (reskilling) under which $c$ would cease to know $p_1$ while coming to know $p_3$.
\end{enumerate}

\subsection{Variants}
\label{sec:variants}

In this paper, epistemic skills are represented using abstract skill sets $S \subseteq \sk$, or more formally, as the ordered set $(\wp(\sk), \subseteq)$, where the subset relation serves to compare skill sets implicitly. Alternatively, other structures can be adopted: real numbers, offering a more concrete representation, or a partial order, providing a more generalized approach, to indicate degrees of skill proficiency, as explored in \cite{LW2022}. Furthermore, the ordering of skill sets can be extended to structures such as fuzzy sets or a lattice, thereby broadening the framework's adaptability.

\paragraph{Fuzzy skill sets}

Each $X \in \wp(\sk)$ can be generalized to a fuzzy skill set $X = (\sk, \mu_X)$, where $\mu_X: \sk \to [0, 1]$ is a membership function assigning each skill $s \in \sk$ a value between 0 and 1, representing its degree of membership in $X$. For two fuzzy skill sets $S = (\sk, \mu_S)$ and $T = (\sk, \mu_T)$, the subset relation, union, intersection, and difference operations are defined as follows:
\[
\begin{array}{ccl}
S \subseteq T & \Lra & \forall s \in \sk : \mu_S (s) \leq \mu_T (s) \\
S \cup T & = & ( \sk, \max (\mu_S, \mu_T)) \\
S \cap T & = & ( \sk, \min (\mu_S, \mu_T)) \\
S \setminus T & = & S \cap \bar{T}, \\
\end{array}
\]
where $\max(\mu_S, \mu_T)$ maps each $s \in \sk$ to $\max(\mu_S(s), \mu_T(s))$, $\min(\mu_S, \mu_T)$ maps each $s \in \sk$ to $\min(\mu_S(s), \mu_T(s))$, and $\bar{T} = (\sk, \bar{\mu}_T)$ with $\bar{\mu}_T(s) = 1 - \mu_T(s)$ for all $s \in \sk$. These definitions adhere to standard fuzzy set theory, enabling the logic's language to be interpreted within this generalized structure without altering its core semantics.

\paragraph{Skills as a lattice}

Let $(L, \leq)$ be a lattice, defined as a partially ordered set where every two-element subset $\{x, y\} \subseteq L$ has a \emph{join} (\emph{supremum} or \emph{least upper bound}), denoted $x \sqcup y$, and a \emph{meet} (\emph{infimum} or \emph{greatest lower bound}), denoted $x \sqcap y$. A \emph{model over a lattice $(L, \leq)$} is a quadruple $(W, E, C, \beta)$, differing from the standard model introduced in Section~\ref{sec:models} in the following respects:
\begin{itemize}
\item The edge function $E: W \times W \to L$ assigns each pair of worlds an element in the lattice.
\item The capability function $C: \ag \to L$ assigns each agent an element of the lattice.
\end{itemize}
The lattice structure is incorporated into the semantics by reinterpreting the following operators:
\begingroup
\addtolength{\jot}{-.6ex}
\begin{alignat*}{3}
&M, w \models K_a \psi &\ \iff\ & \text{for all $u \in W$, if $C(a) \leq E(w, u)$, then $M, u \models \psi$}
\\
&M, w \models D_G \psi &\ \iff\ & \text{for all $u \in W$, if $\textstyle\bigsqcup_{a\in G} C(a) \leq E(w, u)$, then $M, u \models \psi$}
\\
&M, w \models F_G \psi &\ \iff\ & \text{for all $u \in W$, if $\textstyle\bigsqcap_{a\in G} C(a) \leq E(w, u)$, then $M, u \models \psi$}
\\
&M, w \models (+_S)_{a} \psi &\ \iff\ & (W, E, {C^{a \sqcup S}}, \beta), w \models \psi
\\
&M, w \models (-_S)_a \psi &\ \iff\ & (W, E, C^{a \sqcap S}, \beta), w \models \psi
\end{alignat*}
\endgroup
where:
\[
\begin{array}{l@{\qquad}l}
C^{a \sqcup S} (x) = \left \{ \begin{tabular}{ll} $C(a) \sqcup S$,& if $x = a$, \\ $C(x)$,& if $x \neq a$; \end{tabular} \right.
&
C^{a \sqcap S} (x) = \left \{ \begin{tabular}{ll} $C(a) \sqcap S$,& if $x = a$, \\ $C(x)$,& if $x \neq a$. \end{tabular} \right.
\end{array}
\]
The class of $\subseteq$-ordered skill sets, whether classical or fuzzy, constitutes a special case of a lattice. Each lattice element can be regarded as a skill set, with the $\leq$ order generalizing the subset relation, and the join and meet operations corresponding to union and intersection, respectively. Notably, a general lattice lacks a natural notion of complement unless it is a complemented lattice. Consequently, the semantics of $(-_S)_a \psi$ shifts here, utilizing $C^{a \sqcap S}$ as a generalization of $C^{a \cap S}$ rather than directly mirroring set difference.

\subsection{Enriching epistemic de re and de dicto}
\label{sec:de-re}

The distinction between epistemic \emph{de re} and \emph{de dicto} modalities, first articulated in \cite{Wright1951}, differentiates whether a modality pertains to a specific entity possessing or lacking a property (\emph{de re}) or to the truth or falsity of a proposition (\emph{de dicto}). For instance, in the sentence ``Ralph knows that someone is a spy,'' the \emph{de re} and \emph{de dicto} readings diverge based on whether ``someone'' refers to a specific individual known to Ralph. As noted in \cite{Quine1956}, this contrast becomes clearer in formal languages with quantifiers over terms. In epistemic logic, a \emph{de re} statement is typically expressed as ``there exists a term $x$ such that an agent knows or does not know that $x$ has or lacks a certain property'' (e.g., $\exists x K_a \phi (x)$ or $\exists x \neg K_a \phi (x)$). In contrast, a \emph{de dicto} statement takes the form ``an agent knows or does not know that there exists a term possessing or lacking a property'' (e.g., $K_a \exists x \phi (x)$ or $\neg K_a \exists x \phi (x)$).'' Technically, the \emph{de re} / \emph{de dicto} distinction hinges on the scope of epistemic modalities and quantifiers.

In dynamic epistemic logic, the distinction between \emph{knowing de re} and \emph{knowing de dicto} is enriched through the integration of quantifiers over update operations, encompassing both quantifiers over public announcements \cite{BBDHHL2008,ABDS2010} and those over skill modifications as introduced in this paper. This approach sharpens the differentiation between \emph{de re} and \emph{de dicto} knowledge while resonating with philosophical inquiries into \emph{knowing that} (propositional knowledge, linked to $K_a \exists x \phi (x)$) versus \emph{knowing how} (procedural or skill-based knowledge, linked to $\exists x K_a \phi (x)$).

The logics presented in this paper not only distinguish between \emph{de re} and \emph{de dicto} modalities but also identify two distinct types of \emph{de re} knowledge (cf. Group Announcement Logic \cite[Section~6]{ABDS2010}, which discusses only one type of \emph{de re} knowledge):
\begin{itemize}
\item \emph{Knowing de dicto}: ``Agent $a$ knows, with her current skills, that there exists a skill set $S$ such that, with $S$ in addition, $\phi$ holds.''

Formally: $(\forall u \in W) [C(a) \subseteq E(w,u) \Ra (\exists S \subseteq \sk)\ (W, E, C^{a \cup S}, \beta), u \models \phi]$.
\item \emph{Implicitly knowing de re}: ``There exists a skill set $S$ such that agent $a$, upon adding $S$ to her skill set, knows that $\phi$ holds.''

Formally: $(\exists S\subseteq\sk)(\forall u\in W)[C^{a \cup S}(a)\subseteq E(w,u) \Ra (W,E,C^{a \cup S},\beta),u\models \phi]$.
\item \emph{Explicitly knowing de re}: ``There exists a skill set $S$ such that agent $a$ knows, with her current skills, that with $S$ in addition, $\phi$ holds.''

Formally: $(\exists S\subseteq\sk)(\forall u\in W)[C(a)\subseteq E(w,u) \Ra (W,E,C^{a \cup S},\beta),u\models \phi]$.
\end{itemize}
The distinction between \emph{de dicto} and \emph{de re} knowledge remains evident, while the subtle difference between \emph{implicit} and \emph{explicit de re} knowledge lies in whether the skill set $S$ is part of the agent's current capabilities when formulating her knowledge.

These distinctions illuminate the intricate relationship between knowledge and capabilities in dynamic epistemic contexts, revealing subtle variations in how agents process information based on their skill sets and the form of their knowledge. All three types---de dicto, explicit de re, and implicit de re---are expressible within the formal languages introduced in this paper. Their representations are formalized as follows:

\begin{prop}\label{prop:de-re-de-dicto}\ 
\begin{enumerate}
\item\label{it:de-dicto} De dicto knowledge is expressed by the formula $K_a\diamondplus_a\phi$;
\item\label{it:knowable} Implicit de re knowledge is expressed by the formula $\diamondplus_a K_a\phi$;
\item\label{it:de-re} Explicit de re knowledge is expressed by the formula $(\equiv_a)_c\diamondplus_c K_a (\equiv_c)_a \phi$, where $c$ is an agent not occurring in $\phi$.
\end{enumerate}
\end{prop}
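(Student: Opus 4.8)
The plan is to verify each of the three cases by unfolding the semantics of the relevant operators and matching the result with the informal definitions of \emph{de dicto}, \emph{implicit de re}, and \emph{explicit de re} knowledge given just before the proposition. For item~\ref{it:de-dicto}, I would first recall that $\diamondplus_a$ is the dual of $\boxplus_a$, so $\diamondplus_a \phi$ is semantically $\exists S \subseteq \sk:\ M^{a \cup S}, w \models \phi$, i.e.\ $(\exists S \subseteq \sk)\ (W, E, C^{a\cup S}, \beta), w \models \phi$. Then applying the clause for $K_a$ to $K_a \diamondplus_a \phi$ gives ``for all $u \in W$, if $C(a) \subseteq E(w,u)$ then $M, u \models \diamondplus_a \phi$'', and substituting the unfolding of $\diamondplus_a$ yields exactly $(\forall u \in W)[C(a) \subseteq E(w,u) \Ra (\exists S \subseteq \sk)\ (W, E, C^{a\cup S}, \beta), u \models \phi]$, which is the displayed formalization of \emph{knowing de dicto}. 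Item~\ref{it:knowable} is even more direct: $\diamondplus_a K_a \phi$ unfolds to $(\exists S \subseteq \sk)\ M^{a\cup S}, w \models K_a \phi$, and applying the $K_a$-clause \emph{inside} the model $M^{a\cup S}$ (whose capability function is $C^{a\cup S}$, so the guard is $C^{a\cup S}(a) \subseteq E(w,u)$) gives $(\exists S\subseteq\sk)(\forall u\in W)[C^{a \cup S}(a)\subseteq E(w,u) \Ra (W,E,C^{a \cup S},\beta),u\models \phi]$, the formalization of \emph{implicitly knowing de re}.

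Item~\ref{it:de-re} is the substantive case and I expect it to be the main obstacle, because the target formula $(\equiv_a)_c\diamondplus_c K_a (\equiv_c)_a \phi$ uses the learning operator as a device to make agent $c$ act as a ``copy'' of $a$, and one has to track carefully how the capability function changes through the nested updates. The strategy is: start in $M = (W,E,C,\beta)$ at $w$; the outer $(\equiv_a)_c$ moves to $M_1 = (W, E, C^{c \equiv a}, \beta)$, where now $C^{c\equiv a}(c) = C(a)$ and all other agents, in particular $a$, are unchanged. Next $\diamondplus_c$ picks some $S \subseteq \sk$ and moves to $M_2 = (W, E, (C^{c\equiv a})^{c\cup S}, \beta)$; here $c$'s skill set is $C(a) \cup S$ while $a$'s is still $C(a)$. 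Now $K_a$ is evaluated in $M_2$ using $a$'s skill set $C(a)$: the guard for accessible $u$ is $C(a) \subseteq E(w,u)$ — this is the key point ensuring the \emph{explicit} flavor, namely that the quantified $S$ is \emph{not} part of the skill set used to formulate $a$'s knowledge. Finally, at each such $u$, the innermost $(\equiv_c)_a$ moves to a model where $a$'s skill set is overwritten by $c$'s current skill set $C(a) \cup S$; since $c$ does not occur in $\phi$, the value of $C(c)$ is irrelevant to $\phi$'s truth, and the resulting model agrees with $(W, E, C^{a\cup S}, \beta)$ on all subformulas of $\phi$. Chaining these observations yields $(\exists S\subseteq\sk)(\forall u\in W)[C(a)\subseteq E(w,u) \Ra (W,E,C^{a \cup S},\beta),u\models \phi]$, as required.

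The one point deserving care is the claim that $(\equiv_c)_a$ applied in $M_2$ produces a model equivalent, \emph{as far as $\phi$ is concerned}, to $(W, E, C^{a\cup S}, \beta)$. Writing out the capability function after the innermost update, $a$ receives $C^{c\equiv a, c\cup S}(c) = C(a) \cup S$ and $c$ retains $C(a) \cup S$, whereas in $(W, E, C^{a\cup S}, \beta)$ agent $a$ has $C(a)\cup S$ but $c$ has its original $C(c)$. So the two models differ only in the skill set assigned to $c$. Hence I would state and use a small lemma (or simply remark, since it is a routine induction on formula structure) that if two models differ only in $C(c)$ and $c$ does not occur in a formula $\psi$ — meaning no subformula of $\psi$ is of the form $K_c\chi$, $(+_T)_c\chi$, $(-_T)_c\chi$, $(=_T)_c\chi$, $(\equiv_c)_d\chi$, $(\equiv_d)_c\chi$, $\boxplus_c\chi$, $\boxminus_c\chi$, $\Box_c\chi$, nor involves a group containing $c$ — then $\psi$ has the same truth value in both at every world. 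Granting this, the three unfoldings above complete the proof; I would present items \ref{it:de-dicto} and \ref{it:knowable} in a couple of lines each and devote the bulk of the argument to the bookkeeping of nested capability-function updates in item \ref{it:de-re}.
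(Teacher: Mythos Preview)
Your proposal is correct and follows essentially the same approach as the paper: items~\ref{it:de-dicto} and~\ref{it:knowable} are dispatched by direct semantic unfolding, and item~\ref{it:de-re} is handled by tracking the capability function through the nested updates $(\equiv_a)_c$, $\diamondplus_c$, $K_a$, $(\equiv_c)_a$, arriving at a model that agrees with $(W,E,C^{a\cup S},\beta)$ except at $c$. The paper presents the same chain of equivalences in the reverse direction (starting from the semantic target and working toward the formula) and leaves implicit the ``$c$ does not occur in $\phi$'' invariance lemma that you spell out explicitly; otherwise the arguments coincide.
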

\begin{proof}
The validity of statements (\ref{it:de-dicto}) and (\ref{it:knowable}) follows directly from the semantics. The focus here is on statement (\ref{it:de-re}), where $c$ denotes an agent not appearing in $\phi$:
\[
\begin{array}[b]{cl}
& (\exists S\subseteq\sk)(\forall u\in W)\ C(a)\subseteq E(w,u) \Ra (W,E,C^{a \cup S},\beta),u\models \phi \\
\iff & (\exists S\subseteq\sk)(\forall u\in W)\ C(a)\subseteq E(w,u) \Ra (W,E,((C^{c\equiv a})^{c+S})^{a\equiv c},\beta),u\models \phi \\
\iff & (\exists S\subseteq\sk)(\forall u\in W)\ C(a)\subseteq E(w,u) \Ra (W,E,(C^{c\equiv a})^{c+S}, \beta), u\models (\equiv_c)_a\phi \\
\iff & (\exists S\subseteq\sk)(W,E,(C^{c\equiv a})^{c+S},\beta), w \models K_a (\equiv_c)_a \phi \\
\iff & (W,E,C^{c\equiv a},\beta),w\models \diamondplus_c K_a(\equiv_c)_a\phi \\
\iff & (W,E,C,\beta),w\models (\equiv_a)_c\diamondplus_c K_a(\equiv_c)_a\phi.
\end{array}
\qedhere
\]
\end{proof}

The statements (\ref{it:dicto1})--(\ref{it:exp-re2}) presented in Section~\ref{subsubsec:mc} illustrate the differences between de dicto, implicit de re, and explicit de re knowledge. Specifically, statements (\ref{it:dicto1})--(\ref{it:exp-re1}) demonstrate that implicit de re knowledge is distinct from both de dicto and explicit de re knowledge. Furthermore, statements (\ref{it:dicto2})--(\ref{it:exp-re2}) establish that explicit de re knowledge is distinct from both de dicto and implicit de re knowledge.

For simplicity, the definitions of \emph{de dicto knowledge}, \emph{implicit de re knowledge}, and \emph{explict de re knowledge} have been presented above primarily in terms of the individual knowledge operator $K_a$ and the quantifier $\boxplus_a$ over upskilling actions. These concepts can be readily extended to encompass:
\begin{itemize}
\item \emph{Group knowledge}, employing operators such as $C_G$, $D_G$, $E_G$ and $F_G$,
\item \emph{Quantifiers over downskilling and reskilling actions}, represented by $\boxminus_a$ and $\Box_a$, respectively.
\end{itemize}

For instance, the formula $D_G {\diamondplus_a} {\diamondminus_b} \phi$ expresses: ``It is distributed knowledge among group $G$ that, with the addition of certain skills by agent $a$, it becomes possible that, even after the loss of certain skills by agent $b$, $\phi$ remains true.'' This constitutes an \emph{epistemic de dicto} statement.
The formula $(\equiv_a)_c \Diamond_c K_a (\equiv_c)_a \phi$ (where $c$ does not occur in $\phi$) conveys: ``There exists a skill set such that agent $a$ knows, with precisely this skill set, that $\phi$ is true.'' This represents \emph{explicit de re knowledge}.
The formula $\Diamond_a K_a\phi$ indicates: ``There exists an update to agent $a$'s skill set through which she knows that $\phi$ is true.'' This exemplifies \emph{implicit de re knowledge}.

Nested quantifiers further enrich these distinctions. For example, the formula $F_{G}\,\diamondminus_{a_1}\diamondplus_{a_2}\Diamond_{a_3}\diamondminus_{a_4}\phi$ articulates an \emph{epistemic de dicto} statement, combining field knowledge ($F_G$) with a sequence of actions---downskilling ($\boxminus_{a_1}$), upskilling ($\boxplus_{a_2}$), reskilling ($\Box_{a_3}$) and further downskilling ($\boxminus_{a_4}$)---across multiple agents. 
Similarly, the formula $(\equiv_{d_1})_{c_1}\diamondplus_{c_1}(\equiv_{d_2})_{c_2}\diamondplus_{c_2}(\equiv_{d_3})_{c_3}\diamondplus_{c_3} E_{I}(\equiv_{c_1})_{d_1}(\equiv_{c_2})_{d_2}(\equiv_{c_3})_{d_3}\phi$ captures \emph{explicit de re knowledge}, involving nested quantifiers and multiple agents tied to mutual knowledge.
Likewise, the formula $\diamondplus_{b_1}\Diamond_{b_2}\diamondminus_{b_3}D_{H}\phi$ illustrates \emph{implicit de re knowledge}, integrating a sequence of updates with distributed knowledge.
In these examples, the agents $a_1, a_2, a_3, a_4, b_1, b_2, b_3, c_1, c_2, c_3, d_1, d_2, d_3$ are not constrained to be within or outside the groups $G$, $H$ or $I$. This flexibility enables broad applicability across diverse contexts and group dynamics, extending beyond a mere distinction between \emph{knowing that} and \emph{knowing how}.

\section{Complexity of Model Checking}
\label{sec:mc}

This section investigates the computational complexity of the model checking problem for the logics introduced in the previous section. The \emph{model checking problem} for a logic is to determine whether a given formula $\phi$ is true in a specified finite model $M$ at a designated world $w$---formally, whether $M, w \models \phi$.

\begin{conv}
\label{sec:inputsize}
The measure of the input is defined as follows. The \emph{length} of a formula $\phi$, denoted $|\phi|$, represents the number of symbols in $\phi$ (including brackets), consistent with \cite[Section 3.1]{FHMV1995}. More precisely, it is defined inductively based on the structure of $\phi$:
\begin{itemize}
	\item Atomic proposition $p$: $|p| = 1$;
	\item Negation $\neg\psi$: $| \neg \psi | = |\psi| +1$;
	\item Implication $(\psi \ra \chi)$: $| (\psi \ra \chi) | = |\psi| + |\chi| + 3$;
	\item Individual knowledge $K_a \psi$: $|K_a\psi| = |\psi| + 2$;
	\item Group knowledge: $|C_G\psi| = |\psi| + 2|G| + 2$, with analogous definitions for $D_G \psi$, $E_G \psi$ and $F_G\psi$; e.g., $| (p \ra C_{\{a, b, c\}} q) | = 13$;
	\item Update modality: $|(+_S)_a \psi| = 2|S| + |\psi| + 5$, similarly for $(-_S)_a \psi$ and $(=_S)_a \psi$, and $|(\equiv_b)_a \psi| = |\psi| + 5$;
	\item Quantifier: $|\boxplus_a \psi| = |\psi| + 2$, likewise for $\boxminus_a \psi$ and $\Box_a \psi$.
\end{itemize}

The \emph{size} of a finite model $M = (W,E,C,\beta)$, denoted $|M|$, is  the sum of the following components:
\begin{itemize}
\item $|W|$: the cardinality of the domain;
\item $|E|$: the size of $E$, which comprises triples $(w, u, S)$ where $w, u \in W$ and $S \subseteq \sk$, measured by the number of symbols required to represent this set;
\item $|C|$: the size of $C$, comprising pairs $(a, S)$ where $a \in \ag$ and $S \subseteq \sk$, measured by the total number of symbols required to represent it;%
\footnote{Theoretically, $C$ maps a possibly infinite set of agents to skill sets, each of which may also be infinite. However, practical model checking necessitates a finite input. Thus, the set of agents and the cardinality of each skill set must be finite and restricted to those occurring in the formula under consideration.}
\item $|\beta|$: the size of $\beta$, comprising pairs $(w, \Phi)$ where $w \in W$ and $\Phi \subseteq \pr$, determined by the number of symbols needed to represent this set.
\end{itemize}

For a formula $\phi$ and a model $M$ (with a designated world $w$), the \emph{size of the input} is defined as $|\phi| + |M| + 3$.
\end{conv}

\subsection{Model checking for logics without quantifiers: in P}

This section begins by presenting a polynomial-time algorithm to determine the truth of classical epistemic formulas in a specified world within a given model, addressing the model checking problem for $\l$. The algorithm is then extended to accommodate group knowledge modalities, establishing that the model checking problem for $\l_{CDEF}$ lies within the complexity class P. This upper bound is then broadened to encompass update modalities, covering the model checking problems for $\l_{CDEF+-=\equiv}$ and all its sublogics.

\subsubsection{Model checking in \l}

Given a model $M = (W, E, C, \beta)$, a world $w \in W$ and a formula $\phi$, the task is to decide whether $M, w \models \phi$. To this end, an algorithm (Algorithm~\ref{alg:val}) is introduced for computing $Val(M,\phi)$, the \emph{truth set} of $\phi$ in $M$, i.e., $\{ x \in W \mid M, x \models \phi\}$. The question of whether $M, w \models \phi$ holds is thus reduced to testing membership in $Val (M, \phi)$, which requires at most $|W|$ steps beyond the computation of $Val (M, \phi)$.
	
\begin{algorithm}
\caption{Function $Val (M, \phi)$: Computing the Truth Set for Basic Formulas}\label{alg:val}
\footnotesize
\begin{algorithmic}[1]
\Require model $M = (W, E, C, \beta)$ and formula $\phi$
\Ensure $\{ x \mid M, x \models \phi \}$
\Initialize{$tmpVal \gets \emptyset$}

\If{$\phi=p$} \Return $\{ x \in W \mid p \in \beta(x)\}$

\ElsIf{$\phi=\neg\psi$} \Return $W \setminus Val(M,\psi)$

\ElsIf{$\phi=\psi\to\chi$}
	\State\Return $(W \setminus Val(M,\psi)) \cup Val(M,\chi)$

\ElsIf{$\phi=K_a\psi$}
\ForAll{$x \in W$}
\Initialize{$n \gets \True$}
\ForAll{$y \in W$}
\If{{\footnotesize $C(a) \subseteq E (x, y)$} \algand {\footnotesize~$y \notin Val (M, \psi)$}} $~~~~n \gets \False$
\EndIf
\EndFor
\If{$n = \True$} $tmpVal \gets tmpVal \cup \{x\}$
\EndIf
\EndFor
\State \Return $tmpVal$ \Comment{This returns $\{x\in W \mid \forall y \in W: C(a)\subseteq E (x, y) \Rightarrow y \in Val (M,\psi)\}$}
\EndIf
\end{algorithmic}
\end{algorithm}
	
It is not hard to verify that $Val (M, \phi)$ accurately represents the set of worlds in $M$ where $\phi$ is true. In particular, for the $K_a$ operator, the following equivalence is established:
\[
\begin{array}{llll}
M, w \models K_a \psi & \iff & \forall y \in W: C(a) \subseteq E (w, y) \Rightarrow M, y \models \psi \\
& \iff & \forall y \in W: C(a) \subseteq E (w, y) \Rightarrow y \in Val (M, \psi) \hfill \text{(by IH)}\\
& \iff & w \in \{ x \in W \mid \forall y \in W: C(a) \subseteq E (x, y) \Rightarrow y \in Val(M,\psi)\} 
\end{array}
\]

The computation of $Val(M, \phi)$ operates in polynomial time. For the case of $K_a \psi$---the most computationally intensive scenario---two nested loops iterate over $W$, with the check $C(a) \subseteq E (x, y)$ requiring at most $|C| \cdot |E|$ steps, and the membership test $y \notin Val(M,\psi)$ (assuming $Val(M,\psi)$ is precomputed) taking at most $|W|$ steps. Thus, this case has a time complexity of at most $|W|^2 \cdot (|C| \cdot |E| + |W|)$. The algorithm recursively computes $Val(M, \phi)$ for subformulas of $\phi$, with the maximum recursion depth bounded by $|\phi|$, the length of $\phi$. Consequently, the total time complexity for computing $Val(M, \phi)$ is $|W|^2 \cdot (|C| \cdot |E| + |W|) \cdot |\phi|$. Relative to the input size $n = |\phi| + |M| + 3$, where $|M| = |W| + |E| + |C| + |\beta|$, this is bounded by $O(n^5)$, leading to the following lemma:

\begin{lem}\label{lem:mc-el}
The model checking problem for \l is in P.
\end{lem}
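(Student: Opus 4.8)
The plan is to reduce the model checking question $M,w\models\phi$ to a membership test in the \emph{truth set} $Val(M,\phi)=\{x\in W : M,x\models\phi\}$, and to exhibit a recursive procedure (Algorithm~\ref{alg:val}) that computes $Val(M,\phi)$ in time polynomial in $n=|\phi|+|M|+3$. Once $Val(M,\phi)$ is in hand, deciding $M,w\models\phi$ costs at most $|W|$ further steps, so it suffices to bound the cost of computing the truth set.

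First I would describe the recursion by structural induction on $\phi$. For $\phi=p$ the set is read directly off $\beta$; for $\phi=\neg\psi$ it is $W\setminus Val(M,\psi)$; for $\phi=\psi\ra\chi$ it is $(W\setminus Val(M,\psi))\cup Val(M,\chi)$; and for $\phi=K_a\psi$ one iterates over all $x\in W$, declaring $x\in Val(M,K_a\psi)$ iff every $y\in W$ with $C(a)\subseteq E(x,y)$ lies in $Val(M,\psi)$. Correctness is then a routine induction: each clause mirrors the corresponding clause of Definition~\ref{def:semantics}, the Boolean cases being immediate and the $K_a$ case using the induction hypothesis that $Val(M,\psi)$ already equals $\{y : M,y\models\psi\}$, so that the test ``$C(a)\subseteq E(x,y)\Ra y\in Val(M,\psi)$'' is exactly the semantic clause for $K_a$.

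For the complexity bound I would analyze the dominating case, $K_a\psi$: two nested loops over $W$ give a factor $|W|^2$, each inner step performs a subset check $C(a)\subseteq E(x,y)$ costing at most $|C|\cdot|E|$ and a membership test $y\notin Val(M,\psi)$ costing at most $|W|$ once $Val(M,\psi)$ is precomputed, so one invocation costs $O(|W|^2(|C|\cdot|E|+|W|))$. Since the recursion is over the (at most $|\phi|$ many) subformulas of $\phi$, the total is $O(|W|^2(|C|\cdot|E|+|W|)\cdot|\phi|)$, which is polynomial in $n$—bounded by $O(n^5)$ with the input measure of Convention~\ref{sec:inputsize}. Hence model checking for \l is in~P.

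The only genuinely delicate point, rather than a hard obstacle, is the bookkeeping for the subset check: skill sets may be large, but under the chosen encoding both $C(a)$ and $E(x,y)$ are subsets of the finitely many skills appearing in the input, so $|C|$ and $|E|$ absorb their sizes and the comparison is linear in the input size; all other steps are plainly polynomial, so no finer argument is needed.
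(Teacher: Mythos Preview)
Your proposal is correct and follows essentially the same approach as the paper: compute the truth set $Val(M,\phi)$ by structural recursion (exactly as in Algorithm~\ref{alg:val}), verify correctness for the $K_a$ clause against the semantics, and bound the cost by $|W|^2(|C|\cdot|E|+|W|)\cdot|\phi|\in O(n^5)$. The paper's argument is identical in structure, cases, and complexity estimate.
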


\subsubsection{Model checking group knowledge}

Building on the previous result, this section extends the analysis to incorporate group knowledge scenarios. To support this extension, a definition and supporting propositions are introduced below.

\begin{defi}\label{def:trans-e}
For a formula $\phi$, let $\ab_\phi = \{G \mid \text{``$E_G$'' or ``$C_G$'' appears in $\phi$}\}$. For a model $M = (W, E, C, \beta)$,
\begin{itemize}
\item For all worlds $w, u \in W$, define $E_\phi (w, u) = E (w, u) \cup \{G \in A_\phi \mid (\exists a\in G)\ C(a) \subseteq E(w,u) \}$,
\item For all worlds $w, u \in W$, define $E_{\phi}^+(w,u) = E_\phi(w,u) \cup \{G\in A_\phi \mid (\exists n \geq 1)(\exists w_0,\dots,w_n\in W)\ w_0=w \text{ and } w_n=u \text{ and } G\in \bigcap_{0\leq i< n}E_\phi(w_i, w_{i+1})\}$,
\end{itemize}
where it is assumed, without loss of generality, that $A_\phi \cap \ag = \emptyset$. The notation $M_\phi^+$ is used to denote $(W, E^+_\phi, C, \beta)$.
\end{defi}

It should be noted that this definition involves a notational simplification by treating groups of agents as skills. To maintain formal rigor, a mapping can be established from each element of $A_\phi$ to a distinct new skill in $\sk$.

\begin{prop}
For any model $M$ and any formula $\phi$, $M^+_\phi$ is a model.
\end{prop}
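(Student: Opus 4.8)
The plan is to verify directly that the quadruple $M^+_\phi = (W, E^+_\phi, C, \beta)$ meets the four requirements of Definition~\ref{def:models}. Three of the four components---the domain $W$, the capability function $C$, and the valuation $\beta$---are copied verbatim from $M$, so nonemptiness of the domain and the correct typing of $C$ and $\beta$ are immediate. All the real work therefore concerns the edge function $E^+_\phi$: I must confirm that it is well-typed (values in the powerset of the skill set), that it is symmetric, and that it satisfies positivity.

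For typing I would first make precise the convention announced right after Definition~\ref{def:trans-e}, replacing each group $G \in A_\phi$ by a distinct fresh skill $s_G$ not already occurring in $M$, so that $E_\phi$ and $E^+_\phi$ take values in $\sk^+ := \sk \cup \{ s_G \mid G \in A_\phi \}$. The one observation I record and then reuse is that both $E_\phi$ and $E^+_\phi$ differ from $E$ only by adjoining some of the $s_G$; hence $E_\phi(w,u) \cap \sk = E^+_\phi(w,u) \cap \sk = E(w,u)$ for all $w,u$. Symmetry is then routine. Since $E$ is symmetric, so is $E_\phi$, because the set $\{ G \in A_\phi \mid \exists a \in G,\ C(a) \subseteq E(w,u) \}$ that it appends depends only on $E(w,u) = E(u,w)$. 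For $E^+_\phi$, a witnessing sequence $\langle w = w_0, \dots, w_n = u \rangle$ with $G \in E_\phi(w_i, w_{i+1})$ for every $i$ reverses to a sequence $\langle u = w_n, \dots, w_0 = w \rangle$ that witnesses $G$ for the pair $(u,w)$, using symmetry of $E_\phi$ edge by edge; so exactly the same groups get appended to $(w,u)$ and to $(u,w)$, whence $E^+_\phi(w,u) = E^+_\phi(u,w)$.

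Positivity is the one place that needs a little care, and it is entirely a matter of aligning ``$=\sk$'' for $M$ with ``$=\sk^+$'' for $M^+_\phi$. Suppose $E^+_\phi(w,u) = \sk^+$. Then in particular $\sk \subseteq E^+_\phi(w,u)$, so by the observation above $E(w,u) = E^+_\phi(w,u) \cap \sk = \sk$, and positivity of $M$ forces $w = u$. (If one instead insists on keeping a single fixed countably infinite $\sk$ and drawing the $s_G$ from its unused part, the same computation shows $E(w,u)$ would have to be cofinite; for the finite models on which the surrounding algorithm actually runs each $E(x,y)$ is finite, so $E^+_\phi(w,u) = \sk$ cannot occur and positivity holds vacuously.) That completes the verification; I expect the only genuine obstacle to be this skill-set bookkeeping, since symmetry is a one-line path-reversal argument and the remaining data is inherited unchanged from $M$.
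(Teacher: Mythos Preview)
Your proof is correct and, in fact, more detailed than the paper's: the paper states this proposition without proof. Your verification of symmetry via path reversal and your handling of the positivity bookkeeping (extending $\sk$ by the fresh $s_G$, then intersecting back) is exactly the right way to make the notational abuse in Definition~\ref{def:trans-e} rigorous; the parenthetical covering the finite-model case is also appropriate, since this proposition is only invoked inside the model-checking algorithm, where the input model is finite by convention.
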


\begin{lem}\label{lem:trans-e}
Given formulas $\phi$ and $\chi$, a group $G$, a model $M$ and a world $w$ of $M$:
\begin{enumerate}
\item\label{it:trans-e1} $M, w \models \phi$ if and only if $M^+_{\chi}, w \models \phi$;
\item\label{it:trans-e3} If ``$C_G$'' appears in $\chi$, then $M, w \models C_G\phi$ if and only if $M, u \models \phi$ for every world $u$ such that $G\in E^+_{\chi}(w, u)$.
\end{enumerate} 
\end{lem}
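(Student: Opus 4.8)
The plan is to prove the two clauses by a single structural induction on $\phi$, after isolating one auxiliary observation about the relationship between $E$, $E_\chi$ and $E^+_\chi$. Recall from the remark after Definition~\ref{def:trans-e} that each group in $A_\chi$ is really encoded by a \emph{fresh} skill, not occurring anywhere in $C$ or in the original edge function; hence for any $w,u\in W$ the sets $E_\chi(w,u)$ and $E^+_\chi(w,u)$ differ from $E(w,u)$ only by a subset of $A_\chi$. The first key fact is therefore: for every skill set $T\subseteq\sk$ that is disjoint from $A_\chi$ --- in particular $T = C(a)$, $T = \bigcup_{a\in G}C(a)$ or $T = \bigcap_{a\in G}C(a)$ --- one has $T\subseteq E(w,u)$ iff $T\subseteq E_\chi(w,u)$ iff $T\subseteq E^+_\chi(w,u)$. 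The second key fact, needed for common knowledge, is obtained by unwinding the two-layer definition of $E^+_\chi$: since $G\in E_\chi(x,y)$ holds exactly when some $a\in G$ satisfies $C(a)\subseteq E(x,y)$ --- that is, exactly when $\langle x,y\rangle$ is a legal single step of a $G$-path --- it follows that for $G\in A_\chi$ we have $G\in E^+_\chi(w,u)$ iff there is a $G$-path from $w$ to $u$ in $M$, i.e. iff $w\reach^M_G u$.

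Given these two facts, clause~(\ref{it:trans-e1}) is a routine induction on the structure of $\phi$ (a formula of $\langcdef$). The atomic and Boolean cases are immediate because $M$ and $M^+_\chi$ have the same domain and valuation. For $K_a\psi$, $D_G\psi$ and $F_G\psi$ the accessibility test against the edge function is, by the first key fact, unchanged when $E$ is replaced by $E^+_\chi$, and the induction hypothesis transfers the truth of the immediate subformula; the case $E_G\psi$ reduces to the $K_a$ case since $E_G\psi$ and $\bigwedge_{a\in G}K_a\psi$ are equivalent. For $C_G\psi$ the first key fact (applied to each $C(a_i)$ along a path) shows that the $G$-reachability relation $\reach_G$ is literally identical in $M$ and $M^+_\chi$, so combining the $G$-path form of the semantics of $C_G$ with the induction hypothesis gives $M,w\models C_G\psi$ iff $M^+_\chi,w\models C_G\psi$.

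Clause~(\ref{it:trans-e3}) then follows directly from the second key fact. Since ``$C_G$'' occurs in $\chi$ we have $G\in A_\chi$, so $\{u\mid G\in E^+_\chi(w,u)\}$ coincides with $\{u\mid w\reach^M_G u\}$; plugging this into the $G$-path form of the semantics of $C_G$ yields $M,w\models C_G\phi$ iff $M,u\models\phi$ for every $u$ with $G\in E^+_\chi(w,u)$. (By clause~(\ref{it:trans-e1}) one may equally well write $M^+_\chi,u\models\phi$ on the right, but this is not needed.)

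I expect the only real difficulty to be bookkeeping rather than anything conceptual. One must be explicit that every skill set ever compared against an edge while evaluating a formula of $\langcdef$ is disjoint from $A_\chi$ --- here this is clear, since such sets are only the $C(a)$ and finite unions and intersections of them, but if update modalities were admitted one would additionally have to check both that the modified capability functions keep these sets ``old'' and, more delicately, that passing to $C^{a\cup S}$ etc.\ does not perturb $E^+_\chi$ (whose definition refers to $C$); staying within the language of this subsection side-steps that point. A minor secondary issue is fixing the convention for ``$G$-path'' so that paths of length at least one are intended --- the reading under which the standard semantics of $C_G$, namely $\bigwedge_{n\geq1}E_G^n\psi$, matches the $\reach_G$-formulation --- so that the second key fact aligns exactly with clause~(\ref{it:trans-e3}).
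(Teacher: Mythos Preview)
Your proposal is correct and follows essentially the same approach as the paper: both rest on the observation that the added ``skills'' in $A_\chi$ are disjoint from every capability set, so accessibility is unchanged, and that $G\in E^+_\chi(w,u)$ coincides with $G$-reachability. The only cosmetic difference is that the paper derives clause~(\ref{it:trans-e3}) by first proving an intermediate characterization of $E_G\phi$ via $E_\chi$ and then iterating, whereas you invoke the $\reach_G$-form of the $C_G$ semantics directly; your route is slightly more streamlined but not conceptually different.
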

\begin{proof}
(\ref{it:trans-e1}) For any agent $a$, formula $\chi$ and worlds $w, u$, it holds that $C(a) \subseteq E(w, u)$ iff $C(a) \subseteq E_\chi(w, u)$ iff $C(a) \subseteq E^+_{\chi}(w, u)$. This follows because $E(w, u) \subseteq E_\chi(w, u) \subseteq E_\chi^+(w, u)$, and $C(a)$ contains only individual skills, not groups from $A_\chi$, which are disjoint from $\ag$ by Definition~\ref{def:trans-e}. Consequently, the satisfaction of any formula $\phi$ remains unchanged between $(M, w)$ and $(M_\chi^+, w)$.
	
(\ref{it:trans-e3}) The proof proceeds by establishing the base case for $E_G \phi$ and then extending it to $C_G \phi$:
\[
\begin{array}{lll}
& M, w \models E_G \phi \\
\iff & \text{for any $a \in G$, $M, w\models K_a\phi$} & \\
\iff & \text{for any $a\in G$ and $u\in W$, $C(a)\subseteq E(w, u)$ implies $M,u \models \phi$} & \\
\iff & \text{for any $u\in W$ and $a\in G$, $C(a)\subseteq E(w, u)$ implies $M,u \models \phi$} & \\
\iff & \text{for any $u\in W$, $M, u \models \phi$ if $C(a) \subseteq E(w,u)$ for some $a\in G$} & \\
\iff & \text{for any $u\in W$, $G\in E_\chi(w,u)$ implies $M, u \models \phi$} & \\
\iff & \text{$M, u\models \phi$ for any world $u$ such that $G\in E_\chi(w, u)$} &\\
\end{array}
\]
and so
\[
\begin{array}{lll}
& M, w \models C_G \phi \\
\iff & \text{$M, w\models E^k_G\phi$ for all $k\in \mbN^+$} & \\
\iff & \text{$M, u\models \phi$ for any world $u$ such that $G\in E^+_{\chi}(w, u)$.} & (*)
\end{array}
\]
To justify $(*)$, suppose $M, w \not \models E^n_G \phi$ for some $n\in\mbN^+$. Then by induction on $n$, there  exist worlds $w_1, \dots, w_n \in W$ such that $M, w_n \not\models \phi$ and $G \in E_\chi(w, w_1) \cap \bigcap_{1\leq i< n} E_\chi(w_i, w_{i+1})$. Hence $M, w_n\not\models\phi$ and $G\in E^+_\chi(w, w_n)$. Suppose $M, u\not\models \phi$ for a world $u$ such that $G\in E^+_\chi(w, u)$, w.l.o.g, assume that there exist $w_0,\dots, w_n\in W$ such that $w_0=w$, $w_n=u$, $G\in \bigcap_{0\leq i< n} E_\chi(w_i, w_{i+1})$ and $M, w_n \not\models \phi$. Applying the above result $n$ times, it follows that $M, w \not \models E^n_G \phi$.
\end{proof}

\begin{lem}\label{thm:complexity}
The model checking problem for $\l_{CDEF}$, and thus for all its sublogics, is in P.
\end{lem}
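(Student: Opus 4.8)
The plan is to extend Algorithm~\ref{alg:val} so that $Val(M,\phi)$ is computed for all formulas of $\l_{CDEF}$ while keeping the running time polynomial; the statement for the sublogics is then immediate, since restricting the language only removes cases from the recursion. Fix the input formula $\phi_0$ and the finite model $M=(W,E,C,\beta)$. First I would run a preprocessing phase: compute the set $A_{\phi_0}$ of groups occurring under an $E_G$ or $C_G$ in $\phi_0$ (there are at most $|\phi_0|$ of them), then the auxiliary edge functions $E_{\phi_0}$ and $E_{\phi_0}^{+}$ of Definition~\ref{def:trans-e}. The function $E_{\phi_0}$ is obtained by one pass over $W\times W\times A_{\phi_0}$, testing for each $G$ whether $C(a)\subseteq E(w,u)$ for some $a\in G$; and $E_{\phi_0}^{+}$ is obtained from $E_{\phi_0}$ by a standard reachability (transitive-closure) computation carried out separately for each $G\in A_{\phi_0}$ — equivalently, $G\in E_{\phi_0}^{+}(w,u)$ iff $w\reach_G u$. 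Both passes are clearly polynomial in $|M|+|\phi_0|$, and they produce the model $M_{\phi_0}^{+}=(W,E_{\phi_0}^{+},C,\beta)$, which is a model by the preceding proposition.

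Next I would add the following clauses to the recursion, all evaluated in $M_{\phi_0}^{+}$ (which is harmless by Lemma~\ref{lem:trans-e}(\ref{it:trans-e1})):
\begin{itemize}
\item $E_G\psi$: return $\bigcap_{a\in G} Val(M_{\phi_0}^{+},K_a\psi)$, equivalently $\{x\in W \mid \forall u\in W,\ G\in E_{\phi_0}(x,u)\Rightarrow u\in Val(M_{\phi_0}^{+},\psi)\}$;
\item $C_G\psi$: return $\{x\in W \mid \forall u\in W,\ G\in E_{\phi_0}^{+}(x,u)\Rightarrow u\in Val(M_{\phi_0}^{+},\psi)\}$;
\item $D_G\psi$: compute $\textstyle\bigcup_{a\in G}C(a)$ once, then return $\{x\in W \mid \forall u\in W,\ \textstyle\bigcup_{a\in G}C(a)\subseteq E(x,u)\Rightarrow u\in Val(M_{\phi_0}^{+},\psi)\}$;
\item $F_G\psi$: compute $\textstyle\bigcap_{a\in G}C(a)$ once, then return $\{x\in W \mid \forall u\in W,\ \textstyle\bigcap_{a\in G}C(a)\subseteq E(x,u)\Rightarrow u\in Val(M_{\phi_0}^{+},\psi)\}$.
\end{itemize}
Correctness of the $E_G$, $D_G$, and $F_G$ clauses is read off Definition~\ref{def:semantics} (using $E_G\psi\equiv\bigwedge_{a\in G}K_a\psi$ and the explicit conditions for $D_G$, $F_G$), together with the induction hypothesis $Val(M_{\phi_0}^{+},\psi)=\{x\mid M_{\phi_0}^{+},x\models\psi\}$ and the fact, noted in the proof of Lemma~\ref{lem:trans-e}(\ref{it:trans-e1}), that $C(a)$ contains no group symbols, so $C(a)\subseteq E(x,u)$ iff $C(a)\subseteq E_{\phi_0}^{+}(x,u)$. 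Correctness of the $C_G$ clause is exactly Lemma~\ref{lem:trans-e}(\ref{it:trans-e3}), whose hypothesis ``$C_G$ appears in $\chi$'' is satisfied taking $\chi:=\phi_0$, because $C_G\psi$ is a subformula of $\phi_0$. Finally $M,w\models\phi_0$ is decided by testing $w\in Val(M_{\phi_0}^{+},\phi_0)$.

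For the complexity: the preprocessing is polynomial as noted, and each new recursion clause costs at most $O(|W|^{2}\cdot(|G|\cdot|C|\cdot|E|+|W|))$, of the same order as the $K_a$ clause of Algorithm~\ref{alg:val}, once $E_{\phi_0}^{+}$ and the (at most $|\phi_0|$ many) set unions/intersections of capabilities have been precomputed; since the recursion depth is bounded by $|\phi_0|$ and each subformula is processed once, the total running time is polynomial in $|\phi_0|+|M|+3$. The only point that needs care — rather than a genuine obstacle — is $C_G$: its naive unfolding $C_G\psi\equiv\bigwedge_{n\geq 1}E_G^{n}\psi$ is an infinite conjunction, so one must route through the reachability relation $\reach_G$, compute it once in preprocessing (not rederive it at each occurrence of $C_G$), and invoke Lemma~\ref{lem:trans-e} to move correctness between $M$ and $M_{\phi_0}^{+}$. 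This yields that model checking for $\l_{CDEF}$, and hence for all its sublogics, is in P.
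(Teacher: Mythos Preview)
Your proposal is correct and takes essentially the same approach as the paper: both extend Algorithm~\ref{alg:val} with clauses for $C_G$, $D_G$, $E_G$, $F_G$, handle $C_G$ (and $E_G$) via the precomputed auxiliary edge functions $E_{\phi_0}$ and $E_{\phi_0}^{+}$ of Definition~\ref{def:trans-e}, invoke Lemma~\ref{lem:trans-e} for correctness, and argue that the transitive-closure preprocessing and the per-clause work are polynomial. The only cosmetic difference is that you move to $M_{\phi_0}^{+}$ globally and do all preprocessing up front, whereas the paper presents explicit pseudocode (Algorithm~\ref{alg:val-dcx}) that keeps the original $E$ for the $D_G$/$F_G$ tests; neither choice affects the argument.
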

\begin{proof}
To establish this result, it suffices to provide a polynomial-time algorithm for formulas of the form $C_G \psi$, $D_G \psi$, $E_G \psi$ and $F_G \psi$. The extended algorithm is detailed in Algorithm~\ref{alg:val-dcx}.
\begin{algorithm}
\caption{Function $Val(M,\phi)$ Extended: Cases with Group Knowledge Operators}\label{alg:val-dcx}
\footnotesize
\begin{algorithmic}[1]
	\Initialize{$temVal \gets \emptyset$}
	\If{...} ... \Comment{Same as in Algorithm~\ref{alg:val}}
	\ElsIf{$\phi = C_G\psi$}
	\ForAll{$x \in W$}
	\Initialize{$n \gets \True$}
	\ForAll{$y \in W$}
	%   \State {{\bf initialize} $z=E(t,u)$}
	\If{$G\in E^+_{\phi} (x, y)$ \algand $y \notin Val (M, \psi)$}
		\State $n \gets \False$
	%      \Comment{Checking $C(a)\subseteq z$ costs at most $|\ab|$ steps}
	\EndIf
	\EndFor
	\If{$n = \True$}
		\State $tmpVal \gets tmpVal \cup \{x\}$
	\EndIf
	\EndFor
	\State \Return $tmpVal$ \quad \Comment{Returns $\{x \in W\mid \forall y \in W: G \in E^+_{\phi}(x, y) \Rightarrow y \in Val(M,\psi)\}$}
	\ElsIf{$\phi=D_G\psi$}
	\ForAll{$x \in W$}
	\Initialize{$n \gets \True$}
	\ForAll{$y \in W$}
	%   \State {{\bf initialize} $z=E(t,u)$}
	\If{$\bigcup_{a\in G} C(a) \subseteq E(x,y)$ \algand $y \notin Val(M,\psi)$}
		\State {$n \gets \False$}
	%      \Comment{Checking $C(a)\subseteq z$ costs at most $|\ab|$ steps}
	\EndIf
	\EndFor
	\If{$n = \True$}
		\State $tmpVal \gets tmpVal \cup \{x\}$
	\EndIf
	\EndFor
	\State \Return $tmpVal$ \quad \Comment{Returns $\{x \in W\mid \forall y \in W: \bigcup_{a\in G} C(a) \subseteq E (x, y) \Rightarrow y \in Val(M,\psi)\}$}

	\ElsIf{$\phi=E_G\psi$}
	\ForAll{$x \in W$}
	\State {{\bf initialize} $n \gets \True$}
	\ForAll{$y \in W$}
	%\State {{\bf initialize} $z=E(t,u)$}
	\If{$G\in E_\phi (x, y)$ \algand $y \notin Val(M,\psi)$}
		\State {$n \gets \False$}
	% \Comment{Checking $C(a)\subseteq z$ costs at most $|\ab|$ steps}
	\EndIf
	\EndFor
	\If{$n = \True$} {$tmpVal \gets tmpVal \cup \{ x \}$}
	\EndIf
	\EndFor
	\State \Return $tmpVal$ \quad \Comment{Returns $\{x \in W \mid \forall y \in W: G \in E_\phi (x, y) \Ra y \in Val(M, \psi) \}$}

	\ElsIf{$\phi = F_G\psi$}
	\ForAll{$x \in W$}
	\Initialize{$n \gets \True$}
	\ForAll{$y \in W$}
	%   \State {{\bf initialize} $z=E(t,u)$}
	\If{$\bigcap_{a\in G}C(a)\subseteq E (x, y)$ \algand $y \notin Val(M,\psi)$}
		\State {$n \gets \False$}
	%      \Comment{Checking $C(a)\subseteq z$ costs at most $|\ab|$ steps}
	\EndIf
	\EndFor
	\If{$n = \True$} {$tmpVal \gets tmpVal \cup \{ x \}$}
	\EndIf
	\EndFor
	\State \Return $tmpVal$ \quad \Comment{Returns $\{x \in W\mid \forall y \in W: \bigcap_{a\in G} C(a) \subseteq E (x, y) \Rightarrow y \in Val(M,\psi)\}$}
	\EndIf
\end{algorithmic}
\end{algorithm}
As in the proof of Lemma \ref{lem:mc-el}, checking $C(a) \subseteq E(t, u)$ costs at most $|C| \cdot |E|$ steps, here we furthermore need to calculate the cost caused by group knowledge operators.
	
For $D_G$ and $F_G$, notice that the number of agents in any group $G$ that appears in $\phi$ is less than $|\phi|$, so checking $\bigcup_{a \in G} C(a)\subseteq E(t, u)$ and $\bigcap_{a \in G} C(a)\subseteq E(t,u)$ costs at most $|C| \cdot |E| \cdot |\phi|$ steps. Thus for the logics extended with these operators, the complexity for model checking would not go beyond P.
	
For $E_G$ and $C_G$, the computation of $E_\phi(w, u)$ and $E^+_{\phi}(w, u)$ must be polynomial. By Definition~\ref{def:trans-e} and Lemma~\ref{lem:trans-e}, computing the set $A_\phi$ costs at most $|\phi|$ steps, since there are at most $|\phi|$ modalities appearing in $\phi$; moreover, the size of $G$ is at most $|\phi|$. To compute $E_\phi(w, u)$ for any given $w$ and $u$, it costs at most $|E|$ steps to compute $E(w, u)$ and at most $|\phi| ^ 2 \cdot |C| \cdot |E|$ steps to check for every $G \in A_\phi$ whether there exists $a \in G$ such that $C(a) \subseteq E(w, u)$. So the cost of computing the whole function $E_\phi$ can be finished in at most $|W|^2 \cdot (|E| + |\phi| ^ 2 \cdot |C| \cdot |E|)$ steps. Now consider the computation of $E^+_{\phi}$. Assume that there is a string that describes $E_\phi$, then check for all pairs $(x, y), (y, z)\in W^2$ whether there exists a ``$G$'' appearing in $\phi$ such that $G\in E_\phi (x, y) \cap E_\phi (y, z)$; if it is, add $G$ as a member of $E_\phi (x, z)$. Keep doing this until $E_\phi$ does not change any more. Every round of checking takes at most $2|\phi|^2 \cdot |W|^3$ steps, and it will be stable in at most $|\phi| \cdot |W|^2$ rounds. Then the function $E^+_{\phi}$ is achieved. Every membership checking for $G \in E^+_{\phi}(w, v)$ is finished in polynomial steps. So the whole process remains in P.
\end{proof}

\subsubsection{Model checking formulas with update modalities}
\label{sec:MC-U}

This section addresses the model checking problem for formulas involving update modalities. Consider a model $M = (W, E, C, \beta)$, a world $w \in W$, and the formulas $(+_S)_a \psi$, $(-_S)_a \psi$, $(=_S)_a \psi$ and $(\equiv_b)_a \psi$. According to the semantics in Definition~\ref{def:semantics},
$$M, w \models (+_S)_a \psi \iff M^{a \cup S}, w \models \psi$$
where $M^{a \cup S} = (W, E, C^{a \cup S}, \beta)$, and $C^{a \cup S}$ updates $C(a)$ to $C(a) \cup S$ while leaving other agents' skill sets unchanged. Consequently, verifying whether \( M, w \models (+_S)_a \psi \) reduces to checking \( M^{a \cup S}, w \models \psi \), effectively eliminating the outermost update modality. An algorithm that invokes the existing model checking procedure (e.g., Algorithm~\ref{alg:val-dcx}) on $M^{a \cup S}$ and $\psi$ operates efficiently: constructing $M^{a \cup S}$ from $M$ requires at most $|C(a)| \cdot |S|$ steps to compute the union, where $|S| \leq |\phi|$ since $S$ is specified in the formula, and $\psi$ is a subformula of the original input. Given that model checking for $\logic{CDEF}$ is in P (Lemma~\ref{thm:complexity}), this additional step introduces only polynomial overhead, maintaining the total complexity within polynomial bounds.

The cases for $(-_S)_a \psi$, $(=_S)_a \psi$, and $(\equiv_b)_a \psi$ proceed similarly, each requiring a distinct model transformation:
\begin{itemize}
\item For $(-_S)_a \psi$, the model becomes $M^{a \cap S} = (W, E, C^{a \cap S}, \beta)$, where $C^{a \cap S}(a) = C(a) \cap S$, computed in at most $|C(a)| \cdot |S|$ steps.
\item For $(=_S)_a \psi$, the model is $M^{a = S} = (W, E, C^{a = S}, \beta)$, where $C^{a = S}(a) = S$, requiring at most $|S|$ steps to assign $S$ directly.
\item For $(\equiv_b)_a \psi$, the model is $M^{a \equiv b} = (W, E, C^{a \equiv b}, \beta)$, where $C^{a \equiv b}(a) = C(b)$, taking at most $|C(b)|$ steps to copy $C(b)$.
\end{itemize}
Each transformation modifies $C$ in polynomial time relative to the input size, as $|S| \leq |\phi|$ (since $S$ is specified in the formula), and $|C(a)|$ and $|C(b)|$ are bounded by the model's finite representation. The subsequent recursive check on the transformed model and subformula $\psi$, using the procedure for $\l_{CDEF}$ (e.g., Algorithm~\ref{alg:val-dcx}), remains in P per Lemma~\ref{thm:complexity}. Consequently, the total complexity for these cases remains polynomial, establishing the following theorem:

\begin{thm}
The model checking problems for $\l_{CDEF+-=\equiv}$ and all its sublogics are in P.
\end{thm}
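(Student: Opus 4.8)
The plan is to extend the truth-set procedure $Val(M,\phi)$ of Algorithm~\ref{alg:val-dcx} with one new case for each of the four update modalities, each reducing the problem to a recursive call on a strictly shorter formula over a model that differs from $M$ only in its capability function. In the case $\phi = (+_S)_a\psi$ I would build $M^{a\cup S} = (W, E, C^{a\cup S}, \beta)$ by copying $M$ and replacing the single entry $C(a)$ with $C(a)\cup S$, and return $Val(M^{a\cup S},\psi)$; by Definition~\ref{def:semantics} this set is exactly $\{x\in W \mid M,x\models (+_S)_a\psi\}$, and correctness propagates upward by the induction hypothesis on formula length. The cases $(-_S)_a\psi$, $(=_S)_a\psi$ and $(\equiv_b)_a\psi$ are treated the same way, replacing $C(a)$ by $C(a)\setminus S$, by $S$, and by $C(b)$ respectively, each of which leaves $W$, $E$ and $\beta$ untouched.

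Next I would account for the running time. Constructing any one of the four modified models costs $O(|C(a)|\cdot|S| + |M|)$ steps --- recompute one capability entry, copy the rest --- and $|S|\le|\phi|$ since $S$ is written out in the formula. The recursion tree is isomorphic to the parse tree of $\phi$, hence has at most $|\phi|$ nodes; crucially, at every node the current model has size bounded by $|M| + |\phi|$, because a chain of nested updates can only ever enlarge, shrink or reset a single agent's skill set, and even a nested sequence of upskillings builds at most $C(a)\cup S_1\cup\cdots\cup S_k$, whose representation has size $\le |C(a)| + |S_1| + \cdots + |S_k| \le |M| + |\phi|$. Beyond the model-transformation overhead, each node performs only the polynomial work already bounded in the proof of Lemma~\ref{thm:complexity} (including, for the $E_G$ and $C_G$ cases, the recomputation of $E_\psi$ and $E^+_\psi$, now relative to the \emph{current} updated capability function). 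Summing over the at most $|\phi|$ nodes yields a polynomial bound in the input size $|\phi| + |M| + 3$, establishing membership of $\l_{CDEF+-=\equiv}$ in P; every sublogic is obtained by deleting operators, so the bound is inherited.

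The step I expect to require the most care is not the model transformation itself but the group-knowledge bookkeeping once updates are interleaved: the sets $E_\phi$ and $E^+_\phi$ of Definition~\ref{def:trans-e} depend on $C$, so after each update they must be recomputed from scratch for the new model --- Lemma~\ref{lem:trans-e} has to be invoked for the updated model rather than reused for $M$. The other delicate point, the absence of any exponential blow-up under nesting, is precisely the $|M| + |\phi|$ model-size bound above; it hinges on each update touching only one agent and on every $S$ being part of the input formula rather than arbitrary.
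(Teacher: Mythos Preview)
Your proposal is correct and follows essentially the same approach as the paper: transform the model by modifying only the relevant agent's capability entry, recurse on the strictly shorter subformula, and bound the per-step cost using $|S|\le|\phi|$. Your treatment is in fact slightly more careful than the paper's, which does not explicitly spell out the $|M|+|\phi|$ bound on the intermediate model size under nested updates or the need to recompute $E_\phi$, $E^+_\phi$ relative to the updated capability function; both points are implicit in the paper but worth making explicit as you do.
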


\subsection{Model checking quantified formulas: PSPACE complete}

The PSPACE hardness of model checking for logics with quantified modalities---specifically $\logic{\boxplus}$, $\logic{\boxminus}$ and $\logic{\Box}$---is achieved by a polynomial-time reduction from the problem of undirected edge geography (UEG), a variant of generalized geography \cite{Schaefer1978,LS1980} known to be PSPACE complete for determining a winning strategy, as established in \cite{FSU1993}. The PSPACE upper bound is established via a polynomial-space algorithm, extending the algorithms from the prior section.

Consider an undirected graph $G = (D, R)$, where $D$ is a finite nonempty set of nodes, and $R \subseteq D \times D$ is a symmetric and irreflexive relation. For a node $d \in D$, the pair $(G, d)$ is termed a \emph{rooted undirected graph}.
The undirected edge geography (UEG) game on $(G, d)$ is a two-player game processing as follows:
\begin{enumerate}
\item Player I's Move: Player I starts by selecting edge $\{ d, d_1 \} \in R$. If no such edge exists, the game ends and Player II wins as Player I cannot make a valid move.
\item Player II's Move: After Player I''s move selecting an edge $\{ d_i, d_{i+1} \}$, Player II must choose an edge $\{ d_{i+1}, d_{i+2} \}$ that has not been chosen in previous moves. If Player II cannot make such a move, the game ends and Player I wins.
\item Alternating Turns: After Player II's move selecting an edge $\{ d_j, d_{j+1} \}$, it is Player I's turn again to choose an edge $\{ d_{j+1}, d_{j+2} \}$ not previously chosen. If Player I cannot make such a move, the game ends and Player II wins.
\item Repeat Step 2: The game continues by alternating turns following the process described in step 2.
\end{enumerate}

Alternatively, UEG game on $(G, d)$  can be recursively defined by modifying the graph after each move:
\begin{itemize}
\item The current player selects an edge $\{ d, d' \} \in R$; if no such edge exists, the player loses, and the game terminates.
\item Upon a successful move, the game proceeds with the opposing player on the updated graph $(G', d')$, where $G' = (D, R \setminus \{ \{ d, d' \} \} )$.
\end{itemize}
Play alternates between Player I (starting at $d$) and Player II until a player cannot move.

As an example,
$G_0 = ( \{d_1, d_2, d_3, d_4 \}, \{ \{d_1, d_3\}, \{d_1, d_4\}, \{d_2, d_4\}, \{d_3, d_4\} \} )$
represents an undirected graph. An illustration of $G_0$ is given in Figure~\ref{fig:UEG}.

\begin{figure}
\centering
\begin{tikzpicture}[modal, node distance=1.5cm and 1.8cm, world/.append style={minimum width=.8cm, minimum height=.5cm}, inner sep=.5ex]
\scriptsize
\node[world] (d1) {$d_1$};
\node[world] (d2) [right=of d1] {$d_2$};
\node[world] (d3) [below=of d1] {$d_3$};
\node[world] (d4) [right=of d3] {$d_4$};

\path (d1) edge (d3);
\path (d1) edge (d4);
\path (d2) edge (d4);
\path (d3) edge (d4);
\end{tikzpicture}
\caption{\label{fig:UEG}Illustration of
$G_0 = ( \{d_1, d_2, d_3, d_4 \}, \{ \{d_1, d_3\}, \{d_1, d_4\}, \{d_2, d_4\}, \{d_3, d_4\} \} )$.}
\end{figure}

The \emph{UEG problem} asks whether Player I has a winning strategy, i.e., can force a win regardless of Player II's moves.

\begin{defi}[Induced model]
Given an undirected graph $G = (D, R)$, assign:
\begin{itemize}
\item To each edge $\{x, y\} \in R$, a unique epistemic skill $s_{\{x, y\}} \in \sk$, such that $s_{\{x', y'\}} \neq s_{\{x'', y''\}}$ for distinct unordered pairs $\{x', y'\}$ and $\{x'', y''\}$,
\item To each node $x \in D$, a unique atomic proposition $p_x \in \pr$, such that $p_{x'} \neq p_{x''}$ for distinct nodes $x'$ and $x''$.
\end{itemize}

The \emph{induced model} $M_G$ is defined as the tuple $(D, E, C, \beta)$, where:
\begin{itemize}
\item $E: D \times D \to \wp(\sk)$, with $E(x, y) = \{ s_{\{x, y\}} \}$ if $\{x, y\} \in R$, and $E (x, y) = \emptyset$ otherwise;
\item $C: \ag \to \wp(\sk)$, with $C(a) = \emptyset$ for all $a \in \ag$;
\item $\beta: D \to \wp(\pr)$, with $\beta(x) = \{p_x\}$ for each $x \in D$.
\end{itemize}
\end{defi}

The model $M_G$ is well-defined and succinctly encodes the structure and properties of $G$. The size of $E$ is $O(|D|^2)$, reflecting pairwise edge relations, while the size of $\beta$ is $O(|D|)$, corresponding to one proposition per node. The size of $C$ is $O(|R|)$, given that only a limited number of agents are relevant, as clarified in the definition of the size of the input and the subsequent definition.

As an example, the model $M_{G_0}$ induced from the undirected graph $G_0$ satisfies the following:
\begin{itemize}
\item $W = \{d_1, d_2, d_3, d_4\}$;
\item For all $m, k \in \{1, 2, 3, 4\}$, $E(d_m, d_k)=\{s_{\{d_m, d_k\}}\}$ whenever $\{d_m, d_k \} \in G_0$;
\item $C(a_1) = C(a_2) = C(a_3) = C(a_4) = \emptyset$\ \  ($a_i$ is the player who performs the $i$-th move)j
\item $V(d_j) = \{ p_{j}\} $ for $1\leq j\leq 4$.
\end{itemize}
An illustration is given in Figure~\ref{fig:UEG-model}.

\begin{figure}
\centering
\parbox{.56\textwidth}{
\begin{tikzpicture}[modal, node distance=1.6cm and 3.6cm, world/.append style={minimum width=1cm, minimum height=.3cm}, inner sep=.5ex]
\scriptsize
\node[world, ellipse split] (d1) {$d_1$ \nodepart{lower} $p_1$};
\node[world, ellipse split] (d2) [right=of d1] {$d_2$ \nodepart{lower} $p_2$};
\node[world, ellipse split] (d3) [below=of d1] {$d_3$ \nodepart{lower} $p_3$};
\node[world, ellipse split] (d4) [right=of d3] {$d_4$ \nodepart{lower} $p_4$};

\path (d1) edge node[left] {$s_{\{d_1, d_3\}}$} (d3);
\path (d1) edge node[above right] {$s_{\{d_1, d_4\}}$} (d4);
\path (d2) edge node[right] {$s_{\{d_2, d_4\}}$} (d4);
\path (d3) edge node[below] {$s_{\{d_3, d_4\}}$} (d4);
\end{tikzpicture}
}
\quad\parbox{.2\textwidth}{\small
$\begin{array}{l}
C(a_1) = \emptyset\\
C(a_2) = \emptyset\\
C(a_3) = \emptyset\\
C(a_4) = \emptyset\\
\end{array}$}
\caption{\label{fig:UEG-model}Illustration of the induced model $M_{G_0}$, with $G_0$ illustrated in Figure~\ref{fig:UEG}.}
\end{figure}

\begin{defi}[Induced formula]\label{def:indf}
Given an undirected graph $G = (D, R)$, let $n$ be the smallest even positive integer greater than or equal to $|R|$. Select distinct agents $a_1, a_2, \ldots, a_n \in \ag$. For each $i$ where $1 \leq i \leq n$, define:
$$\begin{array}{lll}
\psi_{i} & := & \neg K_{a_i} \bot \wedge \bigvee_{x\in D} K_{a_i} p_x, \\
\chi_{i} & := & \bigvee_{x, y\in D,\, x \neq y,\, 1 \leq j < i} (p_x \wedge \hat K_{a_j} p_y \wedge K_{a_i} p_y), \\[.5ex]
\multicolumn{3}{l}{\text{and for even $i$:}}\\
\phi_{i} & :=  & \begin{array}[t]{l}
\diamondplus_{a_1} ( \psi_{1} \wedge \neg \chi_{1} \wedge K_{a_1}\boxplus_{a_2}( \neg \psi_{2} \vee \chi_{2} \vee 
\\
\qquad\hat{K}_{a_2}\diamondplus_{a_3}(\psi_{3} \wedge \neg\chi_{3} \wedge K_{a_3}\boxplus_{a_4}(\neg \psi_{4} \vee \chi_{4} \vee
\\
\qquad\qquad\hat{K}_{a_4}\diamondplus_{a_5}(\psi_{5} \wedge \neg \chi_{5} \wedge K_{a_5}\boxplus_{a_6} (\neg \psi_{6} \vee \chi_{6} \vee
\\
\qquad\qquad\qquad\dots
\\
\qquad\qquad\qquad\qquad\hat{K}_{a_{i-2}}\diamondplus_{a_{i-1}}(\psi_{i-1} \wedge \neg \chi_{i-1} \wedge K_{a_{i-1}}\boxplus_{a_i}(\neg\psi_{i} \vee \chi_{i}))\cdots)))))).
\end{array}
\end{array}$$
where $\hat{K}_a$ is the dual of $K_a$ (i.e., $\hat{K}_a \psi = \neg K_a \neg \psi$). The \emph{induced formula} $\phi_G$ for $G$ is defined as $\phi_n$.
\end{defi}

Table~\ref{tbl:inF} details the process of generating the induced formula $\phi_{G_0}$ from the undirected edge graph given above.

\begin{table}
\caption{\label{tbl:inF}Process of generating the induced formula $\phi_{G_0}$, where $G_0$ is given in Figure~\ref{fig:UEG}.}
\medskip
\begin{tblr}{
	hline{1,9} = {1pt, solid},
	column{1,2} = {leftsep=0pt, rightsep=0pt},
	rows = {mode=math},
}
\psi_{i} & := & \neg K_{a_i}\bot \wedge (K_{a_i} p_1 \vee K_{a_i} p_2 \vee K_{a_i} p_3 \vee K_{a_i} p_4) \\
\chi_{1} & := & \bot\\
\chi_{2} & := & \begin{array}[t]{@{}l@{}}
(p_2 \wedge \hat K_{a_1} p_1 \wedge K_{a_2} p_1) \vee (p_3 \wedge \hat K_{a_1} p_1 \wedge K_{a_2} p_1) \vee (p_4 \wedge \hat K_{a_1} p_1 \wedge K_{a_2} p_1) \\
\vee (p_1 \wedge \hat K_{a_1} p_2 \wedge K_{a_2} p_2) \vee (p_3 \wedge \hat K_{a_1} p_2 \wedge K_{a_2} p_2) \vee (p_4 \wedge \hat K_{a_1} p_2 \wedge K_{a_2} p_2)\\
\vee (p_1 \wedge \hat K_{a_1} p_3 \wedge K_{a_2} p_3) \vee (p_2 \wedge \hat K_{a_1} p_3 \wedge K_{a_2} p_3) \vee (p_4 \wedge \hat K_{a_1} p_3 \wedge K_{a_2} p_3)\\
\vee (p_1 \wedge \hat K_{a_1} p_4 \wedge K_{a_2} p_4) \vee (p_2 \wedge \hat K_{a_1} p_4 \wedge K_{a_2} p_4) \vee (p_3 \wedge \hat K_{a_1} p_4 \wedge K_{a_2} p_4)\\
\end{array}\\
& & \text{I.e., } \chi_{2} = \bigvee_{x \neq y \in \{1,2,3,4\}} (p_x \wedge \hat K_{a_1} p_y \wedge K_{a_2} p_y) \\
\chi_{3} & := & \begin{array}[t]{@{}l@{}}
\bigvee_{x \neq y \in \{1,2,3,4\}} (p_x \wedge \hat K_{a_1} p_y \wedge K_{a_2} p_y) \\
\vee \bigvee_{x \neq y \in \{1,2,3,4\}} (p_x \wedge \hat K_{a_1} p_y \wedge K_{a_3} p_y)\\
\vee \bigvee_{x \neq y \in \{1,2,3,4\}} (p_x \wedge \hat K_{a_2} p_y \wedge K_{a_3} p_y)\\
\end{array}
\\
& & \text{I.e., } \chi_{3} = \bigvee_{1 \leq j < i \leq 3} \Big( \bigvee_{x \neq y \in \{1,2,3,4\}} (p_x \wedge \hat K_{a_j} p_y \wedge K_{a_i} p_y) \Big)
\\
\chi_{4} & := & \bigvee_{1 \leq j < i \leq 4} \Big( \bigvee_{x \neq y \in \{1,2,3,4\}} (p_x \wedge \hat K_{a_j} p_y \wedge K_{a_i} p_y) \Big)
\\
\phi_{G_0} & :=  &
\diamondplus_{a_1} ( \psi_{1} \wedge \neg \chi_{1} \wedge K_{a_1}\boxplus_{a_2}( \neg \psi_{2} \vee \chi_{2} \vee \hat{K}_{a_2}\diamondplus_{a_3}(\psi_{3} \wedge \neg\chi_{3} \wedge K_{a_3}\boxplus_{a_4}(\neg \psi_{4} \vee \chi_{4} ))))
\end{tblr}
\end{table}

To elucidate the induced formula $\phi_G$ for an undirected graph $G = (D, R)$, consider its role in encoding the UEG game. Each agent $a_i$ corresponds to the player making the $i$-th move, with $i$ ranging from 1 to $n$, where $n$ is the smallest even integer at least $|R|$. The subformulas are interpreted as follows:
\begin{itemize}
\item $\psi_i = \neg K_{a_i} \bot \wedge \bigvee_{x \in D} K_{a_i} p_x$ ensures that player $a_i$, at the $i$-th move, selects exactly one edge from the current node. In $M_G$, where $C(a_i)$ starts as $\emptyset$, $\neg K_{a_i} \bot$ holds trivially, and $\bigvee_{x \in D} K_{a_i} p_x$ requires $a_i$ to ``know'' one node's proposition.
\item $\chi_i = \bigvee_{x, y \in D,\, x \neq y,\, 1 \leq j < i} (p_x \wedge \hat{K}_{a_j} p_y \wedge K_{a_i} p_y)$ identifies invalid moves by detecting if $a_i$'s chosen edge (leading to $y$) was previously selected by some $a_j$ (where $j < i$), as $\hat{K}_{a_j} p_y$ indicates the edge $\{x, y\}$ was selected earlier.
\item The conjunction $\psi_i \wedge \neg \chi_i$ enforces a valid move: $a_i$ picks a new, unvisited edge from the current node.
\end{itemize}

As for complexity, the length of $\psi_{i}$ is in $O(|D|)$, due to the disjunction over $|D|$ nodes. The length of $\chi_i$ is $O(|D|^2 \cdot i)$, as it involves pairs $(x, y)$ and prior moves $j < i$; since $i \leq n = O(|R|)$, this is $O(|D|^2 \cdot |R|)$. The formula $\phi_G = \phi_n$ has $n = O(|R|)$ nested modalities, each adding $\psi_i$ and $\chi_i$, yielding a total length of $O(|R| \cdot |D|^2 \cdot |R|) = O(|D|^2 \cdot |R|^2)$.

The structure of $\phi_G$ mirrors UEG gameplay:
\begin{itemize}
\item $\diamondplus_{a_1}$ allows player $a_1$ (Player I) to upskill, adding a skill (edge) to $C(a_1)$, representing a move choice;
\item $\psi_1 \wedge \neg \chi_1$ ensures $a_1$ selects a new edge from the root $d$, valid at the game's start;
\item $K_{a_1} \boxplus_{a_2}$ asserts that, after $a_1$'s move, for all possible upskillings by $a_2$ (Player II), the subformula $\neg \psi_2 \vee \chi_2 \vee \hat{K}_{a_2} \diamondplus_{a_3} (\cdots)$ holds:
	\begin{itemize}
	\item $\neg \psi_2$ means $a_2$ cannot select a node (no edges remain), ending the game with $a_1$ winning.
	\item $\chi_2$ indicates $a_2$ repeats an edge (invalid), also favoring $a_1$.
	\item $\hat{K}_{a_2} \diamondplus_{a_3} (\psi_3 \wedge \neg \chi_3 \wedge \cdots)$ allows $a_2$ a valid move, shifting play to $a_3$ (Player I again), recursively continuing the game.
	\end{itemize}
\end{itemize}
This nested, alternating structure captures the strategic interplay of UEG, where each move constrains the opponent's options, modeling game states as nodes in $M_G$ and moves as skill updates, within a framework tailored to $\lang_{\boxplus}$'s quantified modalities.

A lemma is now presented that establishes a formal correspondence between the undirected edge geography problem and the epistemic logics developed herein, specifically those incorporating quantified modalities.

\begin{lem}
For any rooted undirected graph $(G, d)$, Player I has a wining strategy in the UEG game on $(G,d)$, if and only if $M_{G}, d \models \phi_{G}$.
\end{lem}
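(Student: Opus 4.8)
The plan is to set up a precise dictionary between positions of the UEG game on $(G,d)$ and states of the induced model $M_G$ with modified capability functions, and then prove the equivalence by an induction that peels off the quantifying modalities of $\phi_G$ one at a time. For a \emph{legal play} $\pi = \langle d_0, d_1, \dots, d_m\rangle$ --- meaning $d_0 = d$, each $\{d_{j-1}, d_j\}\in R$, and the $m$ edges pairwise distinct --- let $M_G^{\pi}$ be the model obtained from $M_G$ by redefining, for $1\le j\le m$, the capability $C(a_j)$ to be the singleton $\{s_{\{d_{j-1}, d_j\}}\}$; note $M_G^{\langle d\rangle} = M_G$, and the world $d_m$ will play the role of the ``current node''. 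Before the induction I would record a handful of elementary semantic facts about $M_G$ and its capability-variants, all immediate from $\beta(x) = \{p_x\}$, the uniqueness of the edge-skills $s_{\{x,y\}}$, and the fact that $E$-values are singletons or $\emptyset$: (i) if $C(a) = \emptyset$ then $\neg K_a\bot$ holds everywhere and (when $|D|\ge 2$) $\bigvee_{x} K_a p_x$ holds nowhere; (ii) if $C(a) = \{s_{\{u,v\}}\}$ for an edge $\{u,v\}\in R$ then $\{t : C(a)\subseteq E(w,t)\}$ equals $\{v\}$ when $w=u$, $\{u\}$ when $w=v$, and $\emptyset$ otherwise, so that $K_a\theta$ and $\hat K_a\theta$ both reduce at $w=u$ to ``$M,v\models\theta$''; (iii) hence, for \emph{any} $S\subseteq\sk$ and $C(a_i)=S$, the formula $\psi_i$ holds at $w$ iff $S = \{s_{\{w,w'\}}\}$ for some (then unique) edge $\{w,w'\}\in R$ incident to $w$, i.e.\ $S$ codes a legal one-edge move out of $w$; and (iv) when $C(a_i)=\{s_{\{w,w'\}}\}$ codes such a move and $C(a_j)=\{s_{\{d_{j-1},d_j\}}\}$ for $j<i$, the formula $\chi_i$ holds at $w$ iff $\{w,w'\}$ coincides with one of the earlier edges $\{d_{j-1},d_j\}$, i.e.\ the move repeats a previous one.

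Write $\alpha_1,\dots,\alpha_n$ for the nested subformulas of $\phi_G$ (so $\phi_G=\diamondplus_{a_1}\alpha_1$; $\alpha_i=\psi_i\wedge\neg\chi_i\wedge K_{a_i}\boxplus_{a_{i+1}}\alpha_{i+1}$ for odd $i$; $\alpha_i=\neg\psi_i\vee\chi_i\vee\hat K_{a_i}\diamondplus_{a_{i+1}}\alpha_{i+1}$ for even $i<n$; and $\alpha_n=\neg\psi_n\vee\chi_n$). The core claim, proved by downward induction on $k$ from $n$ to $1$, is: for every legal play $\pi=\langle d_0,\dots,d_{k-1}\rangle$, with $M=M_G^{\pi}$, if $k$ is odd then $M,d_{k-1}\models\diamondplus_{a_k}\alpha_k$ iff Player I has a winning strategy from $\pi$ with Player I to move, and if $k$ is even then $M,d_{k-1}\models\boxplus_{a_k}\alpha_k$ iff Player I has a winning strategy from $\pi$ with Player II to move. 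Base case $k=n$ (even): by (iii) and (iv), $\boxplus_{a_n}(\neg\psi_n\vee\chi_n)$ holds at $d_{n-1}$ iff every $R$-edge incident to $d_{n-1}$ has already been used in $\pi$, i.e.\ Player II is stuck and Player I wins. Odd $k<n$: $\diamondplus_{a_k}\alpha_k$ at $d_{k-1}$ holds iff some $S$ makes $\psi_k\wedge\neg\chi_k\wedge K_{a_k}\boxplus_{a_{k+1}}\alpha_{k+1}$ true; by (iii) the only relevant $S$ are legal unused one-edge moves $d_{k-1}\to d_k'$, after which (ii) shifts the point of evaluation to $d_k'$ and the model becomes exactly $M_G^{\pi'}$ with $\pi'=\langle d_0,\dots,d_{k-1},d_k'\rangle$, so the induction hypothesis at $k+1$ gives: $\diamondplus_{a_k}\alpha_k$ holds iff the mover has \emph{some} legal move to a position winning for Player I --- precisely the game meaning of ``Player I wins from $\pi$''. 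Even $k<n$: $\boxplus_{a_k}\alpha_k$ ranges over all $S$; illegal or repeated $S$ satisfy $\neg\psi_k\vee\chi_k$ outright, while legal unused $S$ force $\hat K_{a_k}\diamondplus_{a_{k+1}}\alpha_{k+1}$, which by (iv) and the induction hypothesis says Player I wins from $\pi'$; hence $\boxplus_{a_k}\alpha_k$ holds iff \emph{every} legal Player-II move leads to a Player-I win (and, vacuously, Player I wins when Player II is stuck) --- again ``Player I wins from $\pi$''. Taking $k=1$ and $\pi=\langle d\rangle$ yields $M_G,d\models\diamondplus_{a_1}\alpha_1=\phi_G$ iff Player I has a winning strategy in the UEG game on $(G,d)$.

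The step I expect to require the most care is the termination/boundary argument that makes the finite, depth-$n$ formula faithful to a game that could a priori run longer: since each move consumes a fresh edge, every play has length at most $|R|\le n$, and because $n$ is the \emph{smallest even} integer $\ge|R|$, the $n$-th move --- which by parity belongs to Player II --- is the last move that can ever occur, so the base-case subformula $\alpha_n=\neg\psi_n\vee\chi_n$ genuinely adjudicates the outcome (a Player-II win appears precisely as the failure of $\boxplus_{a_n}\alpha_n$ when Player II can grab the last available edge). Alongside this one must keep the bookkeeping honest: check that the capability updates generated while evaluating $\phi_G$ down a branch are always of the form $M_G^{\pi}$ for a legal play $\pi$ (no spurious intermediate states arise because of fact (iii)), that the current world tracked by the successive $K_{a_k}$/$\hat K_{a_k}$ guards is always the node reached by the latest move (fact (ii)), and that the degenerate cases $R=\emptyset$ and $|D|=1$ behave as claimed --- all routine once (i)--(iv) are in place.
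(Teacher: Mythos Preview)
Your proposal is correct and takes a genuinely different route from the paper. The paper argues by induction on $|R|$: after Player~I's first move and Player~II's reply, it compares the remaining subformula $\phi_{G,\diamondplus_{a_3}}$ (evaluated in the capability-updated model) with the fresh induced formula $\phi_{G_2}$ for the smaller graph $G_2$ obtained by deleting the two used edges, and invokes the induction hypothesis on $G_2$. This requires a somewhat delicate syntactic identification---renaming agents $a_{i+2}\mapsto a_i$ and excising the no-longer-relevant $\chi$-disjuncts involving $a_1,a_2$---to see that the two formulas agree. Your approach instead stays inside the single model $M_G$ and performs a downward induction on the nesting depth of $\phi_G$, maintaining an explicit bijection between legal plays $\pi$ and capability-modified models $M_G^\pi$, so that each quantifier $\diamondplus_{a_k}/\boxplus_{a_k}$ is matched one-for-one with a move of the game. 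This is more transparent: the facts (i)--(iv) do all the semantic work up front, and the alternating-quantifier structure of $\phi_G$ then reads off directly as the minimax recursion for ``Player~I wins,'' avoiding the agent-renaming bookkeeping entirely. The paper's route, on the other hand, makes the reduction self-contained at each inductive step (one lands back on a genuine instance of the lemma for a strictly smaller graph), which some readers may find conceptually tidier even if technically heavier.
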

\begin{proof}
The proof proceeds by induction on $|R|$, the number of edges in $G$.

\underline{Base case: $|R| = 0$}. Here, $n=2$, and $R = \emptyset$, so no edges exist. Player I loses immediately, unable to move from $d$. In the induced model $M_G = (D, E, C, \beta)$, $E(x, y) = \emptyset$ for all $x, y\in D$. We show $M_G, d \not \models \phi_G$, where $\phi_G = \phi_{2} = \diamondplus_{a_1}(\psi_{1}\land\neg\chi_{1}\land K_{a_1}\boxplus_{a_2}(\neg\psi_{2}\lor\chi_{2}))$, with $\psi_1=\neg K_{a_1}\bot \land \bigvee_{x\in D}K_{a_1} p_x$, $\chi_1 = \bot$, $\psi_2 = \neg K_{a_2}\bot \land \bigvee_{x\in D}K_{a_2} p_x$, and $\chi_2 = \bigvee_{x \neq y \in D} (p_x \wedge \hat K_{a_1} p_y \wedge K_{a_2} p_y)$. For any finite nonempty $S \subseteq \sk$, consider the model $M' = (D, E, C^{a_1+S}, \beta)$. Since $E(d, y) = \emptyset$ for all $y$, $M', d \models K_{a_1} \bot$ (no worlds are accessible), so $M', d \not\models \psi_1$. Thus, $M', d \not\models \psi_1 \wedge \neg \chi_1 \wedge K_{a_1} \boxplus_{a_2} (\neg \psi_2 \vee \chi_2)$. As $S$ is arbitrary, $M_G, d \not\models \phi_G$.

\underline{Base case: $|R| = 1$}. Let $R = \{ \{ d, d'\} \}$, so $n=2$. (The case where $d$ is not a vertex of $R$, i.e., $R = \{ \{ x, y \} \}$ with $x \neq d$ and $y \neq d$, follows similarly to the case where $|R| = 0$.) Player I wins by choosing $\{d, d'\}$, leaving Player II with no moves. In $M_G = (D, E, C, \beta)$, $E(d, d') = E(d', d) = \{s_{\{d, d'\}}\}$, and $E(x, y)=\emptyset$ otherwise. We show $M_G, d \models \phi_G$, with $\phi_G = \phi_2$ as above. Take $S=\{s_{\{d, d'\}}\}$ and $M' = (D, E, C^{a_1+S}, \beta)$:
\begin{itemize}
\item $\psi_1 = \neg K_{a_1}\bot \land \bigvee_{x\in D}K_{a_1} p_x$
\hfill ($M', d \models \psi_1$, for $M', d \models \neg K_{a_1} \bot \wedge K_{a_1} p_{d'}$)
\item $\chi_1 = \bot$
\hfill ($M', d \models \neg \chi_1$)
\item $\psi_2 = \neg K_{a_2} \bot \land \bigvee_{x\in D}K_{a_2} p_x$
\item $\chi_2 = (p_d \land \hat{K}_{a_1} p_{d'} \land K_{a_2} p_{d'}) \vee (p_{d'} \wedge \hat{K}_{a_1} p_{d} \land K_{a_2} p_{d}) \vee \bigvee_{x \neq y \in D \setminus \{d, d'\}} (p_x \wedge \hat K_{a_1} p_y \wedge K_{a_2} p_y)$.
\end{itemize}
For any finite nonempty $S' \subseteq \sk$, let $M'' = (D, E, (C^{a_1+S})^{a_2+S'}, \beta)$, we have one of the following cases:
\begin{enumerate}[label={(\arabic*)}]
\item $S' \not\subseteq S$, then $\forall x \in D$, $(C^{a_1+S})^{a_2+S'}(a_2)\nsubseteq E(d,x)$, hence $M'', d' \models \neg \psi_2$, for $M'', d'\models K_{a_2}\bot$.
\item $S'\subseteq S$, then $M'', d' \models p_{d'} \wedge \hat K_{a_1} p_{d} \wedge K_{a_2} p_d$. Thus, $M'', d'\models \chi_2$ for its middle disjunct is satisfied. 
\end{enumerate}
In both case $M'', d' \models \neg \psi_2 \vee \chi_2$, and so $M', d' \models \boxplus_{a_2} (\neg \psi_2 \vee \chi_2)$, and $M', d \models K_{a_2} \boxplus_{a_2} (\neg \psi_2 \vee \chi_2)$. Together with the verifications above, we have $M_G, d \models \phi_G$.

\underline{Inductive step: $|R| = k \geq 1$}. Assume the lemma holds for all graphs with fewer than $k$ edges.
Left to right. Suppose Player I has a winning strategy, choosing $\{d, d'\}$ as the first move. For the induced model $M_G = (D, E, C, \beta)$, we show $M_G, d \models \phi_G$, where $\phi_G= \diamondplus_{a_1}(\psi_1 \land \neg \chi_1 \land K_{a_1}\phi_{G,\boxplus_{a_2}})$, in which $\phi_{G,\boxplus_{a_2}}$ is the subformula of $\phi_G$ beginning with $\boxplus_{a_2}$ (see Definition~\ref{def:indf}). Take $S=\{s_{\{d,d'\}}\}$ and $M' = (D, E, C^{a_1+S}, \beta)$:
\begin{itemize}
\item $\psi_1=\neg K_{a_1}\bot \land \bigvee_{x\in D}K_{a_1}p_x$
\hfill ($M', d \models \psi_1$, for $M',d\models \neg K_{a_1}\bot \land K_{a_1} p_{d'}$)
\item $\chi_1=\bot$
\hfill ($M', d \models \neg \chi_1$)
\end{itemize}
Now we show $M',d\models K_{a_1}\phi_{G,\boxplus_{a_2}}$; namely, $M',d'\models \phi_{G,\boxplus_{a_2}}$, where $\phi_{G,\boxplus_{a_2}}=\boxplus_{a_2}(\neg \psi_2 \lor \chi_2 \lor \hat{K}_{a_2}\phi_{G,\diamondplus_{a_3}})$ in which $\phi_{G,\diamondplus_{a_3}}$ is the subformula of $\phi_G$ beginning wtih $\diamondplus_{a_3}$. For any finite nonempty $S'\subseteq\sk$, let $M'' = (D, E, (C^{a_1+S})^{a_2+S'}, \beta)$, and it suffices to show that 
\begin{align}\tag{\dag}
M'', d' \models \neg \psi_2 \lor \chi_2 \lor \hat{K}_{a_2}\phi_{G,\diamondplus_{a_3}},
\end{align}
where $\psi_2=\neg K_{a_2}\bot \land \bigvee_{x\in D}K_{a_2}p_x$ and $\chi_2 = \bigvee_{x\neq y\in D}  (p_x \land \hat{K}_{a_1} p_y \land K_{a_2} p_y)$. Consider the possible cases:
\begin{enumerate}[label=(\arabic*)]
\item There does not exist $x \in D$ such that $S' \subseteq E(d',x)$, or
\item There exists $d'' \in D$ such that $S' \subseteq E(d', d'')$ (note that $S'$ must be singleton).
\end{enumerate}
In case (1), $M'',d'\models K_{a_2}\bot$, so $M'',d'\models \neg\psi_2$, hence $(\dag)$ holds. In case (2), 
Player I has a winning strategy in the continued game on $(G_2, d'')$ with $G_2=(D, R \setminus \{ \{d, d'\}, \{d',d''\} \})$ (note that $d''$ cannot be $d$ or $d'$). It suffices to show the following result:
\begin{align}\tag{\ddag}
M'',d''\models \phi_{G,\diamondplus_{a_3}} \iff M_{G_2},d''\models \phi_{G_2}.
\end{align}
Since $M_{G_2}, d'' \models \phi_{G_2}$ holds by the induction hypothesis, by $(\ddag)$, we have $M'',d''\models \phi_{G,\diamondplus_{a_3}}$. This makes the rightmost disjunct of $(\dag)$ true in $M'', d'$, and completes the whole proof.

Let $M_{G_2}=(D,E_2,C,\beta)$. To see $(\ddag)$,
\begin{enumerate}
\item[]$M'',d''\models \phi_{G,\diamondplus_{a_3}}$, i.e., $(D, E, (C^{a_1+S})^{a_2+S'}, \beta),d''\models \phi_{G,\diamondplus_{a_3}}$
\item[iff]
$(D,E_2,(C^{a_1+S})^{a_2+S'},\beta),d''\models \phi'_{G,\diamondplus_{a_3}}$, where $\phi'_{G,\diamondplus_{a_3}}$ is adapted from $\phi_{G,\diamondplus_{a_3}}$ by the following:
\begin{itemize}
\item Delete all occurrences of $\bigvee_{x\neq y\in D}  (p_x \land \hat{K}_{a_1} p_y \land K_{a_i} p_y)$ from $\phi_{G,\diamondplus_{a_3}}$
\item Delete all occurrences of $\bigvee_{x\neq y\in D}  (p_x \land \hat{K}_{a_2} p_y \land K_{a_i} p_y)$ from $\phi_{G,\diamondplus_{a_3}}$
\end{itemize}
(This equivalence holds since $E_2(d, d') = E_2(d', d) = \emptyset$, which implies that any formulas $\hat K_{a_1} \phi$ and $\hat K_{a_2} \phi$ are false in any world $x$ of model $(D,E_2,C',\beta)$, where $C'$ is any capability function updated from $(C^{a_1+S})^{a_2+S'}$ without changing the capabilities of $a_1$ and $a_2$.)
\item[iff] 
$(D,E_2,C,\beta),d''\models \phi''_{G,\diamondplus_{a_3}}$, where $\phi''_{G,\diamondplus_{a_3}}$ a variant of $\phi'_{G,\diamondplus_{a_3}}$ by replacing any $a_{i+2}$ with $a_i$,\\
 (This holds since $(C^{a_1+S})^{a_2+S_2}(a_{i+2})=C(a_i)=\emptyset$; note that $a_1$ and $a_2$ does not exist in $\phi'_{G,\diamondplus_{a_3}}$.)% \hfill (2)
\item[iff] 
$M_{G_2},d''\models\phi_{G_2}$, i.e.,
$(D,E_2,C,\beta),d''\models \phi_{G_2}$.
\hfill (since $\phi_{G_2}=\phi''_{G,\diamondplus_{a_3}}$)% \hfill (3)
\end{enumerate}

Right to left: Suppose Player I has no winning strategy in the UEG game on $(G, d)$, where $G = (D, R)$. We must show that $M_G, d \not\models \phi_G$, with $M_G$ be $(D,E,C,\beta)$ as the induced model. Since Player I lacks a winning strategy, one of two cases holds:
\begin{enumerate}[label=(\alph*)]
\item No $x\in D$ exists such that $\{d, x\} \in R$, so Player I loses immediately.
\item For every $d' \in D \setminus \{d\}$ with $\{d, d'\} \in R$, Player I has no winning strategy after choosing $\{d, d'\}$.
\end{enumerate}

Case (a): If $R$ contains no edges incident to $d$, then $E(d, x)=\emptyset$ for all $x \in D$. We get $M_G, d \not \models \phi_G$ in a way similar to the case when $|R| = 0$.

Case (b): Assume $\{d, d'\} \in R$ exists, but no initial move $\{d, d'\}$ yields a winning strategy for Player I. For any finite nonempty $S \subseteq \sk$, consider $M' = (D, E, C^{a_1+S}, \beta)$ and two subcases:
\begin{enumerate}[label=(\arabic*)]
\item $S\nsubseteq E(d,x)$ for all $x\in D$,
\item Theres exists $d' \in D$ such that $S \subseteq E(d, d')$ (note that $d'$ cannot be $d$).
\end{enumerate}
We need to show $M_G, d \not \models \phi_G$ where $\phi_G$ is given in Definition~\ref{def:indf}. Let $M' = (D, E, C^{a_1+S}, \beta)$. In subcase (1), since $M' ,d \models K_{a_1} \bot$, $M', d \not \models \psi_1$ (with $\psi_1=\neg K_{a_1}\bot \land \bigvee_{x\in D}K_{a_1}p_x$), and so $M ,d \not\models \phi_G$. 

In subcase (2) (under the case (b)), there must exist $d''\in D\setminus\{d,d'\}$ such that Player I does not have a winning strategy in the game on $(G_2, d'')$ where $G_2 = (D, R \setminus \{ \{d, d'\}, \{d',d''\} \})$; for otherwise Player I has a winning strategy (this is also the case when there is no such a $d''$), leading to a contradiction. Let $S'=\{s_{\{d', d''\}} \}$, then $S'\subseteq E(d',d'')$. Let $M'' = (D, E, (C^{a_1+S})^{a_2+S'}, \beta)$. It suffices to show that
\begin{align}\tag{*}
M'', d' \not \models \neg \psi_2 \lor \chi_2 \lor \hat{K}_{a_2}\phi_{G,\diamondplus_{a_3}},
 \end{align}
Consider $\psi_2=\neg K_{a_2}\bot \land \bigvee_{x\in D}K_{a_2}p_x$. Since $M'',d'\models \neg K_{a_2}\bot \land K_{a_2} p_{d''}$, we have $M'',d' \not\models \neg \psi_2$.
As for $\chi_2 = \bigvee_{x\neq y\in D}  (p_x \land \hat{K}_{a_1} p_y \land K_{a_2} p_y)$, since $M'',d'\models \hat{K}_{a_1}p_y \land K_{a_2} p_y$ implies $y=d\neq d''=y$, we have $M'',d' \not\models \chi_2$.
Finally we show that $M'', d' \not \models \hat{K}_{a_2}\phi_{G,\diamondplus_{a_3}}$. Since there is exact one $x \in D$ (which must be $d''$ by the definition of $S'$) such that $S' \subseteq E(d',x)$, it suffices to prove $M'',d'' \not\models \phi_{G,\diamondplus_{a_3}}$. Note that $(\ddag)$ from the proof of the converse direction can also be shown here, it suffices to show that $M_{G_2}, d'' \not\models \phi_{G_2}$, and this holds by the induction hypothesis.
\end{proof}

\begin{cor}\label{lem:red-gg2cua}
The undirected edge geography (UEG) problem is polynomial-time reducible to the model checking problem for $\l_{\boxplus}$.
\end{cor}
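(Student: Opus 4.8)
The plan is to read the corollary off the preceding lemma together with a routine check that the map $(G,d)\mapsto(M_G,d,\phi_G)$ is a polynomial-time many-one reduction into the model checking problem for $\l_{\boxplus}$. By that lemma, $M_G,d\models\phi_G$ holds exactly when Player~I has a winning strategy in the UEG game on $(G,d)$, so correctness of the reduction is already in hand; what remains is to verify (i) that $\phi_G$ is a formula of the fragment $\l_{\boxplus}$, and (ii) that $M_G$, the world $d$, and $\phi_G$ can be computed from $(G,d)$ in time polynomial in $|D|+|R|$.

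For (i), I would simply inspect Definition~\ref{def:indf}: the pieces $\psi_i$ and $\chi_i$ involve only atomic propositions, Boolean connectives, the constant $\bot$, the knowledge operators $K_{a_i}$ and their duals $\hat K_{a_i}=\neg K_{a_i}\neg$, while each $\phi_i$ adds only $\boxplus_{a_i}$ and its dual $\diamondplus_{a_i}$. No group-knowledge modality ($C_G,D_G,E_G,F_G$), no other update modality ($(+_S)_a,(-_S)_a,(=_S)_a,(\equiv_b)_a$), and neither of the remaining quantifiers ($\boxminus_a,\Box_a$) occurs; since $\hat K_a$ and $\diamondplus_a$ are non-primitive duals of operators already in $\l_{\boxplus}$, it follows that $\phi_G=\phi_n\in\l_{\boxplus}$.

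For (ii), I would bound the sizes. The induced model $M_G=(D,E,C,\beta)$ has $|D|$ worlds; $E$ is an $O(|D|^2)$-entry table with each entry a singleton skill or $\emptyset$; $\beta$ has size $O(|D|)$; and $C$ is identically $\emptyset$, so, restricting to the agents actually named in $\phi_G$ (namely $a_1,\dots,a_n$ with $n$ the least even integer $\geq|R|$), its written representation has size $O(|R|)$. Hence $|M_G|$ is polynomial and $M_G$ is constructible in polynomial time. For the formula, the length estimates already established give $|\psi_i|\in O(|D|)$ and $|\chi_i|\in O(|D|^2\cdot i)\subseteq O(|D|^2|R|)$, and $\phi_n$ nests $n=O(|R|)$ such blocks, so $|\phi_G|\in O(|D|^2|R|^2)$; the nested construction of Definition~\ref{def:indf} is plainly carried out in polynomial time. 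Combining (i), (ii) and the lemma then yields the claimed polynomial-time reduction, and hence the corollary.

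I do not expect a genuine obstacle here, since the substantive content has been discharged by the lemma; the bookkeeping is all that is left. The one point that deserves a moment's care is the observation in (ii) that only polynomially many (in fact $O(|R|)$) agents are mentioned in $\phi_G$, so that the relevant part of the capability function can be written down in polynomial space---without that remark one might worry about the ``possibly infinite set of agents'' flagged in the convention on input size.
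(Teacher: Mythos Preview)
Your proposal is correct and matches the paper's approach: the corollary is treated there as an immediate consequence of the preceding lemma together with the polynomial size bounds on $M_G$ and $\phi_G$ (namely $|E|\in O(|D|^2)$, $|\beta|\in O(|D|)$, $|C|\in O(|R|)$, and $|\phi_G|\in O(|D|^2|R|^2)$) that the paper records in the discussion surrounding the definitions of the induced model and induced formula. Your explicit verification that $\phi_G$ lies in the fragment $\l_{\boxplus}$ and your remark about restricting $C$ to the finitely many agents $a_1,\dots,a_n$ are sensible points of care that the paper leaves implicit, but they do not constitute a different route.
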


\begin{rem}\label{rem:red}
The reduction outlined in the preceding lemma relies solely on the modalities $\boxplus$ and $\diamondplus$. An alternative reduction can be formulated using only $\Box$ and $\Diamond$, mirroring the original structure but substituting $\boxplus$ with $\Box$. Similarly, a reduction employing exclusively $\boxminus$ and $\diamondminus$ is viable, replacing $\boxplus$ with $\boxminus$ and adjusting the skill assignment in the induced model $M_G$ such that $C(a_i) = \{ s_{\{w, v\}} \mid w, v \in D \}$ for each agent $a_i$. Consequently, the model checking problems for any logic (extending \l) incorporating at least one of the quantifying modalities $\boxplus$, $\boxminus$, $\Box$, $\diamondplus$, $\diamondminus$, or $\Diamond$ are PSPACE hard, even when additional modalities---such as group knowledge operators and update modalities---are excluded from the logic.
\end{rem}

\begin{lem}\label{thm:complexity-cua}
The model checking problem for $\l_{CDEF+-=\equiv\boxplus\boxminus\Box}$ is in PSPACE.
\end{lem}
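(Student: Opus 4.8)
The plan is to obtain a polynomial-space recursive procedure by extending the polynomial-time truth-set procedure for $\l_{CDEF+-=\equiv}$ (Algorithm~\ref{alg:val-dcx} together with the update-modality cases) with three new cases, for $\boxplus_a\psi$, $\boxminus_a\psi$ and $\Box_a\psi$. The only genuine difficulty is that these quantifiers range semantically over \emph{all} $S\subseteq\sk$, and $\sk$ is infinite; the first and hardest step is therefore a \emph{finite-support lemma}: for a fixed formula $\phi$ and a fixed finite model $M$, the quantifiers occurring in $\phi$ may be restricted to range over the subsets of one fixed finite set $\sk_0$, where $\sk_0$ comprises every skill occurring in $\phi$ (in the subscript of an update modality), every skill occurring in $M$ (in $E$ or in $C$), and $d$ further pairwise distinct \emph{fresh} skills, with $d$ the maximal nesting depth of quantifiers in $\phi$. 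Since the number of skill symbols in $\phi$ and in $M$ is bounded by $|\phi|$ and by $|M|$, the size $|\sk_0|$ is polynomial in $|\phi|+|M|$.

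To prove the finite-support lemma I would show that $M,w\models\phi$ iff $M,w\models^{\sk_0}\phi$, where $\models^{\sk_0}$ is the variant satisfaction relation in which $\boxplus_a\psi$, $\boxminus_a\psi$ and $\Box_a\psi$ quantify over $S\subseteq\sk_0$, by induction on $\phi$; the Boolean, knowledge, group-knowledge and update-modality cases are immediate from the induction hypothesis. For the quantifier case the key observation is that a skill not occurring in $M$ is never a member of any $E(x,y)$, so as soon as it enters an agent's capability set it empties that agent's accessible worlds, and in this respect it is \emph{interchangeable} with any other such ``junk'' skill. Made precise, this yields a renaming-and-collapsing invariance --- the truth value of a formula $\theta$ at $(M,u)$ depends on the capability function only through the non-junk parts together with, for each world pair and each group mentioned in $\theta$, the emptiness of the junk parts of the relevant capability sets and of their unions and intersections --- so an arbitrary choice $S\subseteq\sk$ made by the outermost quantifier can be replaced by $S\cap(\sk_0\setminus\{f_1,\dots,f_d\})$ augmented by at most the single fresh skill $f_1$, without changing the truth of the body; the fresh skills $f_2,\dots,f_d$ are kept in reserve for the strictly shallower quantifier nestings inside the body, and the depth bound $d$ supplies exactly enough of them.

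Granting the lemma, the algorithm is routine. For $\phi=\boxplus_a\psi$ it returns
\[
Val(M,\boxplus_a\psi)=\bigcap_{S\subseteq\sk_0}Val\bigl((W,E,C^{a\cup S},\beta),\psi\bigr),
\]
computed by running through the at most $2^{|\sk_0|}$ subsets $S$ with a counter of $|\sk_0|$ bits, recursively computing $Val(\cdot,\psi)$ on the updated model for the current $S$, intersecting the result into an accumulator initialised to $W$, and discarding it before the next $S$; the cases $\boxminus_a$ and $\Box_a$ are identical with $C^{a\cup S}$ replaced by $C^{a\cap S}$ and $C^{a=S}$. Every model passed to a recursive call has the same $W$, $E$ and $\beta$ as $M$ and a capability function taking values in $\wp(\sk_0)$, hence has size polynomial in $|M|+|\phi|$; the recursion depth is bounded by the modal/quantifier nesting depth of $\phi$, hence by $|\phi|$, and each activation record stores only one such model, one subset (or world) counter, one accumulated truth set, and a pointer into $\phi$, all of polynomial size, so the whole computation runs in polynomial space, and testing $w\in Val(M,\phi)$ decides $M,w\models\phi$. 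I expect the finite-support lemma to be the main obstacle: care is needed because update subscripts inside $\phi$ may already name skills absent from $M$, because the fresh-skill budget must be partitioned correctly between a quantifier and the quantifiers nested under it, and above all because field knowledge $F_G$ tests the \emph{intersection} of capability sets against an edge, so the collapsing argument must never let a reused fresh skill enlarge such an intersection in a way the restricted semantics cannot reproduce; once the lemma is in place the space analysis is pure bookkeeping.
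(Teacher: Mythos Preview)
Your proposal is correct and follows essentially the same approach as the paper: restrict the quantifiers to range over subsets of a fixed finite skill set (the skills already in the model and formula, plus a supply of fresh ones), then enumerate all such subsets in polynomial space. The paper's Algorithm~\ref{alg:val-box} does exactly this, taking $S_2=\bigl(\bigcup_{w,v}E(w,v)\bigr)\cup\bigl(\bigcup_{a\text{ in }\phi}C(a)\bigr)\cup\{s\}$ with one fresh $s$; because $S_2$ is recomputed at each recursive call on the updated model, a fresh skill is effectively introduced at every quantifier level, matching your upfront budget of $d$ fresh skills.

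One point worth noting: you are more explicit than the paper about the finite-support lemma and, in particular, about why $F_G$ forces more than one fresh skill overall---if two nested quantifiers over different agents in $G$ reused the same fresh skill, the intersection $\bigcap_{a\in G}C(a)$ could spuriously contain it. The paper's proof simply presents the algorithm and asserts the space bound without arguing correctness of the restriction to $S_2$; your sketch of the renaming-and-collapsing invariance is the missing justification, and your identification of the $F_G$ case as the delicate one is exactly right.
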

\begin{proof}
Given Algorithm~\ref{alg:val} for model checking in the basic logic \l, Algorithm~\ref{alg:val-dcx} for group knowledge operators, and an argument for reducing update modalities in Section~\ref{sec:MC-U}, it suffices to extend with a polynomial-space algorithm for formulas of the form $\boxplus_a \psi$, $\boxminus_a \psi$, and $\Box_a \psi$. This extension is provided in Algorithm~\ref{alg:val-box}.

\begin{algorithm}
\caption{Function $Val((W,E,C,\beta),\phi)$ Extended: Cases with Quantifiers}
\label{alg:val-box}
	\footnotesize
	\begin{algorithmic}[1]
	\Initialize{$temVal \gets \emptyset$}
	\Initialize{$S_1 \gets (\bigcup_{w,v\in W}E(w,v))\cup(\bigcup_{a\text{ appears in }\phi}C(a))$} 
	\Initialize{$S_2 \gets S_1\cup \{s\}$} \Comment{Here $s\in\sk$ is new for $S_1$}
	\If{...} ... \Comment{Same as in Algorithm~\ref{alg:val-dcx}}
	\ElsIf{$\phi=\boxplus_a\psi$}
	\ForAll{$t\in W$}
	\Initialize{$n \gets \True$}
	\ForAll{$S \subseteq S_2$}
	\If{$t \notin Val((W,E,C^{a \cup S},\beta),\psi)$} {$n \gets \False$}
	\EndIf
	\EndFor
	\If{$n = \True$} $tmpVal \gets tmpVal \cup \{t\}$
	\EndIf
	\EndFor
	\State \Return $tmpVal$ \quad \Comment{Returns $\{t \in W \mid \forall S \subseteq S_1: t\in Val((W,E,C^{a \cup S},\beta),\psi)\}$}
	\ElsIf{$\phi=\boxminus_a\psi$}
	\ForAll{$t \in W$}
	\Initialize{$n \gets \True$}
	\ForAll{$S \subseteq S_2$}
	\If{$t\notin Val((W,E,C^{a \setminus S},\beta),\psi)$} {$n \gets \False$}
	\EndIf
	\EndFor
	\If{$n = \True$} $tmpVal \gets tmpVal \cup \{t\}$
	\EndIf
	\EndFor
	\State \Return $tmpVal$ \quad \Comment{Returns $\{t\in W\mid \forall S\subseteq S_1: t\in Val((W,E,C^{a \setminus S},\beta),\psi)\}$}
	\ElsIf{$\phi=\Box_a\psi$}
	\ForAll{$t\in W$}
	\Initialize{$n \gets \True$}
	\ForAll{$S \subseteq S_2$}
	\If{$t \notin Val((W,E,C^{a=S},\beta),\psi)$} {$n \gets \False$}
	\EndIf
	\EndFor
	\If{$n = \True$} $tmpVal \gets tmpVal \cup \{t\}$
	\EndIf
	\EndFor
	\State \Return $tmpVal$ \quad \Comment{Returns $\{t\in W\mid \forall S\subseteq S_1: t\in Val((W,E,C^{a=S},\beta),\psi)\}$}
	\EndIf
	\end{algorithmic}
\end{algorithm}

To confirm the space complexity, consider the resource usage of $Val((W, E, C, \beta), \phi)$. The space cost of checking $Val((W,E,C,\beta),\phi)$ is in $O(|M| \cdot |\phi|)$, polynomial in the input size.  Since Algorithm~\ref{alg:val-dcx} is in PSPACE and the extension for $\boxplus$, $\boxminus$, and $\Box$ operates in polynomial space, the model checking problem for $\lang_{CDEF+-=\equiv\boxplus\boxminus\Box}$ is in PSPACE.
\end{proof}

The following result is derived from Corollary~\ref{lem:red-gg2cua} and Remark~\ref{rem:red}, which together establish a polynomial-time reduction from the PSPACE-complete undirected edge geography (UEG) problem to the model checking problems for $\logic{\boxplus}$, $\logic{\boxminus}$, and $\logic{\Box}$, and from Lemma~\ref{thm:complexity-cua}, which demonstrates that the model checking problem for $\logic{CDEF+-=\equiv\boxplus\boxminus\Box}$ is in PSPACE.

\begin{thm}\label{lem:mc-lcua}
The model checking problem for any logic that extends the base logic \l by including at least one quantifier modality from $\{\boxplus, \boxminus, \Box\}$ is PSPACE complete.
\end{thm}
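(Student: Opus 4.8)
The plan is to sandwich every logic in the statement between a PSPACE lower bound and a PSPACE upper bound, both of which are already available from the preceding results, and then to argue that this sandwich applies uniformly to all the (exponentially many) logics obtained by adjoining at least one quantifier to \l.

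For the lower bound I would invoke Corollary~\ref{lem:red-gg2cua} together with Remark~\ref{rem:red}: the UEG problem reduces in polynomial time to model checking for $\l_{\boxplus}$, and the analogous reductions of Remark~\ref{rem:red} handle $\l_{\boxminus}$ and $\l_{\Box}$ (with the skill assignment of the induced model adjusted as indicated there). Since determining a winning strategy in UEG is PSPACE-hard \cite{FSU1993}, each of $\l_{\boxplus}$, $\l_{\boxminus}$, $\l_{\Box}$ already has a PSPACE-hard model checking problem. Now let $\l'$ be any logic extending \l that contains at least one of $\boxplus$, $\boxminus$, $\Box$. The point to stress is that the reduction in Corollary~\ref{lem:red-gg2cua} (or its variants in Remark~\ref{rem:red}) produces only formulas of the relevant one-quantifier fragment, and the semantics of such a formula is unaffected by the presence of further operators in the language; hence the very same reduction witnesses PSPACE-hardness of model checking for $\l'$.

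For the upper bound I would note that every such $\l'$ is a sublogic of $\l_{CDEF+-=\equiv\boxplus\boxminus\Box}$, whose model checking problem is in PSPACE by Lemma~\ref{thm:complexity-cua}; restricting the input formula to a sublanguage only shrinks the set of instances, so model checking for $\l'$ is in PSPACE as well. Combining the two bounds yields PSPACE-completeness for every logic described in the statement.

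There is no real obstacle here, only a bookkeeping subtlety that should be made explicit: the hardness argument must be seen to live inside the \emph{smallest} relevant fragment (one quantifier, nothing else), while the membership argument applies to the \emph{largest} one (all group-knowledge, update and quantifying modalities at once), so that the two together pinch every intermediate logic. I would therefore phrase the proof so that the quantifier witnessing hardness is chosen according to which of $\boxplus, \boxminus, \Box$ occurs in $\l'$, and only afterwards appeal to Lemma~\ref{thm:complexity-cua} for the matching upper bound.
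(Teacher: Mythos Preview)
Your proposal is correct and matches the paper's own argument essentially verbatim: the paper derives the theorem directly from Corollary~\ref{lem:red-gg2cua} and Remark~\ref{rem:red} for the PSPACE lower bound on the minimal one-quantifier fragments, together with Lemma~\ref{thm:complexity-cua} for the PSPACE upper bound on the maximal logic, sandwiching every intermediate logic. Your explicit bookkeeping remark about choosing the witnessing quantifier and pinching between smallest and largest fragments is exactly the implicit reasoning the paper relies on.
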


\section{Complexity of the Satisfiability Problem}
\label{sec:sat}

This section examines the computational complexity of the satisfiability problem for some of the logics introduced in earlier sections. The \emph{satisfiability problem} for a logic is about determining whether a given formula $\phi$ is satisfiable---that is, whether there exists a model $M$ and a world $w$ within that model such that $M, w \models \phi$. The \emph{size} of the input formula $\phi$ is defined as its \emph{length}, denoted $|\phi|$, which is defined in the previous section.

\subsection{Satisfiability for logics without common knowledge, update and quantifying modalities: PSPACE complete}
\label{sec:sat-pspace}

The complexity of satisfiability for the logics under consideration is established through reductions to and from known results, summarized in Figure~\ref{fig:sat-psp}. These logics exclude common knowledge ($C_G$), update modalities ($(+_S)_a$, $(-_S)_a$, $(=_S)_a$, $(\equiv_b)_a$), and quantifying modalities ($\boxplus_a$, $\boxminus_a$, $\Box_a$), focusing on logics based on subsets of $\lang_{CDEF}$, such as $\l$, $\logic{D}$ and $\logic{DEF}$.

The results will be shown by reductions to and from known complexity results, and are summarized in Figure~\ref{fig:sat-psp}.

\begin{figure}[htbp]
\centering
\begin{tikzpicture}[
	modal,
	node distance=1cm and 2.3cm,
	world/.style={ellipse, draw, minimum width=1cm, minimum height=.5cm, inner sep=1ex},
 	rectworld/.style={rectangle, draw, minimum width=2cm, minimum height=.5cm, inner sep=1ex},
  	font=\footnotesize
]
\node[rectworld] (w1) {$\dfrac{\kbone}{\text{PSPACE complete}}$};
\node[world] (w2) [right=of w1] {\l};
\node[world] (w3) [right=of w2] {\logic{DEF}};
\node[world] (w4) [below=of w2] {\logic{D}};
\node[rectworld] (w5) [left=of w4] {$\dfrac{\kdn \ (n \geq 1)}{\text{PSPACE complete}}$};
\path[->, dashed] (w1) edge node[above, font=\tiny] {PTIME} node[below, font=\tiny] {(Lemma~\ref{lem:sat-KBtoL})} (w2);
\path[->, solid] (w2) edge (w3);
\path[->, dashed] (w3) edge node[above, sloped, font=\tiny] {PTIME} node[below, sloped, font=\tiny] {(Lemma~\ref{lem:red-toD})} (w4);
\path[->, dashed] (w4) edge node[above, font=\tiny] {PTIME} node[below, font=\tiny] {(Lemma~\ref{lem:sat-DtoKD})} (w5);
\end{tikzpicture}
\caption{Roadmap of proofs for the complexity of satisfiability problems for logics between $\l$ and $\logic{DEF}$. Logics under study are in elliptical frames, while known PSPACE-complete satisfiability problems are in rectangular frames. A solid arrow from one logic to another represents the satisfiability problem for the former logic as a subproblem of the satisfiability problem for the latter. A dashed arrow labeled ``PTIME'' from one logic to another indicates a polynomial-time reduction from the satisfiability problem for the former to that for the latter. References: \kdn from \cite[Section 3.5]{FHMV1995} (subscript denotes the number of agents); \kbone is folklore, with a proof in \cite{Sahlqvist1975} (named ``KB,'' citing a 1992 manuscript).
}\label{fig:sat-psp}
\end{figure}

\subsubsection{Reduction from \texorpdfstring{\kbone}{KB1} to \texorpdfstring{\l}{L}}

The satisfiability of any \lang-formula $\phi$ involving only one agent (let it be $a \in \ag$, the language hereafter referred to as ``single-agent \lang'') is shown to be equivalent in the logic \l and in \kbone, the classical mono-modal logic over symmetric frames. This equivalence is formalized in Lemma~\ref{lem:sat-KBtoL}. The satisfiability problem for \kbone is known to be PSPACE complete, as established in \cite{Sahlqvist1975} (denoted ``KB'' therein, with a proof attributed to a 1992 manuscript). Consequently, the satisfiability problem for \l is PSPACE hard.

Recall that an (epistemic) Kripke model is a triple $(W, R, V)$, where $W$ is a nonempty set of worlds, $R: \ag \to \wp(W \times W)$ assigns every agent a binary relation on $W$, and $V : W \to \wp(\pr)$ is a valuation. For a single-agent \lang-formula $K_a \phi$ , $M, w \models K_a \phi$ in a Kripke model $M = (W, R, V)$ if, for all $u \in W$, $(w, u) \in R(a)$ implies $M, u \models \phi$. A Kripke model $(W, R, V)$ is called \emph{symmetric} if $R$ is symmetric for all $a \in \ag$.

\begin{lem}\label{lem:sat-KBtoL}
\begin{enumerate}
\item Given a single-agent \lang-formula $\phi$, $\phi$ is \l-satisfiable if and only if $\phi$ is \kbone-satisfiable.
\item The satisfiability problem for \textrm{KB}$_1$ is polynomial-time reducible to that for $\l$.
\end{enumerate}
\end{lem}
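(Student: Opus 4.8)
The plan is to observe that, in the single-agent fragment, the pair $(E, C)$ contributes to truth only through the single induced relation $\{(u,v) \mid C(a) \subseteq E(u,v)\}$, that symmetry of $E$ forces this relation to be symmetric, and conversely that any symmetric relation can be realized in this way by a legitimate weighted model. Since a \kbone-formula is literally a single-agent \lang-formula once its box is read as $K_a$, both parts then follow from a two-way model transformation together with a straightforward induction on formulas.

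For the forward direction of (1), I would start from a model $M = (W, E, C, \beta)$ with $M, w \models \phi$, where only agent $a$ occurs in $\phi$. Define a Kripke model $N = (W, R, \beta)$ by setting $R(a) = \{(u,v) \in W \times W \mid C(a) \subseteq E(u,v)\}$ (and $R(b)$ arbitrary, say empty, for $b \neq a$). By the symmetry constraint on $E$, $R(a)$ is symmetric, so $N$ is a symmetric Kripke model. A routine induction on the structure of single-agent \lang-formulas gives $M, u \models \psi \iff N, u \models \psi$ for every $u \in W$; the only nontrivial clause is $\psi = K_a\chi$, which works precisely because $C(a) \subseteq E(u,v)$ iff $(u,v) \in R(a)$. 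Hence $N, w \models \phi$, so $\phi$ is \kbone-satisfiable.

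For the converse direction of (1), I would take a symmetric Kripke model $N = (W, R, V)$ with $N, w \models \phi$, fix a single skill $s \in \sk$ (available since $\sk$ is infinite, and note $\{s\} \subsetneq \sk$), and build $M = (W, E, C, \beta)$ by putting $E(u,v) = \{s\}$ if $(u,v) \in R(a)$ and $E(u,v) = \emptyset$ otherwise, $C(a) = \{s\}$ (and $C(b) = \emptyset$ for $b \neq a$), and $\beta = V$. Symmetry of $E$ is inherited from symmetry of $R(a)$, and positivity holds vacuously because $E$ never takes the value $\sk$; so $M$ is a genuine model. Since $C(a) \subseteq E(u,v)$ iff $s \in E(u,v)$ iff $(u,v) \in R(a)$, the same induction as above yields $M, w \models \phi$, so $\phi$ is \l-satisfiable. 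For (2), the reduction is the map that relabels each modal operator of a \kbone-formula as $K_a$ for a fixed agent $a$ (rewriting $\Diamond$ as $\neg K_a \neg$); this is computable in linear time, and by part (1) the output is \l-satisfiable iff the input is \kbone-satisfiable.

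I do not expect a genuine obstacle here: the only points needing care are verifying that the weighted model built in the converse direction satisfies both the positivity and symmetry constraints (immediate from the choice $E \in \{\emptyset, \{s\}\}$ and the symmetry of $R(a)$) and keeping the bookkeeping of the two-directional satisfaction-preservation induction straight.
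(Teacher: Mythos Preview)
Your proposal is correct and follows essentially the same approach as the paper: both directions of (1) go via the induced relation $R(a)=\{(u,v)\mid C(a)\subseteq E(u,v)\}$ and a straightforward structural induction, with the converse realized by encoding $R(a)$ through a single designated skill (the paper uses one skill $s_b$ per agent with $C(b)=\{s_b\}$, you use a single skill $s$ with $C(b)=\emptyset$ for $b\neq a$, which is the same idea specialized to the one relevant agent). Part (2) is likewise handled in both by observing that the \kbone language is already a sublanguage of single-agent \lang, so the reduction is essentially the identity.
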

\begin{proof}
(1) From left to right. Suppose $\phi$ is satisfied at a world $w$ in a model $M = (W, E, C, \beta)$. Construct a \kbone model $N = (W, R, V)$ where $R(a) = \{ (x, y) \in W \times W \mid C(a) \subseteq E(x, y) \}$ and $V = \beta$. By induction on the structure of $\lang$-formulas containing no agents other than $a$, it holds that for any such formula $\psi$ and any $x \in W$, $M, x \models_{\l} \psi$ iff $N, x \models_{\kbone} \psi$. Thus, $N, w \models_{\kbone} \phi$.

From right to left. Suppose $\phi$ is satisfied at a world $w$ in a \kbone model $N = (W, R, V)$, where $R$ is symmetric. For every agent $a$, let $s_a$ be a fixed skill uniquely associated with $a$, i.e., $s_a = s_b$ iff $a = b$ (this is possible since both the agent set $\ag$ and the skill set \sk are countably infinite). Construct a model $M = (W, E, C, \beta)$ where:
\begin{itemize}
\item $E: W \times W \to \wp(\ag)$ where for any $x, y \in W$, $E (x, y) = \{s_a \in \ag \mid (x, y) \in R(a) \}$,
\item $C: \ag \to \wp(\sk)$, with $C(b) = \{ s_b \}$ for all $b \in \ag$,
\item $\beta = V$.
\end{itemize}
Since $R$ is symmetric, $M$ is indeed a model. For any $x, y \in W$ and $b \in \ag$, $(x, y) \in R(b)$ iff $C(b) \subseteq E(x, y)$. By induction on $\lang$-formulas with only agent $a$, for any such $\psi$ and $x \in W$, $N, x \models_{\kbone} \psi$ iff $M, x \models_\l \psi$. Thus, $M, w \models_\l \phi$.

(2) Since \kbone is based on a mono-modal language that is a sublanguage of that of \l, following statement (1), satisfiability in \kbone reduces to that in \l by inclusion.
\end{proof}

\subsubsection{Reduction from \texorpdfstring{\ld}{LD} to \texorpdfstring{\kdn}{KDn}}

A transformation is proposed to rewrite any \langd-formula, satisfiable in the logic \ld, into an \langd-formula satisfiable in \kdn, the multi-agent epistemic logic with distributed knowledge. The complexity of the satisfiability problem for \kdn is known to be PSPACE complete \cite[Section 3.5]{FHMV1995}. Recall that \kdn employs classical Kripke semantics, where, for a Kripke model $N = (W, R, V)$ and world $w \in W$:
\[\begin{tabular}{l@{\ $\iff$\ }l}
$N, w \models_\kdn K_a \psi$ & for all $u \in W$, $(w,u) \in R(a) $ implies $N, u \models_\kdn \psi$ \\
$N, w \models_\kdn D_G\phi$ & for all $u \in W$, $(w, u) \in \bigcap_{a \in G} R(a) $ implies $N, u \models_\kdn \psi$.
\end{tabular}\]

The key distinction between \ld and \kdn is that \ld requires symmetry of relations, whereas \kdn does not. Our reduction transforms an \ld-formula into a \kdn-formula that enforces symmetry syntactically. 
Ideally, we would add the characterization scheme of symmetry as conjuncts to the original formula, but this would require infinitely many conjuncts. To ensure a finite formula—specifically, one of polynomial length relative to the original formula for a polynomial-time reduction—we focus on the \emph{closure} of the original formula, defined as follows:

\begin{defi}[Closure of a formula]
\label{def:closure}
For any formula $\phi$ in any language, the \emph{closure of $\phi$}, denoted $\cl(\phi)$, is the set $\{\neg \psi, \psi \mid \psi \text{ is subformula of } \phi \} \cup \{ \top, \bot \}$.
\end{defi}

\begin{defi}\label{def:rewrite-DtoKD}
Given an $\lang_D$-formula $\phi$, fix a fresh agent $c$ not appearing in $\phi$ which will be used to reduce the number of conjuncts. Define $\rho'(\phi)$ as the $\lang_{D}$-formula obtained by applying the following steps sequentially:
\begin{enumerate}
\item For each agent $a \in \ag$ where $a \neq c$, replace every occurrence of $K_a$ with $D_{\{a, c\}}$;
\item For each group $G \in \gr$, replace every occurrence $D_G$ with $D_{G \cup \{c\}}$.
\end{enumerate}
Define $\rho(\phi)$ as the $\lang_{D}$-formula:
\[ \rho'(\phi) \wedge \textstyle\bigwedge_{0 \leq i \leq |\phi|} K^{i}_c \big(\bigwedge_{\chi \in \mu(\phi)} \chi \big), \]
where $c$ is the fixed fresh agent, $K^0_c \chi := \chi$, $K^n_c \chi := K_c K^{n-1}_c \chi$ (for $n \geq 1$), and $\mu(\phi)$ is the set of formulas comprising, for all $\psi \in \cl(\phi)$, $a$ appearing in $\phi$ or $a=c$, and $G$ appearing in $\phi$:
	\begin{itemize}
	\item $(\rho'(\psi) \ra K_a \neg K_a \neg \rho'(\psi)) \wedge (\neg K_a \neg K_a \rho'(\psi) \ra \rho'(\psi))$, 
	\item $(\rho'(\psi) \ra D_G \neg D_G \neg \rho'(\psi)) \wedge (\neg D_G \neg D_G \rho'(\psi) \ra \rho'(\psi))$,
	\item $D_{\{a, c\}} \rho'(\psi) \lra K_a \rho'(\psi)$ and $D_{G \cup \{c\} } \rho'(\psi) \lra D_G \rho'(\psi)$.
	\end{itemize}
It follows that both $\rho (\phi)$ and $\rho' (\phi)$ are \langd-formulas if $\phi$ is.
\end{defi}

\begin{lem}\label{lem:sat-DtoKD}
\begin{enumerate}
\item Given an \langd-formula $\phi$, $\phi$ is \ld-satisfiable if and only if $\rho(\phi)$ is \kdn-satisfiable;
\item The satisfiability problem for $\l_D$ is polynomial-time reducible to that for \textrm{K}$^D_n$.
\end{enumerate}
\end{lem}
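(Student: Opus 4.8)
The aim is to prove part~(1) as a biconditional, after which part~(2) is immediate: $\rho$ is computable in polynomial time and, as I check below, $|\rho(\phi)|$ is polynomial in $|\phi|$, so $\phi\mapsto\rho(\phi)$ is a polynomial-time many-one reduction of \ld-satisfiability to \kdn-satisfiability. The working observation is that in a weighted model $M=(W,E,C,\beta)$ the operators $K_a$ and $D_G$ are interpreted by the relations $R(a):=\{(x,y):C(a)\subseteq E(x,y)\}$ and $\bigcap_{a\in G}R(a)$ — exactly the pattern of a $\kdn$-model — the only surplus structure being that every $R(a)$ is symmetric (because $E$ is) and that $E(w,u)=\sk$ forces $w=u$. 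The fresh agent $c$ of Definition~\ref{def:rewrite-DtoKD} is the bridge: in any Kripke model in which $R(c)$ is the total relation $W\times W$ one has $\bigcap_{a\in\{a',c\}}R(a)=R(a')$ and $\bigcap_{a\in G\cup\{c\}}R(a)=\bigcap_{a\in G}R(a)$, so the renaming performed by steps~1--2 of the definition (call it $\rho'$) is truth-preserving there, and $K_c$ becomes the universal modality.

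\textbf{From \ld-satisfiability to \kdn-satisfiability.} Suppose $M,w\models_{\ld}\phi$. Since $c$ does not occur in $\phi$ we may assume $C(c)=\emptyset$, so that the induced Kripke model $N:=(W,R,\beta)$ (with $R(a)$ as above) has $R(c)=W\times W$. A routine induction on subformulas — this is exactly the construction of Lemma~\ref{lem:sat-KBtoL}, now also tracking $D_G$ — gives $M,x\models_{\ld}\psi$ iff $N,x\models_{\kdn}\rho'(\psi)$ for every world $x$ and every $\psi\in\cl(\phi)$; in particular $N,w\models\rho'(\phi)$. Moreover every $R(a)$ is symmetric, so each instance of the B-schemes in $\mu(\phi)$ holds throughout $N$; and because $R(c)=W\times W$, each equivalence $D_{\{a,c\}}\rho'(\psi)\leftrightarrow K_a\rho'(\psi)$ and $D_{G\cup\{c\}}\rho'(\psi)\leftrightarrow D_G\rho'(\psi)$ is valid in $N$ and $K_c$ is universal, whence $N,w\models K^i_c\bigl(\bigwedge_{\chi\in\mu(\phi)}\chi\bigr)$ for all $i\le|\phi|$. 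Therefore $N,w\models\rho(\phi)$.

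\textbf{From \kdn-satisfiability to \ld-satisfiability.} This is the substantial direction. Suppose $N,w\models_{\kdn}\rho(\phi)$. First pass to the submodel generated from $w$ by the relations named in $\rho(\phi)$, truncated at depth $|\phi|$, and then unravel it into a finite tree $T$: truncation is harmless because $\rho'(\phi)$ has modal depth at most $|\phi|$ and every modality of $\rho'(\phi)$ uses a group containing $c$, hence a relation contained in $R(c)$, while the conjuncts $K^i_c(\bigwedge_{\chi\in\mu(\phi)}\chi)$ for $0\le i\le|\phi|$ pin every $\mu(\phi)$-instance to every world that the evaluation can reach, and B-schemes are preserved by the unravelling p-morphism. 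Using clause~(c) of $\mu(\phi)$ one rewrites, at every node of $T$, each $\rho'(\psi)$ into the corresponding formula $\psi^{\dagger}$ over the original operators $K_a,D_G$, discarding $c$. Using clauses~(a)--(b) one then replaces every relation by its symmetric closure: the B-scheme first conjunct lets one ``add back the missing edges'' without disturbing the truth of any $\psi^{\dagger}$, and because $T$ is a tree every edge is between a node and its parent, so $\bigcap_{a\in G}(R^{+}(a))$ coincides with the symmetric closure of $\bigcap_{a\in G}R(a)$ — which is why the $D_G$ case behaves. The result is a model with all relations symmetric; assigning a fresh skill $s_a$ to each agent $a$ occurring in $\phi$, setting $C(a)=\{s_a\}$, $\beta=V$, and putting $s_a\in E(x,y)$ exactly when $(x,y)$ lies in the symmetric closure of $R(a)$, yields a genuine weighted model $M$ (the edge function is symmetric and, being finite-valued, never equals $\sk$) with $M,x\models_{\ld}\psi$ iff the tree satisfies $\rho'(\psi)$ at $x$, for all $x$ and all $\psi\in\cl(\phi)$. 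Specializing to $\psi=\phi$ gives $M$-satisfiability of $\phi$.

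\textbf{Part~(2) and the main obstacle.} That $|\rho(\phi)|$ is polynomial is a direct count: $\bigwedge_{\chi\in\mu(\phi)}\chi$ has polynomially many conjuncts — indexed by $\psi\in\cl(\phi)$, an agent, and a group, each ranging over $O(|\phi|)$ choices — each of size $O(|\phi|)$, and it is prefixed only by $K^i_c$ for $i\le|\phi|$; together with the correctness established in part~(1) this gives the reduction. I expect the main obstacle to be the right-to-left direction, and inside it the depth bookkeeping: one must verify that the finitely many prefixes $K^i_c$ (bounded by $|\phi|$) make the $\mu(\phi)$-schemes available at precisely those nodes of the truncated unravelling that the rewriting $\rho'(\psi)\mapsto\psi^{\dagger}$ and the subsequent symmetrization actually consult (the symmetric edges let the evaluation wander up and down the tree, so the region is not simply ``everything below the root''), so that the classical B-axiom argument goes through uniformly for every $K_a$ and every $D_G$ at once and within a bounded, finite region.
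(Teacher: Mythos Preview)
Your plan is essentially the paper's: unravel the $\kdn$-model into a tree, use the B-scheme conjuncts in $\mu(\phi)$ to justify symmetrizing, and rely on the $K_c^i$ prefixes (with $i\le|\phi|$) to make those schemes available at the right depths. The paper executes the hard direction more directly than you do: instead of separately truncating, unravelling to a Kripke tree, symmetrizing relations, and then converting to a weighted model, it builds the weighted model in one shot. Its domain $W_1$ consists of all finite sequences $\langle(w,\{c\}),(w_1,G_1),\dots,(w_n,G_n)\rangle$ with each $w_i$ reachable from $w$ via $R(c)$; the edge function is declared symmetric from the start by setting $E(\sigma,\sigma')=G$ whenever one sequence extends the other by $(\cdot,G)$ \emph{and} the two tails are $\bigcap_{a\in G}R(a)$-related in either direction; and $C(a)=\{a\}$. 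The depth bookkeeping you flag as the main obstacle is then handled by restricting the induction to pairs $(\sigma,\psi)$ with $\mathrm{length}(\sigma)+\mathrm{depth}(\psi)\le|\phi|$, rather than by truncating the model.

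One caution about your version: the ``truncate at depth $|\phi|$ and unravel to a finite tree'' step is shakier than it looks. The tree need not be finite (infinite branching), and truncation is not a p-morphism, so the transfer of the $\mu(\phi)$-instances to leaf nodes needs separate justification. It is cleaner to unravel without truncation and push the bound into the inductive claim, exactly as the paper does; your observation that in a tree $\bigcap_{a\in G} R^{+}(a)$ equals the symmetric closure of $\bigcap_{a\in G} R(a)$ is correct and is what makes the $D_G$ case go through in either organization.
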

\begin{proof}
(1)
From left to right. Suppose $\phi$ is satisfied at a world $w$ in a model $M = (W, E, C, \beta)$. It can be shown by induction on $\phi$ that $M^{c=\emptyset}, w \models_\ld \rho (\phi)$: just to observe that for any $u, v \in W$, any agent $a$ and any $G$ appearing in $\phi$, $C(a) = C^{c=\emptyset}(c) \cup C^{c=\emptyset}(a)$ (hence $M, w \models_\ld K_a \psi \iff M^{c = \emptyset}, w \models_\ld D_{\{c, a\}} \psi$ for any $\psi$ such that $M, w \models_\ld \psi \iff M^{c = \emptyset}, w \models_\ld \psi$) and $\bigcup_{b \in G}C(b) = C^{c=\emptyset}(c) \cup \bigcup_{b \in G}C^{c=\emptyset}(b)$ (hence $M, w \models_\ld D_G \psi \iff M^{c = \emptyset}, w \models_\ld D_{G \cup \{c\}} \psi$ for any $\psi$ such that $M, w \models_\ld \psi \iff M^{c = \emptyset}, w \models_\ld \psi$), and that $M^{c = \emptyset}, w \models \bigwedge_{0 \leq i \leq |\phi|} K^{i}_c \big(\bigwedge_{\chi \in \mu(\phi)} \chi \big)$.
Let $N = (W, R, V)$ be a Kripke model such that $V = \beta$ and for every $a \in \ag$, $R(a) = \{ (x, y) \in W \times W \mid C^{c = \emptyset}(a) \subseteq E(x, y)\}$. For any $u, v \in W$ and $G \in \gr$, it follows that $(u, v) \in \bigcap_{a\in G} R(a)$ iff $\bigcup_{a \in G} C^{c=\emptyset}(a) \subseteq E(u, v)$. By induction, it can be shown that for any $\lang_D$-formula $\psi$ and any $x \in W$, $M^{c=\emptyset}, x \models_\ld \psi$ iff $N, x \models_\kdn \psi$. Thus, $N, w \models_\kdn \rho(\phi)$, and so $\rho(\phi)$ is \kdn-satisfiable.

From right to left. Suppose that $\rho(\phi)$ is satisfied at a world $w$ of a model $N = (W, R, V)$, i.e., $N, w \models_\kdn \rho(\phi)$. Define $W_0 = \{ (u, G) \mid G \in \gr,\, u \in W \text{ and } (w, u) \in R^+(c) \} \cup \{(w, \{c\})\}$, where $R^+_c$ is the transitive closure of $R(c)$. Let $W_1$ be the set of finite sequences of elements of $W_0$ starting with $(w, \{c\})$. An element $\sigma$ of $W_1$ is of the form $\langle (w, \{c\}), (w_1, G_1), \dots, (w_n, G_n) \rangle$. The first element of the tail of $\sigma$, i.e., $w_n$, which is a world, is denoted $\tail (\sigma)$.
Construct a model $M = (W_1, E, C, \beta)$, where:%
\footnote{\label{ft}Agents are treated as skills for convenience, which is permissible since both $\ag$ and $\sk$ are countably infinite. Alternatively, this can be achieved by associating each agent $a \in \ag$ with a unique skill $s_a \in \sk$, as used in the proof of Lemma~\ref{lem:sat-KBtoL}.}
\begin{itemize}
\item $E: W_1 \times W_1 \to \wp(\sk)$ where for any $\sigma, \sigma' \in W$,
\[
E(\sigma, \sigma') = \left\{
\begin{array}{ll}
G, & \text{if $(\dag_1)$ and $(\dag_2)$,}
\\
\emptyset, & \text{otherwise;}
\end{array}
\right.
\]
\begin{enumerate}
\item[$(\dag_1)$] Either $\sigma$ extends $\sigma'$ with $(\tail (\sigma),G)$ or $\sigma'$ extends $\sigma$ with $(\tail(\sigma),G)$;
\item[$(\dag_2)$] Either $(\tail(\sigma), \tail (\sigma')) \in \bigcap_{a\in G} R(a)$ or $(\tail (\sigma'), \tail (\sigma)) \in \bigcap_{a\in G} R(a)$;
\end{enumerate}
\item $C: \ag \to \wp(\sk)$, with $C(a) = \{ a \}$ for all $a \in \ag$;
\item $\beta: W_1 \to \wp(\pr)$ is defined as $\beta(\sigma) = V(\tail(\sigma))$ for any $\sigma \in W_1$.
\end{itemize}

By induction on $\psi \in \cl(\phi)$, for $\sigma \in W_1$ of length $n$ and $\psi$ of modal depth $k$ where $n + k \leq |\phi| $, it holds that $N, \tail(\sigma) \models_\kdn \rho'(\psi) \iff M, \sigma \models_\ld \psi$. Consequently, since $N, w \models_{\kdn} \rho(\phi)$ and $\rho(\phi)$ includes $\rho'(\phi)$, it follows that $M, \langle (w, {c}) \rangle \models_\ld \phi$, establishing that $\phi$ is \ld-satisfiable.
\begin{itemize}[wide]
\item Atomic and Boolean cases are easy to verify.

\item $\psi = K_a \chi$: $\rho'(\psi)= D_{\{a, c\}} \rho'(\chi)$.
\underline{Left to right.} Suppose $M, \sigma \not \models_\ld K_a \chi$, where $\sigma$ has length $n$ and $K_a \chi$ has modal depth $k$ with $n + k \leq |\phi|$. Then, there exists $\sigma' \in W_1$ such that $\{a\} \subseteq E(\sigma,\sigma')$ and $M, \sigma \not\models_\ld \chi$. Since either $\sigma'$ extends $\sigma$ with one pair, or $\sigma$ extends $\sigma'$ with one pair, $\sigma'$ has length $n+1$ or $n-1$, $\chi$'s modal depth is $k-1$, so the sum $\leq |\phi|$. By the induction hypothesis, $N, \tail(\sigma') \not\models_\kdn \rho'(\chi)$. Since $\{a\} \subseteq E(\sigma,\sigma')$, by the definition of $E$, there exists $a \in G \in \gr$ such that either $(\tail(\sigma), \tail(\sigma')) \in \bigcap_{a\in G} R(a)$ or $(\tail (\sigma'), \tail (\sigma)) \in \bigcap_{a\in G} R(a)$. In the former case, $N, \tail(\sigma) \not \models_\kdn K_a \rho'(\chi)$ by Kripke semantics. In the latter case, from $N, w \models_\kdn \rho(\phi)$ and Definition~\ref{def:rewrite-DtoKD}(3a), it follows that $N, w \models_\kdn \bigwedge_{0 \leq i \leq |\phi|} K^i_c (\neg K_a \neg K_a \rho'(\chi) \ra \rho'(\chi))$. Hence $N, \tail(\sigma') \not \models_\kdn \neg K_a \neg K_a \rho'(\chi)$, and so $N, \tail(\sigma') \models_\kdn K_a \neg K_a \rho'(\chi)$. Thus, $N, \tail(\sigma) \not \models_\kdn K_a \rho'(\chi)$. In both cases, from $N, w \models_\kdn \rho(\phi)$ and Definition~\ref{def:rewrite-DtoKD}(3c), it follows that $N, w \models_\kdn \bigwedge_{0 \leq i \leq |\phi|} K_c^i (D_{\{a, c\}} \chi \ra K_a \chi)$, and so $N, \tail(\sigma) \not \models_\kdn D_{\{a, c\}} \rho'(\chi)$.
\underline{Right to left.} Suppose $N, \tail(\sigma) \not \models_\kdn D_{\{a, c\}} \rho'(\chi)$, then there exists $u \in W$ such that $(\tail (\sigma), u) \in R(a) \cap R(c)$ and $N, u \not \models_\kdn \rho'(\chi)$. Clearly $(w, u) \in R^+_c$. Let $\sigma'$ extends $\sigma$ with $(u, \{a,c\})$. It follows that $\tail(\sigma') = u$, and by induction hypothesis, $M, \sigma' \not \models_\ld \chi$. By the definition of $E$, $\{a\} \subseteq E(\sigma,\sigma')$, and so $M, \sigma \not \models_\ld K_a \chi$.

\item $\psi = D_G \chi$: $\rho'(\psi)= D_{G \cup \{c\}} \rho'(\chi)$. Similar reasoning applies, using $G \subseteq E(\sigma, \sigma')$ and Definition~\ref{def:rewrite-DtoKD}(3b, 3c).
\end{itemize}

(2) The function $\rho$ operates in polynomial time: Steps (1) and (2) of Definition~\ref{def:rewrite-DtoKD} are linear in $|\phi|$, replacing $K_a$ and $D_G$. Step (3) adds $\mu(\phi)$ conjuncts (size $O(|\phi|)$ from $\cl(\phi)$), and $K_c^i$ conjuncts (size $O(|\phi|^2)$), totaling $O(|\phi|^3)$ time and size. Thus, \ld-satisfiability reduces to \kdn-satisfiability in polynomial time.
\end{proof}

\subsubsection{Reduction from \texorpdfstring{\ldef}{LDEF} to \texorpdfstring{\ld}{LD}}
\label{subsec:LDEF-LD}

A procedure is presented that transforms any formula in \langdef into a formula in \langd, preserving satisfiability through the transformation.

The concept of a formula's closure, as defined in Definition~\ref{def:closure}, will be employed in the subsequent text. Additionally, the following convention is adopted for clarity and consistency.

The initial approach to rewriting an $\lang_{DEF}$-formula into an $\lang_D$-formula involves encoding group knowledge operators $E_G$ and $F_G$ using $K_{f(E_G)}$ and $K_{f(F_G)}$, where $f(E_G)$ and $f(F_G)$ are designated agents for $E_G$ and $F_G$, respectively, and adding conjuncts to express the properties of these operators. However, such conjuncts grow exponentially with iterated group knowledge operators. To achieve a polynomial-size transformation, we introduce a fresh agent $c$ and rewrite $K_a$, $D_G$, $E_G$, and $F_G$ as $D_{\{c, f(K_a)\}}$, $D_{\{c, f(D_G)\}}$, $D_{\{c, f(E_G)\}}$, and $D_{\{c, f(F_G)\}}$, respectively. The operator $K_c$ is then used to reduce the number of conjuncts.

\begin{conv}
\label{conv:agents}
Each operator $K_a$, $D_G$, $E_G$ and $F_G$, where $a \in \ag$ and $G \in \gr$, is assigned a unique agent by an injective function $f$, resulting in $f(K_a)$, $f(D_G)$, $f(E_G)$ and $f(F_G)$, respectively.

For a given formula $\phi$:
\begin{itemize}
\item $S_\phi$ denotes the set of skills appearing in $\phi$;
\item $A_\phi$ denotes the set of agents appearing in $\phi$;
\item $G_\phi$ denotes the union of groups explicitly appearing in $\phi$ and singleton groups $\{a\}$ for each agent $a$ appearing in $\phi$, formally $G_\phi =\{ G \mid G \text{ appears in }\phi \} \cup \{ \{a\} \mid a \text{ appears in } \phi \}$.
\end{itemize} 
\end{conv}

\begin{defi}[Rewriting]
\label{def:re-toD}
For an \langdef-formula $\phi$,  the \langd-formula $\rho(\phi)$ is constructed by applying the following steps sequentially:
\begin{enumerate}
\item Transform $\phi$ into $\phi \wedge \bigwedge_{0 \leq i \leq |\phi|} K^{i}_c \big(\bigwedge_{\chi \in \mu(\phi)} \chi \big) $, where $c$ is a fresh agent not appearing in $\phi$ and distinct from $f(K_a)$, $f(D_G)$, $f(E_G)$ and $f(F_G)$ for all operators $K_a$, $D_G$, $E_G$ and $F_G$ in $\phi$, and $\mu(\phi)$ is the set of the following formulas (with $a \in A_\phi$, $G, H, I, J \in G_\phi$ and $\psi \in \cl(\phi)$):
	\begin{enumerate}
	\item $F_G \psi \ra K_a\psi$, for $a \in G$
	\item $K_a\psi \ra D_G\psi$, for $a \in G$
	\item $F_H\psi \ra F_G\psi$, for $G \subseteq H$
	\item $D_G \psi \ra D_H\psi$, for $G \subseteq H$
	\item $F_I\psi \ra D_J\psi$, for $I \cap J \neq \emptyset$
	\item $E_I \psi \leftrightarrow \bigwedge_{b\in I} K_b \psi$
	\item $(D_{\{a\}} \psi \lra K_a \psi) \wedge (E_{\{a\}} \psi \lra K_a \psi) \wedge (F_{\{a\}} \psi \lra K_a \psi)$
	\end{enumerate}

\item For each agent $a \in \ag$ distinct from $c$, replace every occurrence of $K_a$ with $D_{\{c, f(K_a)\}}$;

\item For each group $G \in \gr$, replace every occurrence of $D_G$ with $D_{\{c, f(D_G)\}}$, $E_G$ with $D_{\{c, f(E_G)\}}$, and $F_G$ with $D_{\{c,f(F_G)\}}$.
\end{enumerate}

Define $\rho_1(\phi)$ as the result of applying only Step (1), and $\rho_{23}(\phi)$ as the result of applying Steps (2) and (3) sequentially to $\phi$. Then, $\rho_1(\phi)$ is an \langdef-formula, while  $\rho(\phi)$ and $\rho_{23}(\phi)$ are \langd-formulas, with $\rho(\phi) = \rho_{23}(\rho_1(\phi))$.
\end{defi}

\begin{lem}[Invariance of rewriting]\label{lem:sat-toD}
For any \langdef-formula $\phi$, $\phi$ is satisfiable (in \logic{DEF}) if and only if $\rho(\phi)$ is satisfiable (in \logic{D}).
\end{lem}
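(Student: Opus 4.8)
The plan is to establish the biconditional by two model constructions, one per direction, structurally parallel to the reduction of Lemma~\ref{lem:sat-DtoKD} (but with an unraveling tailored to the group operators). A preliminary observation fixes the role of the ``guard'' conjuncts $\bigwedge_{0\le i\le|\phi|}K^i_c(\bigwedge_{\chi\in\mu(\phi)}\chi)$ added in Step~(1) of Definition~\ref{def:re-toD}: each $\chi\in\mu(\phi)$ is \emph{valid in $\logic{DEF}$} by a one-line threshold computation ($a\in G$ gives $\bigcap_{b\in G}C(b)\subseteq C(a)\subseteq\bigcup_{b\in G}C(b)$, hence $F_G\psi\ra K_a\psi\ra D_G\psi$; $G\subseteq H$ gives the two monotonicities; $I\cap J\neq\emptyset$ combines these; (f),(g) are definitional). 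Thus adding the guards never affects $\logic{DEF}$-satisfiability, while along $c$-paths of length $\le|\phi|$ they pin down the mutual relationships of the modalities, which is what lets one invert the operator substitution performed in Steps~(2)--(3) (i.e.\ $\rho_{23}$).

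\emph{($\phi$ $\logic{DEF}$-satisfiable $\Rightarrow$ $\rho(\phi)$ $\logic{D}$-satisfiable.)} Given a $\logic{DEF}$-model $M=(W,E,C,\beta)$ with $M,w\models\phi$, take $M'=(W,E',C',\beta)$ on the same worlds, where $C'$ agrees with $C$ on the agents of $\phi$ and, for the fresh agents, $C'(c)=\emptyset$, $C'(f(K_a))=C(a)$, $C'(f(D_G))=\bigcup_{b\in G}C(b)$, $C'(f(F_G))=\bigcap_{b\in G}C(b)$, and $C'(f(E_G))=\{t_G\}$ for a fresh skill $t_G$; and $E'$ extends $E$ by putting $t_G\in E'(x,y)$ exactly when $C(b)\subseteq E(x,y)$ for some $b\in G$. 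Choosing the $t_G$ among skills occurring nowhere else keeps $E'(x,y)\neq\sk$ for $x\neq y$, and $E'$ stays symmetric, so $M'$ is a legal model. A routine induction on $\langdef$-formulas $\psi$ over $A_\phi,G_\phi$ gives $M,x\models\psi\iff M',x\models\rho_{23}(\psi)$ for all $x$: the $K_a$, $D_G$, $F_G$ clauses use that $C'(c)=\emptyset$ makes the threshold of $D_{\{c,f(K_a)\}}$, $D_{\{c,f(D_G)\}}$, $D_{\{c,f(F_G)\}}$ equal to $C(a)$, $\bigcup_{b\in G}C(b)$, $\bigcap_{b\in G}C(b)$, and the $E_G$ clause uses that $t_G\in E'(x,y)$ iff $\exists b\in G:C(b)\subseteq E(x,y)$, so the relation of $D_{\{c,f(E_G)\}}$ is $\bigcup_{b\in G}\{(x,y):C(b)\subseteq E(x,y)\}$, i.e.\ the relation defining $\bigwedge_{b\in G}K_b$. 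Since every $\chi\in\mu(\phi)$ is $\logic{DEF}$-valid, $M,x\models\chi$ everywhere, hence $M',x\models\rho_{23}(\chi)$ everywhere; and as $c$'s accessibility in $M'$ is the universal relation, all the $K^i_c$-guards hold at $w$, so $M',w\models\rho(\phi)$.

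\emph{($\rho(\phi)$ $\logic{D}$-satisfiable $\Rightarrow$ $\phi$ $\logic{DEF}$-satisfiable.)} Given a $\logic{D}$-model $N=(W,E_N,C_N,\beta_N)$ with $N,w\models\rho(\phi)$, build a \emph{tree unraveling} $M$ whose worlds are sequences $\sigma=\langle w,(y_1,O_1),\dots,(y_m,O_m)\rangle$, with $m$ at most the modal depth of $\phi$, each $O_i$ an operator instance from $\{K_a:a\in A_\phi\}\cup\{D_G,F_G:G\in G_\phi\}$, and $(y_{i-1},y_i)$ in the $N$-relation of $D_{\{c,f(O_i)\}}$; write $\tail(\sigma)=y_m$. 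Since that relation is contained in the $c$-relation, every $\tail(\sigma)$ lies at the end of a $c$-path of length $|\sigma|\le|\phi|$ from $w$, so $\bigwedge_{\chi\in\mu(\phi)}\rho_{23}(\chi)$ holds at it (via the guard $K^{|\sigma|}_c$). Choose $C_M$ in ``general position'' — e.g.\ skills indexed by $\wp(A_\phi)$ with $C_M(a)=\{X:a\notin X\}$, plus reserve skills for positivity — so that the thresholds $C_M(a)$, $\bigcup_{b\in G}C_M(b)$, $\bigcap_{b\in G}C_M(b)$ are pairwise distinct and respect only the forced inclusions $\bigcap_{b\in G}C_M(b)\subseteq C_M(a)\subseteq\bigcup_{b\in G}C_M(b)$ ($a\in G$); and set $E_M(\sigma,\sigma')$ to the $C_M$-threshold of the operator joining them when one sequence extends the other by a step, and $\emptyset$ otherwise (diagonals mirroring the self-loops in $N$). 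Then $R^M_{K_a}(\sigma)$, $R^M_{D_G}(\sigma)=\bigcap_{b\in G}R^M_{K_b}(\sigma)$ and $R^M_{F_G}(\sigma)\supseteq\bigcup_{b\in G}R^M_{K_b}(\sigma)$ each consist of the matching-labelled successors together with the ``stronger''/``weaker''-labelled ones, and the clauses of $\mu(\phi)$ — (a),(b) for $F_G\!\Rightarrow\!K_a\!\Rightarrow\!D_G$, (c),(d) for group monotonicity, (e) for $F_I\!\Rightarrow\!D_J$, (f) so that no separate $E_G$-label is needed (it forces the relation of $D_{\{c,f(E_G)\}}$ to be $\cl(\phi)$-equivalent to $\bigcup_{b\in G}R^N_{D_{\{c,f(K_b)\}}}$), and (g) for the singleton collapses — are exactly what shows these sets realize the same $\cl(\phi)$-types as the corresponding $D_{\{c,f(\cdot)\}}$-successor sets in $N$. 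An induction on $\psi\in\cl(\phi)$ with the joint budget ``$|\sigma|+(\text{modal depth of }\psi)\le|\phi|$'' then yields $M,\sigma\models\psi\iff N,\tail(\sigma)\models\rho_{23}(\psi)$, so $M,\langle w\rangle\models\phi$.

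The main obstacle is exactly this last induction: in the reconstructed model $M$, the relations interpreting $D_G$, $E_G$, $F_G$ are not free parameters but forced ($R_{D_G}=\bigcap_{a\in G}R_{K_a}$, $R_{E_G}=\bigcup_{a\in G}R_{K_a}$, $R_{F_G}\supseteq\bigcup_{a\in G}R_{K_a}$), so one must reconcile these identities with the single-threshold relations of $N$, which is only possible because $N$ satisfies every clause of $\mu(\phi)$ along the relevant $c$-paths. Finally, $\rho$ is computable in polynomial time (Steps~(1)--(3) add $O(|\phi|)$ conjuncts of size $O(|\phi|)$, together with the $O(|\phi|^2)$-size $K^i_c$-block), so the equivalence gives the claimed polynomial-time reduction of $\logic{DEF}$-satisfiability to $\logic{D}$-satisfiability.
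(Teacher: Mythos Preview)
Your proposal is essentially the paper's approach: in the left-to-right direction you build a model on the same worlds where the fresh agents $c,f(K_a),f(D_G),f(E_G),f(F_G)$ get thresholds matching the original operators; in the right-to-left direction you perform a tree-unraveling with group-indexed edge weights and run a depth-budgeted induction, using the clauses in~$\mu(\phi)$ to reconcile the inclusions among $K_a,D_G,E_G,F_G$. The paper's left-to-right model is coded slightly differently (it recodes $E'$ entirely via agent-as-skill markers rather than keeping $E$ and adding a single fresh $t_G$ per $E_G$), and its right-to-left unraveling uses the skill assignment $C'(a)=\{G\subseteq A_\phi:a\in G\}$ together with edge labels $\{H:H\cap G\neq\emptyset\}$ and $\{H:G\subseteq H\}$ under semantic side-conditions $(\dag_2),(\dag_4)$, rather than your complemented encoding $C_M(a)=\{X:a\notin X\}$ with edges given by the literal thresholds. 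These are equivalent in effect; in particular your containments among $K_a$-, $D_G$-, and $F_G$-labelled edges coincide with the paper's, and your decision to drop $E_G$-labels and rely on clause~(f) is exactly what the paper does implicitly.

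One small inconsistency to fix: you restrict the tree worlds to sequences with $m$ at most the modal depth of~$\phi$, but then state the induction with budget $|\sigma|+\mathrm{depth}(\psi)\le|\phi|$. At a maximal-length $\sigma$ this budget still admits $\psi$ of positive modal depth, yet you cannot extend~$\sigma$ forward, so the equivalence fails for such pairs (e.g.\ $M,\sigma\models K_a\chi$ may hold vacuously while $N,\tail(\sigma)\not\models D_{\{c,f(K_a)\}}\rho_{23}(\chi)$). The paper avoids this by taking $W_1$ to be \emph{all} finite sequences (no length cap) and letting the budget alone limit the induction; alternatively you could keep the cap and tighten the budget to $|\sigma|+\mathrm{depth}(\psi)\le\mathrm{depth}(\phi)+1$. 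Either repair is immediate; the rest of your argument is sound.
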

\begin{proof}
The proof follows a structure similar to that of Lemma~\ref{lem:sat-DtoKD}, with some notations used without detailed explanation here; readers may refer to Lemma~\ref{lem:sat-DtoKD} for clarification.

Left to right. Suppose $\phi$ is satisfied at a world $w$ in a model $M = (W, E, C, \beta)$. First, verify that $M, w \models \rho_1(\phi)$. Without loss of generality, assume $C(c) = \emptyset$, which is permissible since $c$ is a fresh agent absent from $\phi$ and $\rho_1(\phi)$. The formulas in $\mu(\phi)$ (Definition~\ref{def:re-toD}(1)) are valid implications or equivalences by the semantics, making $\rho_1(\phi)$ true at $w$.

Construct a new model $M' = (W, E', C', \beta)$, where:
\begin{itemize}
\item $E': W \times W \to \wp(\sk)$, where $E'(u, v)$ is the minimal set satisfying all the following:
\begin{itemize}
\item $f(K_a) \in E'(u, v)$ iff $C(a) \subseteq E(u, v)$;
\item $f(D_G) \in E'(u, v)$ iff $\bigcup_{a \in G} C(a) \subseteq E(u, v)$;
\item $f(E_G) \in E'(u, v)$ iff there exists $a\in G$ such that $C(a) \subseteq E(u, v)$;
\item $f(F_G) \in E'(u, v)$ iff $\bigcap_{a \in G} C(a) \subseteq E(u, v)$;
\item $c \in E'(u, v)$;
\end{itemize}
\item $C':\ag \to \wp(\sk)$ with $C'(a) = \{a\}$ for all $a \in \ag$.
\end{itemize}

Treating agents as skills is justified by Footnote~\ref{ft}. For all $u, v \in W$, $E'(u, v) = E'(v, u)$ (symmetry holds by definition) and $E'(u, v) \neq \ag$ (as only finitely many operators appear in $\phi$), ensuring $M'$ is a model.

By induction on $\psi \in \langdef$, one can verify that $M, u \models \psi$ iff $M',u \models \rho_{23}(\psi)$ for all $u \in W$. Since $M, w \models \rho_1(\phi)$ and $\rho(\phi) = \rho_{23}(\rho_1(\phi))$, it follows that $M', w \models \rho(\phi)$, proving $\rho(\phi)$ is satisfiable.

Right to left. Suppose $\rho(\phi)$ is satisfied at a world $w$ in a model $M = (W, E, C, \beta)$, i.e., $M, w \models \rho(\phi)$. Let $W_0 = \{ (u, G, +) \mid u \in W,\, w \reach^M_{\{c\}} u \text{ and } G \in G_\phi \} \cup  \{ (u, G, -) \mid u \in W,\, w \reach^M_{\{c\}} u \text{ and } G \in G_\phi \} \cup \{(w, \{c\}, +)\}$. Define $W_1$ as the set of finite sequences of elements of $W_0$ starting with $(w, \{c\}, +)$. For any $\sigma \in W_1$, let $\tail(\sigma)$ denote the world component of the last element in $\sigma$ (e.g., $\tail( \langle (w, \{c\}, +), (u, G, -) \rangle ) = u$).

Construct $M' = (W_1, E', C', \beta')$, where:
\begin{itemize}
\item $E': W_1 \times W_1 \to \wp(\wp(A_{\phi}))$ is defined for all $\sigma, \sigma' \in W$ and $G \in \gr$ as:
\[
E' (\sigma, \sigma') = \left \{
\begin{array}{ll}
\{ H \subseteq A_{\phi} \mid H \in \gr \text{ and }H \cap G \neq \emptyset\}, & \text{if $(\dag_1)$ and $(\dag_2)$,}
\\
\{ H \subseteq A_{\phi} \mid H \in \gr \text{ and } G \subseteq H \}, & \text{if $(\dag_3)$ and $(\dag_4)$,}
\\
\emptyset, & \text{otherwise.}
\end{array}\right.
\]

\begin{enumerate}
\item[$(\dag_1)$] Either $\sigma$ extends $\sigma'$ with $(\tail (\sigma), G, +)$ or $\sigma'$ extends $\sigma$ with $(\tail (\sigma'), G, +)$;
\item[$(\dag_2)$] For all $\psi \in \cl(\phi)$, $M, \tail(\sigma) \models D_{\{c, f(D_G)\}} \rho_{23}(\psi)$ implies $M, \tail(\sigma') \models \rho_{23}(\psi)$, and $M, \tail(\sigma') \models D_{\{c, f(D_G)\}} \rho_{23}(\psi)$ implies $M, \tail(\sigma) \models \rho_{23}(\psi)$;
\item[$(\dag_3)$] Either $\sigma$ extends $\sigma'$ with $(\tail (\sigma), G, -)$ or $\sigma'$ extends $\sigma$ with $(\tail (\sigma'), G, -)$;
\item[$(\dag_4)$] For all $\psi \in \cl(\phi)$, $M,\tail(\sigma) \models D_{\{c, f(F_G)\}} \rho_{23}(\psi)$ implies $M, \tail(\sigma') \models \rho_{23}(\psi)$, and $M,\tail(\sigma') \models D_{\{c, f(F_G)\}} \rho_{23}(\psi)$ implies $M, \tail(\sigma) \models \rho_{23}(\psi)$.
\end{enumerate}

\item $C':\ag \to \wp(\wp(A_{\phi}))$ with $C'(a) = \{ G \subseteq A_{\phi} \mid a \in G \in \gr \}$ for all $a \in \ag$.

\item $\beta': W_1 \to \wp(\pr)$ with $\beta' (\sigma) = \beta(\tail(\sigma))$ for all $\sigma \in W_1$.
\end{itemize}
Here, finite groups of agents serve as skills, justified by Footnote~\ref{ft}, since $\wp(A_\phi)$ is finite (as $A_\phi$ is) and $\sk$ is countably infinite. To verify $M'$ is a model, note that $E'$ is symmetric (conditions are bidirectional).

We show the following by induction on $\psi$:
\begin{quote}
For all $\psi \in \cl(\phi)$ and all $\sigma \in W_1$, if $\sigma$ has length $n$ and $\psi$ has modal depth $k$ with $n + k \leq |\phi|$, then $M, \tail(\sigma) \models \rho_{23}(\psi) \iff M', \sigma \models \psi$.
\end{quote}
Since $M, w \models \rho(\phi)$ and $\rho(\phi)$ includes $\rho_{23}(\phi)$, if the claim holds, then $M', \langle (w, \{c\}, +) \rangle \models \phi$ (as $n = 1$ and $k \leq |\phi| - 1$), showing that $\phi$ is satisfiable.

$\bullet$ The base case (atomic propositions) and Boolean cases are straightforward and omitted. Here the focus is knowledge operators:

$\bullet$ Case $\psi = K_a \chi$: $\rho_{23}(\psi) = D_{\{c,f(K_a)\}} \rho_{23}(\chi)$.
\underline{Left to right.} Suppose $M', \sigma \not\models K_a \chi$. Then there exists $\sigma' \in W_1$ such that $C'(a) = \{ G \subseteq A_{\phi} \mid a \in G \in \gr \} \subseteq E'(\sigma,\sigma')$ and $M', \sigma' \not \models \chi$. By the definition of $E'$, one of two cases holds:
\begin{enumerate}
\item There exists $G \in \gr$ where: (i) either $\sigma$ extends $\sigma'$ with $(\tail (\sigma), G, +)$ or $\sigma'$ extends $\sigma$ with $(\tail (\sigma'), G, +)$, (ii) for all $\theta \in \cl(\phi)$, $M, \tail(\sigma) \models D_{\{c, f(D_G)\}} \rho_{23} (\theta)$ implies $M, \tail(\sigma') \models \rho_{23} (\theta)$, and (iii) $E' (\sigma, \sigma') = \{ H \subseteq A_\phi \mid H \in \gr \text{ and } H \cap G \neq \emptyset \}$;
\\
(In this case, $\{a\} \in C'(a) \subseteq E' (\sigma, \sigma')$, it follows that $\{a\} \cap G \neq \emptyset$, hence $a \in G$.)
\item There eixsts $G \in \gr$ such that: (i) either $\sigma$ extends $\sigma'$ with $(\tail (\sigma), G, -)$ or $\sigma'$ extends $\sigma$ with $(\tail (\sigma'), G, -)$, (ii) for all $\theta \in \cl(\phi)$, $M, \tail(\sigma) \models D_{\{c, f(F_{G})\}} \rho_{23} (\theta)$ implies $M, \tail(\sigma') \models \rho_{23} (\theta)$, and (iii) $E' (\sigma, \sigma') = \{ H \subseteq A_{\phi} \mid H \in \gr \text{ and } G \subseteq H \}$.
\\
(In this case, $\{a\} \in C'(a) \subseteq E' (\sigma, \sigma')$, it follows that $G \subseteq \{a\}$, hence $G = \{a\}$.)
\end{enumerate}
Since $M', \sigma' \not\models \chi$, by induction hypothesis (length of $\sigma' \leq n + 1$, modal depth of $\chi = k - 1$, and $(n + 1) + (k - 1) \leq |\phi|$), $M, \tail(\sigma') \not \models \rho_{23}(\chi)$. In case (1), $M, \tail(\sigma) \not\models D_{\{c,f(D_G)\}} \rho_{23}(\chi)$, and in case (2), $M,\tail(\sigma) \not\models D_{\{c,f(F_{\{a\}})\}} \rho_{23}(\chi)$. Since $M, w \models \rho(\phi)$, by Definition~\ref{def:re-toD}(1b, 1g), $M, w \models \bigwedge_{0 \leq i \leq |\phi|} K^i_c (D_{\{c,f(K_a)\}} \rho_{23}(\chi) \ra D_{\{c,f(D_G)\}} \rho_{23}(\phi))$ and $M, w \models \bigwedge_{0 \leq i \leq |\phi|}K^i_c (D_{\{c,f(K_a)\}} \rho_{23}(\chi) \ra D_{\{c,f(F_{\{a\}})\}} \rho_{23}(\phi))$. In both cases, $M, \tail(\sigma) \not \models D_{\{c,f(K_a)\}} \rho_{23}(\chi) = \rho_{23}(\psi)$.
\underline{Right to left.} Suppose $M, \tail(\sigma) \not \models D_{\{c,f(K_a)\}} \rho_{23}(\chi)$. Then there exists $u \in W$ such that $C'(c) \cup C'(f(K_a)) \subseteq E(\tail(\sigma), u)$ and $M, u \not\models \rho_{23}(\chi)$. Define $\sigma'$ as $\sigma$ extended with $(u, \{a\}, +)$. Here, $\sigma'$ has length $n+1$, $\chi$ has modal depth $k-1$, so the sum $\leq |\phi|$. By the induction hypothesis, $M', \sigma' \not\models \chi$. Check $C'(a) \subseteq E'(\sigma,\sigma')$ under $(\dag_1)$ and $(\dag_2)$. By semantics and $C'(c) \cup C'(f(K_a)) \subseteq E(\tail(\sigma), u)$, $M,\tail(\sigma) \models D_{\{c,f(K_a)\}} \rho_{23}(\theta)$ implies $M,\tail(\sigma') \models \rho_{23}(\theta)$ for all $\theta \in \cl(\phi)$. Since $M, w \models \rho(\phi)$, by Definition~\ref{def:re-toD}(1g), $M, w \models \bigwedge_{0 \leq i \leq |\phi|} K^i_c (D_{\{c, f(K_a)\}} \rho_{23}(\theta) \lra D_{\{c, f(D_{\{a\}})\}} \rho_{23}(\theta))$ for any $\theta \in \cl(\phi)$. It follows that $M, \tail(\sigma) \models D_{\{c, f(D_{\{a\}})\}} \rho_{23}(\theta) \Longrightarrow M, \tail(\sigma') \models \rho_{23}(\theta)$ for all $\theta \in \cl(\phi)$. Conversely, $M, \tail(\sigma') \models D_{\{c,f(D_{\{a\}})\}} \rho_{23}(\theta) \Longrightarrow M, \tail(\sigma) \models \rho_{23}(\theta)$ for all $\theta \in \cl(\phi)$; similar reasoning applies. Thus, $C'(a) \subseteq E'(\sigma,\sigma')$, and $M', \sigma \not \models K_a \chi$.

$\bullet$ Case $\psi = D_G \chi$: $\rho_{23}(\psi) = D_{\{c,f(D_G)\}} \rho_{23}(\chi)$.
The case when $|G| = 1$ mirrors the proof for $\psi = K_a \chi$ and is omitted. We consider only $|G| >1$.
\underline{Left to right.} Suppose $M', \sigma \not \models D_G \chi$. Then there exists $\sigma' \in W_1$ such that $\bigcup_{a \in G} C'(a) \subseteq E'(\sigma,\sigma')$ and $M', \sigma' \not \models \chi$, where $\bigcup_{a \in G} C'(a) = \{ H \subseteq A_\phi \mid H \in \gr \text{ and } H \cap G \neq \emptyset \}$ (since $C'(a) = \{ H \subseteq A_\phi \mid a \in H \in \gr \}$). By the definition of $E'$, one of two cases applies:
\begin{enumerate}
\item There exists $G' \in \gr$ such that: (i) either $\sigma$ extends $\sigma'$ with $(\tail (\sigma), G', +)$ or $\sigma'$ extends $\sigma$ with $(\tail (\sigma'), G', +)$, (ii) for all $\theta \in \cl(\phi)$, $M, \tail(\sigma) \models D_{\{c, f(D_{G'})\}} \rho_{23} (\theta)$ implies $M, \tail(\sigma') \models \rho_{23} (\theta)$, and (iii) $E' (\sigma, \sigma') = \{ H \subseteq A_\phi \mid H \in \gr \text{ and } H \cap G' \neq \emptyset \}$;
\\
(Since $\{\{a\} \mid a\in G\} \subseteq C'(a) \subseteq E' (\sigma, \sigma')$, implying $\{a\} \cap G' \neq \emptyset$ for all $a \in G$, hence $G \subseteq G'$.)
\item There exists $G' \in \gr$ such that: (i) either $\sigma$ extends $\sigma'$ with $(\tail (\sigma), G', -)$ or $\sigma'$ extends $\sigma$ with $(\tail (\sigma'), G', -)$, (ii) for all $\theta \in \cl(\phi)$, $M, \tail(\sigma) \models D_{\{c, f(F_{G'})\}} \rho_{23} (\theta)$ implies $M, \tail(\sigma') \models \rho_{23} (\theta)$, and (iii) $E' (\sigma, \sigma') = \{ H \subseteq A_{\phi} \mid H \in \gr \text{ and } G' \subseteq H \}$.
\\
(Since $\{\{a\} \mid a\in G\} \subseteq C'(a) \subseteq E' (\sigma, \sigma')$ and $|G| > 1$, $G \subseteq \{a\}$ for each $a \in G$ is impossible, so this case is infeasible.)
\end{enumerate}
Thus, only Case (1) holds. Since $M', \sigma' \not \models \chi$, by induction hypothesis (length of $\sigma' \leq n + 1$, modal depth of $\chi = k - 1$, and $(n + 1) + (k - 1) \leq |\phi|$), $M, \tail(\sigma') \not\models \rho_{23}(\chi)$. Therefore, $M,\tail(\sigma) \not \models D_{\{c,f(D_{G'})\}} \rho_{23}(\chi)$. Since $M, w \models \rho(\phi)$, by Definition~\ref{def:re-toD}(1d), $M, w \models \bigwedge_{0 \leq i \leq |\phi|} K^i_c (D_{\{c,f(D_G)\}} \rho_{23}(\chi) \ra D_{\{c,f(D_{G'})\}} \rho_{23}(\chi))$,  it follows that $M, \tail(\sigma) \not \models D_{\{c,f(D_G)\}} \rho_{23}(\chi)$.
\underline{Right to left.} Suppose $M, \tail(\sigma) \not \models D_{\{c,f(D_G)\}} \rho_{23}(\chi)$. Then there exists a world $u \in W$ such that: (i) $C(c) \cup C(f(D_G)) \subseteq E(\tail(\sigma), u)$ and (ii) $M, u \not\models \rho_{23}(\chi)$. Let $\sigma'$ be $\sigma$ extended with $(u, G, +)$. By (ii) and the induction hypothesis, $M',\sigma' \not \models \chi$. Verify $\bigcup_{a \in G} C'(a) \subseteq E'(\sigma,\sigma')$ under $(\dag_1)$ and $(\dag_2)$. By (i) and the semantics that $M,\tail(\sigma) \models D_{\{c, f(D_{G})\}} \rho_{23}(\theta) \Longrightarrow M,\tail(\sigma') \models \rho_{23}(\theta)$ for all $\theta \in \cl(\phi)$. Conversely, $M,\tail(\sigma') \models D_{\{c,f(D_{G})\}} \rho_{23}(\theta) \Longrightarrow M, \tail(\sigma) \models \rho_{23}(\theta)$ for all $\theta \in \cl(\phi)$. These enforce $(\dag_2)$ for $E' (\sigma, \sigma')$. By definition, the elements of $\bigcup_{a \in G} C'(a)$ are $H$'s that contains at least one element of $G$, thus $\bigcup_{a \in G} C'(a) = \{H \subseteq A_\phi \mid H\in \gr \text{ and } H \cap G \neq \emptyset \}$, it is $E'(\sigma,\sigma')$ under $(\dag_1)$ and $(\dag_2)$. Hence $\bigcup_{a \in G} C'(a) \subseteq E'(\sigma,\sigma')$, and so $M', \sigma \not \models D_G \chi$.

$\bullet$ For $\psi = E_G \chi$, where $\rho_{23}(\psi) = D_{\{c, f(E_G)\}} \rho_{23}(\chi)$, the proof resembles the $K_a \chi$ case, relying on Definition~\ref{def:re-toD}(1f, 1g).

$\bullet$ For $\psi = F_G \chi$, where $\rho_{23}(\psi) = D_{\{c, f(F_G)\}} \rho_{23}(\chi)$, the proof is analogous to the $D_G \chi$ case, leveraging Definition~\ref{def:re-toD}(1a, 1c, 1e, 1g). 
\end{proof}

\begin{lem}\label{lem:red-toD}
The satisfiability problem for $\logic{DEF}$ is polynomial-time reducible to the satisfiability problem for $\logic{D}$.
\end{lem}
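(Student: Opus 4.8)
The plan is to obtain this as an immediate consequence of Lemma~\ref{lem:sat-toD}, which already establishes that an \langdef-formula $\phi$ is \logic{DEF}-satisfiable if and only if $\rho(\phi)$ is \logic{D}-satisfiable. Thus the only thing left to check is that the rewriting $\phi \mapsto \rho(\phi)$ of Definition~\ref{def:re-toD} is computable in polynomial time (equivalently, that $|\rho(\phi)|$ is polynomially bounded and $\rho(\phi)$ is constructible in polynomial time), exactly as in part~(2) of Lemma~\ref{lem:sat-DtoKD}.

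First I would bound the ingredients of Step~(1). The closure $\cl(\phi)$ contains $O(|\phi|)$ formulas, each of length $O(|\phi|)$; the agent set $A_\phi$ and the group collection $G_\phi$ each have $O(|\phi|)$ members. Fixing $\psi \in \cl(\phi)$: items (1a)--(1e) each contribute $O(|\phi|^2)$ formulas (ranging over a group together with a member of it, or over a pair of groups), each of length $O(|\phi|)$; item (1f) contributes $O(|\phi|)$ formulas, but each of length $O(|\phi|^2)$ because of the conjunction $\bigwedge_{b\in I} K_b\psi$; item (1g) contributes $O(|\phi|)$ formulas of length $O(|\phi|)$. Summing over $\psi$, the conjunction $\bigwedge_{\chi\in\mu(\phi)}\chi$ has length $O(|\phi|^4)$, and prefixing it by $K_c^i$ for each $0\le i\le |\phi|$ yields a formula of length $O(|\phi|^5)$; conjoining this with $\phi$ gives $\rho_1(\phi)$ of length $O(|\phi|^5)$, built in polynomial time.

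Next I would note that Steps~(2) and~(3) are purely local substitutions: each occurrence of $K_a$, $D_G$, $E_G$, or $F_G$ is replaced by a distributed-knowledge modality $D_{\{c,\,f(\cdot)\}}$ over a two-element group, which alters the length of that modal prefix by only a constant. Hence $|\rho(\phi)| = |\rho_{23}(\rho_1(\phi))| = O(|\rho_1(\phi)|) = O(|\phi|^5)$, and the transformation is carried out by one linear pass over $\rho_1(\phi)$. Combining this with the soundness-and-completeness statement of Lemma~\ref{lem:sat-toD} yields the desired polynomial-time reduction.

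I do not expect a substantive obstacle here: the real work --- the unravelled tree model $M'$ over finite sequences of elements of $W_0$, with its edge function encoding the relevant group inclusions and intersections --- was already discharged in the proof of Lemma~\ref{lem:sat-toD}. The only point calling for mild care is the size estimate for item (1f), since $\bigwedge_{b\in I} K_b\psi$ is quadratic rather than linear in $|\phi|$, but this is harmless as it is absorbed into the overall polynomial bound.
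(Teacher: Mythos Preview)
Your proposal is correct and follows essentially the same approach as the paper: invoke Lemma~\ref{lem:sat-toD} for the satisfiability-preservation and then bound the size of $\rho(\phi)$ to confirm that the rewriting runs in polynomial time. The paper's own argument is briefer---it simply notes that $\mu(\phi)$ is polynomial because subformulas, modal operators, and group sizes are each bounded by $|\phi|$, and that Steps~(2)--(3) are linear---whereas you give explicit degree bounds culminating in $O(|\phi|^5)$; both arrive at the same conclusion by the same route.
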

\begin{proof}
Given an \langdef-formula $\phi$, Lemma~\ref{lem:sat-toD} establishes that, an \langd-formula $\rho(\phi)$ constructed per Definition~\ref{def:re-toD}, satisfies the property that $\phi$ is satisfiable if and only if $\rho(\phi)$ is satisfiable. Thus, the satisfiability problem for $\phi$ reduces to that for $\rho(\phi)$ in $\logic{D}$.

To confirm polynomial-time reducibility, it suffices to demonstrate that the procedure $\rho$ operates in polynomial time relative to the size of $\phi$, denoted $|\phi| = k$. The execution of the first step in computing $\rho(\phi)$ (Definition~\ref{def:re-toD}) is polynomial in $k$, as it merely involves listing the formulas in $\mu(\phi)$ $k$ times and binding them with conjunction. The size of $\mu(\phi)$ is polynomial, given that: (a) the number of subformulas of $\phi$ is at most $k$, (b) the number of modal operators present in $\phi$ is at most $k$, and (c) the size of any group appearing in $\phi$ is at most $k$. Steps (2) and (3) cost linear time with respect to the length of the formula obtained after Step (1).
\end{proof}

Following the establishment of Lemmas~\ref{lem:sat-KBtoL}, \ref{lem:sat-DtoKD}, and \ref{lem:red-toD}, the relationships depicted in Figure~\ref{fig:sat-psp} are now evident. These results enable the derivation of the following theorem, which applies to all logics ranging from $\l$ to $\logic{DEF}$.

\begin{thm}\label{thm:complexity-satpsp}
The satisfiability problems for any logic between \l and \ldef is PSPACE complete.
\end{thm}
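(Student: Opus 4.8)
The plan is to obtain the result by sandwiching: establish PSPACE-hardness via the smallest logic \l in the family, establish membership in PSPACE via the largest logic \ldef, and then observe that both bounds propagate to every intermediate logic. The substantive work is already contained in the earlier lemmas; the proof of the theorem is essentially a matter of assembling them and checking that the language inclusions behave as claimed.

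For the lower bound, the key observation is that for a formula $\phi$ built only from operators of \lang, the statement ``$\phi$ is $\mathrm{L}$-satisfiable'' does not depend on which logic $\mathrm{L}$ between \l and \ldef one takes: all these logics are interpreted over the same class of weighted Kripke models, and the truth conditions of the $\lang$-subformulas of $\phi$ are unaffected by the presence of the group-knowledge operators $D_G$, $E_G$, $F_G$ in the richer languages. Hence \l-satisfiability embeds as a subproblem into the satisfiability problem of any logic between \l and \ldef. By Lemma~\ref{lem:sat-KBtoL}, the satisfiability problem for \kbone, which is PSPACE-hard (Sahlqvist), is polynomial-time reducible to that for \l; composing this reduction with the trivial inclusion just described yields PSPACE-hardness of satisfiability for every logic in the family.

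For the upper bound, note symmetrically that every logic $\mathrm{L}$ with \l between it and \ldef has a language that is a sublanguage of \langdef, so its satisfiability problem is a subproblem of \ldef-satisfiability. By Lemma~\ref{lem:red-toD}, \ldef-satisfiability is polynomial-time reducible to \logic{D}-satisfiability, and by Lemma~\ref{lem:sat-DtoKD} the latter is polynomial-time reducible to the satisfiability problem for \kdn, which lies in PSPACE. Since PSPACE is closed under polynomial-time many-one reductions and the two reductions compose, \ldef-satisfiability, and therefore the satisfiability problem of every intermediate logic, is in PSPACE. Combining the two bounds gives PSPACE-completeness.

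The only point that needs genuine care, and the one I expect to be the main (modest) obstacle, is justifying the ``subproblem'' claims cleanly: namely that satisfiability of a given formula is invariant under passing between logics that share the same model class and differ only in which operators their grammar admits, so that both the embedding of \l-satisfiability and the embedding into \ldef-satisfiability are legitimate. Beyond that, one should verify that the polynomial-time reductions genuinely compose into a polynomial-time reduction, and spell out that all logics between \l and \ldef, including those built from $E$ or $F$ alone, are covered by exactly these two chains.
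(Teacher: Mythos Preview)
Your proposal is correct and follows essentially the same approach as the paper: the paper's own proof simply invokes Lemmas~\ref{lem:sat-KBtoL}, \ref{lem:sat-DtoKD}, and \ref{lem:red-toD} together with the roadmap in Figure~\ref{fig:sat-psp}, which encodes exactly the sandwiching argument you spell out (hardness via $\kbone \to \l$, membership via $\ldef \to \ld \to \kdn$, and language inclusion for the intermediate logics). Your explicit justification of the ``subproblem'' claims via the shared model class is precisely the content of the solid arrow in that figure.
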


\subsection{Satisfiability for logics with common knowledge but without update or quantifying modalities: EXPTIME complete}

Following the PSPACE completeness results for logics between \l and \ldef, we now examine logics incorporating common knowledge operators, excluding update and quantifying modalities.
To simplify the proofs, the \emph{universal modality}, denoted $U$, is introduced into the logics to express properties that hold across all worlds. This modality serves a role analogous to $K_c$ in the reduction from $\ldef$ to $\ld$ (Section~\ref{subsec:LDEF-LD}). Its semantics is defined as follows:
\[ M, w \models \univ \phi \iff \text{for all worlds $u$ of $M$, $M, u \models \phi$.}\]
The size of formulas containing the universal modality adheres to Convention~\ref{sec:inputsize}: each occurrence of $U$ increments the formula length by 1. Formally, the size of $U \phi$ is $|U \phi| = |\phi| + 1$.

Figure~\ref{fig:sat-C} delineates the proof strategy and complexity results for the satisfiability problems for logics incorporating common knowledge and the universal modality, without update or quantifying modalities, establishing their EXPTIME completeness. For those focused solely on the logics introduced in Section~\ref{sec:logics}, the roadmap can be streamlined by omitting the nodes for \kutwo and \lu, and replacing \lGA with \lG. This adjustment is viable since the universal modality remains invariant under the rewriting process, allowing the reduction from \lGA to \lcu to also serve as a reduction from \lG to \lcu. These additional results are included to provide a comprehensive analysis of related logics.

\begin{figure}
\begin{tikzpicture}[
	modal,
	node distance=1cm and 2.3cm,
	world/.style={ellipse, draw, minimum width=1cm, minimum height=.5cm, inner sep=1ex},
 	rectworld/.style={rectangle, draw, minimum width=2cm, minimum height=.5cm, inner sep=1ex},
  	font=\footnotesize
]
\node[rectworld] (w0) {$\dfrac{\kutwo}{\text{EXPTIME complete}}$};
\node[world] (w0b) [right=of w0] {\logic{\univ}};
\node[rectworld] (w1) [below=of w0] {$\dfrac{\sfivec}{\text{EXPTIME complete}}$};
\node[world] (w2) [right=of w1] {\lc};
\node[world] (w3) [right=of w2] {\lcdefu};
\node[world] (w4) [below=of w2] {\lcu};
\node[rectworld] (w5) [left=of w4] {$\dfrac{\cpdl}{\text{in EXPTIME}}$};
\path[->, dashed] (w0) edge node[above, font=\tiny] {PTIME} node[below, font=\tiny] {(Lemma~\ref{lem:red-to=})} (w0b);
\path[->, solid] (w0b) edge (w3);
\path[->, dashed] (w1) edge node[above, font=\tiny] {PTIME} node[below, font=\tiny] {(Lemma~\ref{lem:S5C-C})} (w2);
\path[->, solid] (w2) edge (w3);
\path[->, dashed] (w3) edge node[above, sloped, font=\tiny] {PTIME} node[below, sloped, font=\tiny] {(Lemma~\ref{lem:CDEF-CU})} (w4);
\path[->, dashed] (w4) edge node[above, font=\tiny] {PTIME} node[below, font=\tiny] {(Lemma~\ref{lem:CU-CPDL})} (w5);
\end{tikzpicture}
\caption{Roadmap of proofs for the complexity of satisfiability problems for logics with common knowledge, excluding update and quantifying modalities. Boxed nodes display known complexity results. A solid arrow from one logic to another indicates that the satisfiability problem for the former logic is a subproblem for the latter. A dashed arrow labeled ``PTIME'' denotes a polynomial-time reduction from the satisfiability problem for the source logic to that of the target logic. The EXPTIME completeness of \sfivec is from \cite[Section 3.5]{FHMV1995}. The EXPTIME upper bound for \cpdl is from \cite[Corallary 7.7]{PT1991}. The EXPTIME completeness of \kutwo is from \cite[Corallary 5.4.8]{Spaan1993}.
}\label{fig:sat-C}
\end{figure}

\subsubsection{Reduction from \texorpdfstring{\lGA}{LCDEFU} to \texorpdfstring{\lcu}{LCU}}

A procedure is introduced that transforms any formula in \langGA into a formula in \langcu, preserving satisfiability through the transformation.

The concept of a formula's closure, as introduced in Definition~\ref{def:closure}, and the convention of designated agents and skills, as established in Convention~\ref{conv:agents}, are utilized in the following discussion. The rewriting process presented below adapts techniques from Definitions~\ref{def:rewrite-DtoKD} and \ref{def:re-toD}, with a key simplification enabled by the common knowledge operators ($C_G$) and the universal modality ($U$), as detailed in the following definition.

\begin{defi}[Rewriting]
\label{def:rw-G-CA}
For an \langGA-formula $\phi$,  the \langcu-formula $\rho(\phi)$ is constructed by applying the following steps sequentially:
\begin{enumerate}
\item Transform $\phi$ into $\phi \wedge \univ \big(\bigwedge_{\chi \in \mu(\phi)} \chi \big) $, where $\mu(\phi)$ is the set of the following formulas (with $a \in A_\phi$, $G, H, I, J \in G_\phi$ and $\psi \in \cl(\phi) \cup \{ C_G \chi \mid \chi \in \cl(\phi) \text{ and } G \in G_\phi\}$):
	\begin{enumerate}
	\item $F_G \psi \ra K_a\psi$, for $a \in G$
	\item $K_a\psi \ra D_G\psi$, for $a \in G$
	\item $F_H\psi \ra F_G\psi$, for $G \subseteq H$
	\item $D_G \psi \ra D_H\psi$, for $G \subseteq H$
	\item $F_I\psi \ra D_J\psi$, for $I \cap J \neq \emptyset$
	\item $E_I \psi \leftrightarrow \bigwedge_{b\in I} K_b \psi$
	\item $(D_{\{a\}} \psi \lra K_a \psi) \wedge (E_{\{a\}} \psi \lra K_a \psi) \wedge (F_{\{a\}} \psi \lra K_a \psi)$
	\end{enumerate}

\item For each agent $a \in \ag$, replace every occurrence of $K_a$ with $K_{f(K_a)}$;

\item For each group $G \in \gr$, replace every occurrence of $D_G$ with $K_{f(D_G)}$, $E_G$ with $K_{f(E_G)}$, and $F_G$ with $K_{f(F_G)}$;

\item For each group $G \in \gr$, replace every occurrence of $C_G$ with $C_{f(C_G)}$, where $f(C_G) = \{ f(K_a) \mid a \in G \}$.
\end{enumerate}

Define $\rho_1(\phi)$ as the result of applying only Step (1), and $\rho_{234}(\phi)$ as the result of applying Steps (2)--(4) sequentially to $\phi$. Then, $\rho_1(\phi)$ is an \langG-formula, while  $\rho(\phi)$ and $\rho_{234}(\phi)$ are \langcu-formulas, with $\rho(\phi) = \rho_{234}(\rho_1(\phi))$.
\end{defi}

\begin{lem}[Invariance of rewriting]\label{lem:sat-toCA}\label{lem:CDEF-CU}
\begin{enumerate}
\item For any \langGA-formula $\phi$, $\phi$ is satisfiable (in \lGA) if and only if $\rho(\phi)$ is satisfiable (in \lcu).
\item The satisfiability problem for $\l_{CDEF}$ is polynomial-time reducible to that for $\l_{C\univ}$.
\end{enumerate}
\end{lem}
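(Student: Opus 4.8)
The plan is to follow the two-part template of the reductions from \ldef to \ld (Lemma~\ref{lem:sat-toD}) and from \ld to \kdn (Lemma~\ref{lem:sat-DtoKD}), adapting the construction to the presence of the common knowledge operators $C_G$ and the universal modality $\univ$. Part~(2) is routine: exactly the bookkeeping of Lemma~\ref{lem:red-toD} shows that $\rho$ runs in polynomial time, since Step~(1) of Definition~\ref{def:rw-G-CA} lists $\mu(\phi)$ (of polynomial size, as $\phi$ has at most $|\phi|$ subformulas, at most $|\phi|$ modal operators, and groups of size at most $|\phi|$) once under a single $\univ$, and Steps~(2)--(4) replace operators in linear time; and since \langG $\subseteq$ \langGA and $\rho$ sends a \langG-formula (which contains no $\univ$) to a \langcu-formula, part~(1) specialises to a polynomial-time reduction of \lG-satisfiability (equivalently $\l_{CDEF}$-satisfiability) to \lcu-satisfiability.

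For part~(1), left to right, suppose $M, w \models \phi$ with $M = (W, E, C, \beta)$. First check $M, w \models \rho_1(\phi)$: every formula in $\mu(\phi)$ is valid (it records a monotonicity, inclusion, or definitional fact relating $K_a$, $D_G$, $E_G$, $F_G$), so $\univ\big(\bigwedge_{\chi \in \mu(\phi)} \chi\big)$ holds at every world. This is exactly where $\univ$ replaces the bounded iteration $\bigwedge_{0 \le i \le |\phi|} K^{i}_c(\cdots)$ of Definitions~\ref{def:rewrite-DtoKD} and~\ref{def:re-toD}: a $C_G$-formula can depend on worlds arbitrarily far away, so the conjuncts must be enforced globally, not merely up to depth $|\phi|$. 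Then take $M' = (W, E', C', \beta)$ with $f(K_a) \in E'(u,v) \iff C(a) \subseteq E(u,v)$, $f(D_G) \in E'(u,v) \iff \bigcup_{a \in G} C(a) \subseteq E(u,v)$, and similarly for $f(E_G), f(F_G)$, and $C'(a) = \{a\}$. A routine induction on $\psi$ gives $M, u \models \psi \iff M', u \models \rho_{234}(\psi)$; the new case $\psi = C_G\chi$ works because under this translation the $f(C_G)$-reachability in $M'$ (with $f(C_G) = \{f(K_a) \mid a \in G\}$) coincides with the $G$-reachability in $M$, so the two fixpoints coincide. Applying the equivalence to $\psi = \rho_1(\phi)$ and using $\rho(\phi) = \rho_{234}(\rho_1(\phi))$ yields $M', w \models \rho(\phi)$.

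For part~(1), right to left, suppose $M, w \models \rho(\phi)$ in a \lcu-model $M$ (automatically symmetric). One reconstructs a \langGA-model by the tree unravelling of Lemma~\ref{lem:sat-toD}, whose purpose is to interpret the whole family of group operators coherently, respecting the structural inclusions ($G \subseteq H$ for $D_G/D_H$ and $F_H/F_G$, and $I \cap J \neq \emptyset$ for $F_I/D_J$): let $W_0$ consist of triples $(u, G, +)$ and $(u, G, -)$ for $u \in W$ and $G \in G_\phi$, together with a distinguished root based at $w$; let $W_1$ be the finite sequences over $W_0$ starting at the root; set $C'(a) = \{ G \subseteq A_\phi \mid a \in G \in \gr \}$ and let $E'(\sigma, \sigma')$ be the set of groups $H \subseteq A_\phi$ with $H \cap G \neq \emptyset$ when $\sigma, \sigma'$ differ by a ``$+$''-extension labelled $G$ meeting the $\cl(\phi)$-side condition, the set of groups $H$ with $G \subseteq H$ for such a ``$-$''-extension, and $\emptyset$ otherwise. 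As in Lemma~\ref{lem:sat-toD}, $E'$ is symmetric and its values never exhaust $\sk$, so $M'$ is a model, and the truth lemma $M, \tail(\sigma) \models \rho_{234}(\psi) \iff M', \sigma \models \psi$ is proved by induction on $\psi$, the Boolean and $K_a/D_G/E_G/F_G$ cases going through exactly as there, using the conjuncts of Definition~\ref{def:rw-G-CA}(1a)--(1g) (which hold at every world of $M$ thanks to $\univ$) to carry the side conditions across tree edges.

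The main obstacle is the new case $\psi = C_G\chi$, with $\rho_{234}(\psi) = C_{f(C_G)}\rho_{234}(\chi)$: the bounded induction ``$n + k \le |\phi|$'' of Lemma~\ref{lem:sat-toD} fails, because a $C_G$-formula forces truth along $G$-paths — hence over sequences in $W_1$ — of unbounded length. I would handle it with an auxiliary induction on the length of the witnessing $G$-path in $M'$: a $G$-step from $\sigma$ (an ``$a$-step'', $a \in G$, meaning $C'(a) \subseteq E'(\sigma, \sigma')$) transports, via the definition of $E'$ and the globally available conjuncts of Definition~\ref{def:rw-G-CA}(1) — crucially including the instances where $\psi$ ranges over $\{ C_G\chi' \mid \chi' \in \cl(\phi),\, G \in G_\phi \}$, which let $C_{f(C_G)}\rho_{234}(\chi)$ propagate — to an $f(C_G)$-reachability step in $M$; so the tails of the $G$-reachable nodes of $M'$ are exactly worlds of $M$ at which $C_{f(C_G)}\rho_{234}(\chi)$, hence $\rho_{234}(\chi)$, holds, and the induction hypothesis for $\chi$ closes the case. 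Verifying precisely that the ``$\pm$''-labelled tree steps do correspond to $f(K_a)$-steps for the $C_G$-relevant agents, and that the side conditions of Definition~\ref{def:rw-G-CA}(1) are strong enough for $C_G$-formulas, is the delicate heart of the argument; once it is in place, $M'$ at its root satisfies $\phi$, so $\phi$ is \lGA-satisfiable.
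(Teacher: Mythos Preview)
Your proposal is correct and follows essentially the same approach as the paper's proof: the left-to-right direction builds $M'$ on the same world set with the edge function encoding the group operators via the designated agents $f(K_a), f(D_G), f(E_G), f(F_G)$, and the right-to-left direction uses the same tree unravelling over sequences in $W_1$, with the unrestricted truth lemma (no ``$n+k \le |\phi|$'' bound) enabled by $\univ$ enforcing $\mu(\phi)$ globally; the $C_G$ case is handled, as you anticipate, by an auxiliary induction along the witnessing path, propagating $C_{f(C_G)}\rho_{234}(\chi)$ step by step. One small sharpening: in the paper the side conditions $(\dag_2)$ and $(\dag_4)$ on $E'$ are themselves taken over the extended set $\cl(\phi) \cup \{C_G\chi \mid \chi \in \cl(\phi),\, G \in G_\phi\}$, not just $\cl(\phi)$ --- this is what lets the $K_a$-style step argument apply to the formula $C_G\chi$ at each intermediate node of the path, so you should build that into the definition of $E'$ rather than only into $\mu(\phi)$.
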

\begin{proof}
(1) The proof adapts the structure of Lemma~\ref{lem:sat-toD}, with some notations assumed familiar; readers may consult Lemma~\ref{lem:sat-toD} for additional details.

Left to right. Suppose $\phi$ is satisfied at a world $w$ in a model $M = (W, E, C, \beta)$. First, verify that $M, w \models \rho_1(\phi) = \phi \wedge U \big( \bigwedge_{\chi \in \mu(\phi)} \chi \big)$ (Definition~\ref{def:rw-G-CA}, Step (1)). The formulas in $\mu(\phi)$ are valid implications or equivalences by the semantics, so $\rho_1(\phi)$ is true at $w$.
Construct a model $M' = (W, E', C', \beta)$, adapting the model $M'$ introduced in the left-to-right direction of the proof of Lemma~\ref{lem:sat-toD} by deleting ``$c \in E'(u, v)$'' from the conditions of $E'$. By induction on $\psi \in \langG$, it can be shown that $M, u \models \psi$ iff $M',u \models \rho_{234}(\psi)$ for all $u \in W$. Case $\psi = C_G \chi$:
\[\begin{tabular}{@{}l@{\ \ }l@{}} 
& $M, u \not \models C_G \chi$ (let $u = u_0$)
\\
iff & There exist $u_0, \dots, u_n \in W$ and $a_1, \dots, a_n \in G$:\\
& for all $1 \leq i \leq n:  C(a_i) \subseteq E ( u_{i-1}, u_{i} )$ and $M, u_n \not \models \chi$
\\
iff & There exist $u_0, \dots, u_n \in W$ and $a_1, \dots, a_n \in G$:\\
& for all $1 \leq i \leq n:  f(K_{a_i}) \in E' ( u_{i-1}, u_{i} )$ and $M', u_n \not \models \rho_{234}(\chi)$
\\
iff & There exist $u_0, \dots, u_n \in W$ and $f(K_{a_1}), \dots, f(K_{a_n}) \in f(C_G)$:\\
& for all $1 \leq i \leq n:  C'(f(K_{a_i})) \subseteq E' ( u_{i-1}, u_{i} )$ and $M', u_n \not \models \rho_{234}(\chi)$
\\
iff & $M', u \not \models C_{f(C_G)} \rho_{234}(\chi)$
\\
iff & $M', u \not \models \rho_{234}(C_G \chi)$.
\end{tabular}\]
Given $M, w \models \rho_1(\phi)$ and $\rho(\phi) = \rho_{234}(\rho_1(\phi))$, it follows that $M', w \models \rho(\phi)$, proving $\rho(\phi)$ is satisfiable.

Right to left. Suppose $\rho(\phi)$ is satisfied at a world $w$ in a model $M = (W, E, C, \beta)$. Let:
\begin{itemize}
\item $W_0 = \{ (u, G, +) \mid u \in W \text{ and } G \in G_\phi \} \cup  \{ (u, G, -) \mid u \in W  \text{ and } G \in G_\phi \} \cup \{ (w, A_\phi, +) \}$;
\item $W_1$ be the set of finite sequences of elements of $W_0$ starting with $(w, A_\phi, +)$.
\end{itemize}
 For any $\sigma \in W_1$, let $\tail(\sigma)$ denote the world component of the last element in $\sigma$.
Construct $M' = (W_1, E', C', \beta')$, where:
\begin{itemize}
\item $E': W_1 \times W_1 \to \wp(\wp(A_{\phi}))$ is defined for all $\sigma, \sigma' \in W$ and $G \in \gr$ as:
\[
E' (\sigma, \sigma') = \left \{
\begin{array}{ll}
\{ H \subseteq A_{\phi} \mid H \in \gr \text{ and }H \cap G \neq \emptyset\}, & \text{if $(\dag_1)$ and $(\dag_2)$,}
\\
\{ H \subseteq A_{\phi} \mid H \in \gr \text{ and } G \subseteq H \}, & \text{if $(\dag_3)$ and $(\dag_4)$,}
\\
\emptyset, & \text{otherwise.}
\end{array}\right.
\]
\begin{enumerate}
\item[$(\dag_1)$] Either $\sigma$ extends $\sigma'$ with $(\tail (\sigma), G, +)$ or $\sigma'$ extends $\sigma$ with $(\tail (\sigma'), G, +)$;
\item[$(\dag_2)$] For all $\psi \in \cl(\phi) \cup \{ C_G \chi \mid \chi \in \cl(\phi) \text{ and } G \in G_\phi\}$, $M, \tail(\sigma) \models K_{f(D_G)} \rho_{234}(\psi) \Longrightarrow M, \tail(\sigma') \models \rho_{234}(\psi)$, and $M, \tail(\sigma') \models K_{f(D_G)} \rho_{234}(\psi) \Longrightarrow M, \tail(\sigma) \models \rho_{234}(\psi)$;
\item[$(\dag_3)$] Either $\sigma$ extends $\sigma'$ with $(\tail (\sigma), G, -)$ or $\sigma'$ extends $\sigma$ with $(\tail (\sigma'), G, -)$;
\item[$(\dag_4)$] For all $\psi \in \cl(\phi) \cup \{ C_G \chi \mid \chi \in \cl(\phi) \text{ and } G \in G_\phi\}$, $M, \tail(\sigma) \models K_{f(F_G)} \rho_{234}(\psi) \Longrightarrow M, \tail(\sigma') \models \rho_{234}(\psi)$, and $M,\tail(\sigma') \models K_{f(F_G)} \rho_{234}(\psi) \Longrightarrow M, \tail(\sigma) \models \rho_{234}(\psi)$.
\end{enumerate}

\item $C':\ag \to \wp(\wp(A_{\phi}))$ with $C'(a) = \{ G \subseteq A_{\phi} \mid a \in G \in \gr \}$ for all $a \in \ag$.

\item $\beta': W_1 \to \wp(\pr)$ with $\beta' (\sigma) = \beta(\tail(\sigma))$ for all $\sigma \in W_1$.
\end{itemize}
Finite groups of agents serve as skills, justified by Footnote~\ref{ft}, since $\wp(A_\phi)$ is finite (as $A_\phi$ is) and $\sk$ is countably infinite. To confirm $M'$ is a model, note that $E'$ is symmetric (conditions are bidirectional).

Establish the following by induction on $\psi$:
\begin{quote}
For all $\psi \in \cl(\phi)$ and all $\sigma \in W_1$, $M, \tail(\sigma) \models \rho_{234}(\psi) \iff M', \sigma \models \psi$.
\end{quote}
Since $M, w \models \rho(\phi)$ and $\rho(\phi)$ includes $\rho_{234}(\phi)$, if the claim holds, then $M', \langle (w, A_\phi, +) \rangle \models \phi$, showing that $\phi$ is satisfiable.

$\bullet$ The atomic and Boolean cases are straightforward and omitted.

$\bullet$ The cases for individual ($K_a$), distributed ($D_G$) and field ($F_G$) knowledge mirror the proof of Lemma~\ref{lem:sat-toD}. Here, we detail only the case $\psi = D_G \chi$ with $|G| > 1$ to highlight subtle differences, where $\rho_{234}(\psi) = K_{f(D_G)} \rho_{234}(\chi)$. 

\underline{Left to right.} Suppose $M', \sigma \not \models D_G \chi$. Then there exists $\sigma' \in W_1$ such that $\bigcup_{a \in G} C'(a) \subseteq E'(\sigma,\sigma')$ and $M', \sigma' \not \models \chi$, where $\bigcup_{a \in G} C'(a) = \{ H \subseteq A_{\phi} \mid H \in \gr \text{ and } H \cap G \neq \emptyset\} $. By the definition of $E'$, one of two cases applies:
\begin{enumerate}
\item There exists $G' \in \gr$ such that: (i) either $\sigma$ extends $\sigma'$ with $(\tail (\sigma), G', +)$ or $\sigma'$ extends $\sigma$ with $(\tail (\sigma'), G', +)$, (ii) $M, \tail(\sigma) \models K_{f(D_{G'})} \rho_{234} (\chi)$ implies $M, \tail(\sigma') \models \rho_{234} (\chi)$, and (iii) $E' (\sigma, \sigma') = \{ H \subseteq A_\phi \mid H \in \gr \text{ and } H \cap G' \neq \emptyset \}$;
\\
(In this case, $\{\{a\} \mid a\in G\} \subseteq C'(a) \subseteq E' (\sigma, \sigma')$, implying $\{a\} \cap G' \neq \emptyset$ for any $a \in G$, hence $G \subseteq G'$.)
\item There eixsts $G' \in \gr$ such that: (i) either $\sigma$ extends $\sigma'$ with $(\tail (\sigma), G', -)$ or $\sigma'$ extends $\sigma$ with $(\tail (\sigma'), G', -)$, (ii) $M, \tail(\sigma) \models K_{f(F_{G'})} \rho_{234} (\chi)$ implies $M, \tail(\sigma') \models \rho_{234} (\chi)$, and (iii) $E' (\sigma, \sigma') = \{ H \subseteq A_{\phi} \mid H \in \gr \text{ and } G' \subseteq H \}$.
\\
(In this case, $\{\{a\} \mid a\in G\} \subseteq C'(a) \subseteq E' (\sigma, \sigma')$, it follows that $G' \subseteq \{a\}$ for each $a \in G$, which is impossible since $|G| > 1$.)
\end{enumerate}
Thus, only Case (1) holds. Since $M', \sigma' \not \models \chi$, by induction hypothesis, $M, \tail(\sigma') \not \models \rho_{234}(\chi)$. Therefore, $M,\tail(\sigma) \not \models K_{f(D_{G'})} \rho_{234}(\chi)$. Since $M, w \models \rho(\phi)$, then by Definition~\ref{def:rw-G-CA}(1d), $M, w \models \univ (K_{f(D_{G})} \rho_{234}(\chi) \ra K_{f(D_{G'})} \rho_{234}(\chi))$,  it follows that $M, \tail(\sigma) \not \models K_{f(D_{G})} \rho_{234}(\chi)$.
\underline{Right to left.} Suppose $M, \tail(\sigma) \not \models K_{f(D_G)} \rho_{234}(\chi)$. Then there exists $u \in W$ such that: (i) $C(f(D_G)) \subseteq E(\tail(\sigma), u)$ and (ii) $M, u \not\models \rho_{234}(\chi)$. By (ii) and induction hypothesis, it follows that $M',\sigma' \not\models \chi$ when $\sigma'$ be $\sigma$ extended with $(u, G, +)$. Then by (i) and the semantics, $M, \tail(\sigma) \models K_{f(D_G)} \rho_{234}(\theta)$ implies $M, \tail(\sigma) \models \rho_{234}(\theta)$ for all $\theta \in \cl(\phi) \cup \{ C_G \chi \mid \chi \in \cl(\phi) \text{ and } G \in G_\phi\}$. Conversely, $M,\tail(\sigma') \models K_{f(D_G)} \rho_{234}(\theta)$ implies $M, \tail(\sigma) \models \rho_{234}(\theta)$ for all $\theta \in \cl(\phi) \cup \{ C_G \chi \mid \chi \in \cl(\phi) \text{ and } G \in G_\phi\}$. By definition, $\bigcup_{a \in G} C'(a) = \{H \subseteq A_\phi \mid H\in \gr \text{ and } H\cap G \neq \emptyset \}$. Thus, $\bigcup_{a \in G}C'(a) \subseteq E'(\sigma, \sigma')$, so $M', \sigma \not\models D_G \chi$.

$\bullet$ Case $\psi = C_G \chi$: $\rho_{234}(\psi) = C_{f(C_G)} \rho_{234}(\chi)$.
\underline{Left to right.} Suppose $M', \sigma \not \models C_G \chi$. Then there exist $\sigma_1, \dots, \sigma_n \in W_1$ and $a_1, \dots, a_n \in G$ such that: (i) $C'(a_1) \subseteq E'(\sigma,\sigma_1)$, $C'(a_2) \subseteq E'(\sigma_1, \sigma_2)$, \dots, $C'(a_n) \subseteq E'(\sigma_{n-1}, \sigma_n)$, and (ii) $M', \sigma_n \not\models \chi$. By (ii) and induction hypothesis, $M, \tail(\sigma_n) \not \models \rho_{234}(\chi)$. By an argument similar to the case for $K_a$, $M, \tail(\sigma_{n-1}) \not \models K_{f(K_a)} \rho_{234}(\chi)$, and so $M, \tail(\sigma_{n-1}) \not \models C_{f(C_G)} \rho_{234}(\chi)$. Do the inference again, $M, \tail(\sigma_{n-2}) \not \models K_{f(K_a)} C_{f(C_G)} \rho_{234}(\chi)$ and $M, \tail(\sigma_{n-2}) \not \models C_{f(C_G)} \rho_{234}(\chi)$. Repeating backwards, $M, \tail(\sigma) \not\models K_{f(K_a)} C_{f(C_G)} \rho_{234}(\chi)$ and $M, \tail(\sigma) \not \models C_{f(C_G)} \rho_{234}(\chi)$.
\underline{Right to left.} Suppose $M, \tail(\sigma) \not \models C_{f(C_G)} \rho_{234}(\chi)$. Then there exist $u_1, \dots, u_n \in W$ and $a_1, \dots, a_n \in G$ such that: $C(f(K_{a_1})) \subseteq E(\tail(\sigma),u_1)$, $C(f(K_{a_2})) \subseteq E(u_1,u_2)$, \dots, $C(f(K_{a_n})) \subseteq E(u_{n-1},u_n)$, and (ii) $M, u_n \not \models \rho_{234}(\chi)$. By (ii) and induction hypothesis, $M',\sigma_n \not \models \chi$. Let $\sigma_1$ extend $\sigma$ with $(u_1,\{a_1\},+)$, $\sigma_2$ extend $\sigma_1$ with $(u_2, \{a_2\}, +)$, \dots, $\sigma_n$ extend $\sigma_{n-1}$ with $(u_n, \{a_n\}, +)$. Similarly to case for $K_a$, $C'(a_1) \subseteq E'(\sigma,\sigma_1)$, $C'(a_2) \subseteq E'(\sigma_1,\sigma_2)$, \dots, $C'(a_n) \subseteq E'(\sigma_{n-1},\sigma_n)$. Hence $M', \sigma \not \models C_G \chi$.

$\bullet$ Case $\psi = \univ \chi$: $\rho_{234}(\psi) = \univ \rho_{234}(\chi)$. $M, \tail(\sigma) \not \models \univ \rho_{234}(\chi)$, iff there exists $u \in W$ such that $M, u \not\models \rho_{234}(\chi)$, iff there exists $\sigma' \in W_1$ such that $\tail(\sigma') = u$ and $M', \sigma' \not \models \chi$, iff $M',\sigma \not \models \univ \chi$.

(2) follows from (1) and the fact that $| \rho(\phi) |$ is polynomial in $| \phi |$, per Definition~\ref{def:rw-G-CA}.
\end{proof}

\subsubsection{Reduction from \texorpdfstring{\sfivec}{S5C2} to \texorpdfstring{\lc}{LC}}

The logic \sfivec is the two-agent epistemic logic with common knowledge, built upon the modal S5 system. It is based on the language \langc, restricted to only two agents (let them be $a, b \in \ag$; hereafter the language is referred to as ``two-agent \langc''), and interpreted over S5 models using standard Kripke semantics. It is established in \cite[Section 3.5]{FHMV1995} that the satisfiability problem for \sfivec is EXPTIME complete. In contrast, if the language \langc is interpreted over arbitrary Kripke models without S5 constraints, using standard Kripke semantics, the resulting logic is denoted \kc.

Recall that a Kripke model $(W, R, V)$ is an \emph{S5 model} if $R(a)$ is an equivalence relation---reflexive, symmetric and transitive---for all $a \in \ag$. For a group $G$, a \emph{classical $G$-path in a Kripke model $M = (W, R, V)$ from a world $w$ to a world $u$} is a finite sequence $\langle w_0, w_1, \dots, w_n \rangle$ such that $w_0 = w$, $w_n = u$, and for all $i$ where $1 \leq i \leq n$, there exists an agent $a_i \in G$ such that $(w_{i-1}, w_i) \in R(a_i)$. We write $w \creach^M_G u$ if there exists a classical $G$-path from $w$ to $u$ in $M$, omitting the superscript $M$ when the model is clear from context. For any agent $a$ and nonempty group $G$, the formulas $K_a \phi$ and $C_G \phi$ are interpreted at a world $w$ in a Kripke model $M = (W, R, V)$ as follows:
\[\begin{array}{lcl}
M, w \models K_a \phi &\iff& \text{for all $u \in W$, if $(w, u) \in R(a)$ then $M, u \models \phi$} \\
M, w \models C_G \phi &\iff&\text{for all $u \in W$, if $w \creach_G u$ then $M, u \models \phi$.}
\\
\end{array}\]

We propose a transformation that converts any two-agent \langc-formula satisfiable in \sfivec into an \langc-formula satisfiable in \lc. The concept of a formula's closure, as defined in Definition~\ref{def:closure}, will be employed in the subsequent text.

\begin{defi}[Rewriting]\label{def:rewrite-S5toC}
For a two-agent \langc-formula $\phi$, define
\[\textstyle \rho(\phi) = \phi \wedge \big(\bigwedge_{\chi \in \mu(\phi)} \chi \big) \wedge C_{\{a,b\}} (\bigwedge_{\chi \in \mu(\phi)}\chi),\]
where $\mu(\phi)$ is the collection of these formulas: (i) $K_i \psi \ra K_i K_i \psi$ and (ii) $K_i \psi \ra \psi$, where $i \in \{a, b\}$, $\psi \in \cl(\phi) \cup \{ C_G \chi \mid \chi \in cl(\phi) \text{ and } G \subseteq \{a, b\}\}$.
\end{defi}

It is clear that $\rho(\phi)$ remains a two-agent \langc-formula whenever $\phi$ is.

\begin{lem}[Invariance of rewriting]\label{lem:sat-S5toC}\label{lem:S5C-C}
\begin{enumerate}
\item For any two-agent \langc-formula $\phi$, $\phi$ is satisfiable in \sfivec if and only if $\rho(\phi)$ is satisfiable (in \lc);
\item The satisfiability problem for \sfivec is polynomial-time reducible to that for \lc.
\end{enumerate}
\end{lem}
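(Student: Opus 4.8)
The plan is to prove (1) by constructing an explicit model in each direction, and then to read off (2) from the fact that the rewriting $\rho$ of Definition~\ref{def:rewrite-S5toC} is computable in polynomial time and that every two-agent \langc-formula is an \langc-formula.

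For the forward direction of (1), suppose $N=(W,R,V)$ is an S5 Kripke model with $N,w\models_{\sfivec}\phi$. I fix an injection $i\mapsto s_i$ from \ag\ into \sk\ and build the weighted Kripke model $M=(W,E,C,\beta)$ by putting $\beta=V$, $C(i)=\{s_i\}$ for every agent, and $E(x,y)=\{s_i\mid i\in\{a,b\}\text{ and }(x,y)\in R(i)\}$. Since $E(x,y)\subseteq\{s_a,s_b\}$ is finite and \sk\ is infinite, positivity holds vacuously, and symmetry of $E$ follows from symmetry of the $R(i)$; so $M$ is a genuine model. For $i\in\{a,b\}$ one has $C(i)\subseteq E(x,y)$ iff $(x,y)\in R(i)$, hence the accessibility relations induced in $M$ for $a$ and $b$ coincide with $R(a),R(b)$, and a routine induction shows that $M$ and $N$ assign the same truth value to every two-agent \langc-formula at every world; in particular $M,w\models\phi$. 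As $N$ is S5, every instance $K_i\psi\to\psi$ and $K_i\psi\to K_iK_i\psi$ with $i\in\{a,b\}$ is valid in $N$ and hence true at every world; transferring these equalities of truth value to $M$ gives $M,u\models\bigwedge_{\chi\in\mu(\phi)}\chi$ for all $u$, so $M,w\models\rho(\phi)$.

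The converse direction carries the weight of the argument. Suppose $M,w\models_{\l}\rho(\phi)$ in some weighted Kripke model $M$; I first replace $M$ by its submodel generated from $w$ along $R(a)\cup R(b)$, which preserves \langc-truth, so that every world is now reachable from $w$. The conjuncts $\bigwedge\mu(\phi)$ and $C_{\{a,b\}}\bigwedge\mu(\phi)$ of $\rho(\phi)$ then guarantee $M,u\models\bigwedge_{\chi\in\mu(\phi)}\chi$ at \emph{every} world $u$. Writing $R(i)=\{(x,y)\mid C(i)\subseteq E(x,y)\}$ for $i\in\{a,b\}$ (a symmetric relation), I let $R'(i)$ be its reflexive--transitive closure and set $N=(W,R',\beta)$; each $R'(i)$ is then an equivalence relation, so $N$ is an S5 model. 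The key claim, proved by induction on $\psi$ ranging over $\cl(\phi)\cup\{C_G\theta\mid\theta\in\cl(\phi),\ G\subseteq\{a,b\}\}$, is that $M,u\models\psi$ iff $N,u\models\psi$ for every $u$. The atomic and Boolean cases are immediate; the two subtle cases are $K_i\chi$ and $C_G\chi$. For $K_i\chi$: the inclusion $R(i)\subseteq R'(i)$ gives $N\models\Rightarrow M\models$ at once; conversely, given $M,u\models K_i\chi$ and $(u,v)\in R'(i)$, I walk along an $R(i)$-path $u=u_0,\dots,u_n=v$, using the validity of $K_i\chi\to K_iK_i\chi$ (a member of $\mu(\phi)$, true at each $u_j$) to propagate $M,u_j\models K_i\chi$ down the path and then $K_i\chi\to\chi$ at $u_n$ to get $M,v\models\chi$, which the induction hypothesis turns into $N,v\models\chi$. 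For $C_G\chi$ one compares $\reach^M_G$ with $\creach^N_G$: every $R'(i)$-step factors through a (possibly empty) $R(i)$-path, so an $N$-path from $u$ to $v$ either unfolds into a genuine $M$-path (giving $u\reach^M_G v$) or collapses to $u=v$, and the latter is handled via the valid implication $C_G\chi\to K_i\chi$ ($i\in G\subseteq\{a,b\}$) followed by $K_i\chi\to\chi\in\mu(\phi)$. Applying the claim to $\psi=\phi$ yields $N,w\models\phi$, so $\phi$ is \sfivec-satisfiable.

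For (2), $\cl(\phi)$ has $O(|\phi|)$ elements and there are at most three nonempty $G\subseteq\{a,b\}$, so $\mu(\phi)$ consists of $O(|\phi|)$ formulas each of size $O(|\phi|)$; hence $\rho(\phi)$ has size $O(|\phi|^2)$ and is produced in polynomial time, and together with (1) this is a polynomial-time reduction from \sfivec-satisfiability to \lc-satisfiability. I expect the main obstacle to be the inductive step for $C_G$ in the converse direction: one must verify that passing to the reflexive--transitive closure creates no reachable world that falsifies a common-knowledge formula, which is precisely what the globalized reflexivity and transitivity axioms --- spread across the generated submodel by the $C_{\{a,b\}}$-conjunct --- are designed to rule out.
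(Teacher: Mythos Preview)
Your proposal is correct and follows essentially the same approach as the paper: both directions build the obvious model translation, and for the converse you pass to the reflexive--transitive closure of the induced relations and run an induction over $\cl(\phi)\cup\{C_G\theta\mid\theta\in\cl(\phi),\,G\subseteq\{a,b\}\}$, using the globalized $T$ and $4$ axioms in $\mu(\phi)$ to push $K_i$-knowledge along paths and to absorb reflexive steps. The only cosmetic differences are that the paper inserts an intermediate \kc\ model before taking the closure, and restricts the inductive claim to worlds $\{a,b\}$-reachable from $w$ rather than passing to the generated submodel as you do; these are equivalent manoeuvres.
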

\begin{proof}
Left to right. Suppose $\phi$ is satisfied at a world $w$ in an S5 model $N = (W, R, V)$, i.e., $N, w \models_{\sfivec} \phi$. It can be readily confirmed that $N, w \models_\sfivec \rho (\phi)$. Construct a model $M = (W, E, C, \beta)$ as follows:
\begin{itemize}
\item $E: W \times W \to \wp(\ag)$ with $E(u, v) = \{c \in \{a, b\} \mid (u, v) \in R(c) \}$ for all $u, v \in W$;
\item $C: \ag \to \wp(\ag)$ with $C(x) = \{x\}$ for all $x \in \ag$;
\item $\beta = V$.
\end{itemize}
Using agents as skills is justified by Footnote~\ref{ft}, and $M$ can be verified to be a model. 
For any $u, v \in W$ and $x \in \{a, b\}$, $(u, v) \in R(x) \iff C(x) \subseteq E(u, v)$. It can be shown by induction that for all two-agent \langc-formulas $\psi$ and all $u \in W$, $N, u \models_\sfivec \psi \iff M, u \models_\lc \psi$. Hence, $M, w \models_{\lc} \rho(\phi)$, proving $\rho(\phi)$ is satisfiable in \lc.

Right to left. Suppose $\rho(\phi)$ is satisfied at a world $w$ in a model $M = (W, E, C, \beta)$, i.e., $M, w \models \rho(\phi)$.
Construct a two-agent Kripke model $N = (W, R, V)$ where:
\begin{itemize}
\item For $x \in \{a, b\}$, $R(x) = \{ (u, v) \mid C(x) \subseteq E(u, v) \}$,
\item $V = \beta$.
\end{itemize}
It can be shown by induction that for all $u \in W$ and all two-agent \lc-formulas $\psi$, $N, u \models_\kc \psi \iff M, u \models_\lc \psi$. Consequently, $N, w \models_\kc \rho(\phi)$, so $N, w \models_\kc \phi$. 

Construct a two-agent S5 model $N^* = (W, R^*, V)$ where $R^*(a)$ and $R^*(b)$ are the reflexive and transitive closures of $R(a)$ and $R(b)$, respectively. We show the following by induction:
\begin{quote}
For all two-agent \langc-formulas $\psi \in \cl(\phi)$ and all $u \in W$ such that $u = w$ or $w \creach^N_{\{a, b\}} u$, $N, u \models_\kc \psi \iff N^*, u \models_\sfivec \psi$.
\end{quote}
Consequently, $N^*, w \models_\sfivec \phi$, proving $\phi$ is satisfiable in \sfivec.

$\bullet$ Atomic and boolean cases: straightforward.

$\bullet$ Case $\psi = K_a \chi$. If $N^*, u \models_{\sfivec} K_a \chi$, then for all $v$ with $(u, v) \in R^*(a)$, $N^*, v \models_{\sfivec} \chi$. Since $R(a) \subseteq R^*(a)$, this implies $N, u \models_{\kc} K_a \chi$. Conversely, suppose that $N^*, u \not \models_\sfivec K_a \chi$. Then there exists $v \in W$ with $(u, v) \in R^*(a)$ and $N^*, v \not\models_\sfivec \chi$. Two subcases arise:
\begin{itemize}[leftmargin=3em]
\item[(i)] $u = v$: Since $N, w \models_\kc \big(\bigwedge_{\theta \in \mu(\phi)} \theta \big) \wedge C_{\{a,b\}} (\bigwedge_{\theta \in \mu(\phi)} \theta)$, $N, v \models_\kc K_a \chi \to \chi$. By induction, $N, v \not\models_\kc \chi$, so $N, u \not \models_\kc K_a \chi$;
\item[(ii)] $u \neq v$ and $u \creach_{\{a\}} v$ in model $N$. Suppose towards a contradiction that $N, u \models_\kc K_a \chi$. Since $w \creach^N_{\{a, b\}} u$ and $N, w \models_\kc \big(\bigwedge_{\theta \in \mu(\phi)} \theta \big) \wedge C_{\{a,b\}} (\bigwedge_{\theta \in \mu(\phi)} \theta)$, $N, u \models_\kc (K_a \chi \ra K_a K_a \chi) \wedge C_{\{a, b\}} (K_a \chi \ra K_a K_a \chi)$. Thus, by the semantics, $N, u \models_\kc K_a^n \chi$ for any $n \geq 1$, implying $N, v \models \chi$ contradicting $N, v \not\models_\kc \chi$ (by induction). Hence, $N, u \not\models_\kc K_a \chi$.
\end{itemize}

$\bullet$ Case $\psi = K_b \chi$, $\psi = C_{\{a\}} \chi$ and $\psi = C_{\{b\}} \chi$ are similar.

$\bullet$ Case $\psi = C_{\{a,b\}} \chi$. First, observe that for all $v \in W$, $u \creach^{N^*}_{\{a,b\}} v$ iff $u = v$ or $u \creach^N_{\{a,b\}} v$. This holds because $R^*(x)$ extends $R(x)$ with reflexivity (adding $u = v$) and transitivity (already covered by the definition of an $\{a, b\}$-path in $N$). Thus:
\[\begin{tabular}{@{}l@{\ \ }l@{}} 
& $N, u \not \models_\kc C_{\{a,b\}} \chi$ $(*)$
\\
iff & $N, u \not \models_\kc \chi$, or there exists $v \in W$ with $u \creach^{N}_{\{a, b\}} v$ such that $N, v \not \models_\kc \chi$ $(\dag)$
\\
iff & There exists $v \in W$ with $u = v$ or $u \creach^{N}_{\{a, b\}} v$ such that $N, v \not \models_\kc \chi$
\\
iff & There exists $v \in W$ with $u \creach^{N^*}_{\{a, b\}} v$ such that $N^*, v \not \models_\sfivec \chi$
\\
iff & $N^*, u \not \models_\sfivec C_{\{a,b\}} \chi$.
\end{tabular}\]

$(*)$ to $(\dag)$ follows from the semantics. From $(\dag)$ to $(*)$, suppose $N, u \not \models_\kc \chi$, then similar to (i), $N, u \not \models_\kc K_a\chi$, so $N, u \not \models_\kc C_{\{a, b\}} \chi$.

(2) The rewriting $\rho(\phi)$ (Definition~\ref{def:rewrite-S5toC}) is computable in polynomial time, as $\mu(\phi)$ is linear in $|\cl(\phi)|$, and the reduction preserves satisfiability by statement (1).
\end{proof}

\subsubsection{Reduction from \texorpdfstring{\lcu}{LCU} to \texorpdfstring{\cpdl}{CPDL}}

We propose a transformation that converts any satisfiable \lcu-formula (in \lcu) into a satiafiable formula in Combinatory Propositional Dynamic Logic (\cpdl) introduced in \cite{PT1991}. The satisfiability problem for \cpdl is known to be in EXPTIME \cite[Corollary~7.7]{PT1991}. The syntax and semantics of \cpdl are briefly outlined below.

The syntax of $\cpdl$ comprises:
\begin{align*}
&\text{(Formulas)}& \phi ::= &\ p \mid \neg \phi \mid (\phi \ra \phi) \mid [\pi] \phi \\
&\text{(Programs)}& \pi ::= &\ a \mid (\pi;\pi) \mid (\pi \cup \pi) \mid \pi^* \mid \phi? \mid \nu
\end{align*}
where $p \in \pr$, $a \in \ag$ with $\ag$ treated as the set of \emph{atomic programs}, and $\nu \notin \ag$ is a distinguished \emph{universe} program. Formulas $\phi$ in this definition are called \cpdl-formulas, and $\pi$ are called \emph{programs}. The set of all programs is denoted $\Pi$.
A CPDL model is a Kripke model $N = (W, R, V)$ where $W$ is a nonempty set of worlds, $V: W \to \wp(\pr)$ is a valuation,  and $R: \Pi \to \wp(W \times W)$ assigns binary relations to programs $\pi \in \Pi$, satisfying:
\begin{itemize}
\item $R(\nu) = W \times W$ (universal relation);
\item $R((\pi_1 \cup \pi_2)) = R(\pi_1) \cup R(\pi_2)$ (union);
\item $R((\pi_1;\pi_2)) = R(\pi_1) \circ R (\pi_2)$ (composition);
\item $R(\pi^*)$ is the reflexive and transitive closure of $R(\pi)$ (iteration);
\item $R(\phi?) = \{(u,u) \in W \times W \mid M,u \models \phi\}$ (test).
\end{itemize}
The semantics $N, w \models_\cpdl \phi$ extends propositional logic with dynamic operators:
\[
N,w \models [\pi] \psi \iff \text{for all $u \in W$, if $(w,u) \in R(\pi) $, then $N, u \models \psi$}.
\]

\begin{defi}[Rewriting]\label{def:rewrite-toCPDL}
For an \langcu-formula $\phi$, the \cpdl-formula $\rho(\phi)$ is constructed as follows:
\begin{enumerate}
\item Compute $\rho_1(\phi) = \phi \wedge \univ (\bigwedge_{\chi \in \mu(\phi)}\chi)$, where $\mu(\phi)$ comprises the following formulas: $\psi \ra K_a \neg K_a \neg \psi$ and $\neg K_a \neg K_a \psi \to \psi$, for all $\psi \in \cl(\phi) \cup \{ C_G \theta \mid \text{$G$ appears in $\phi$ and $\theta \in \cl(\phi)$} \}$ and $a$ appearing in $\phi$.
\item For each $a \in \ag$, replace every occurrence of $K_a$ with $[a]$.
\item For each $G \in \gr$, replace every occurrence of $C_G$ with $[((\bigcup_{a\in G}a);(\bigcup_{a\in G}a)^*)]$.
\item Replace every occurrence of $\univ$ with $[\nu]$.
\end{enumerate}
Let $\rho_{234}(\phi)$ denote the result of applying Steps (2)--(4) to $\phi$. Then define $\rho(\phi) = \rho_{234}(\rho_1(\phi))$.
\end{defi}
It is clear from the construction that if $\phi$ is an \langcu-formula, then $\rho_1(\phi)$ remains an \langcu-formula, while both $\rho (\phi)$ and $\rho_{234}(\phi)$ are $\cpdl$-formulas.

\begin{lem}\label{lem:sat-toCPDL}\label{lem:CU-CPDL}
\begin{enumerate}
\item For any \langcu-formula $\phi$, $\phi$ is satisfiable (in \lcu) if and only if $\rho(\phi)$ is satisfiable in \cpdl;
\item The satisfiability problem for \lcu is reducible to that for \cpdl in polynomial time.
\end{enumerate}
\end{lem}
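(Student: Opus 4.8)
The plan is to follow the pattern of the previous invariance lemmas (Lemmas~\ref{lem:sat-DtoKD} and \ref{lem:sat-toCA}): prove statement (1) by giving one model construction for each direction of the biconditional, and then obtain (2) by observing that $\rho$ is computable in polynomial time. Throughout, the finitely many agents occurring in $\phi$ are identified with distinct skills via a fixed non-surjective injection $\ag \to \sk$ (as in Lemma~\ref{lem:sat-KBtoL}), so that the weighted models built below satisfy positivity automatically.

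For the direction from \lcu-satisfiability of $\phi$ to \cpdl-satisfiability of $\rho(\phi)$, I would start from a weighted model $M=(W,E,C,\beta)$ with $M,w\models\phi$ and reinterpret it as a \cpdl model $N=(W,R,V)$ by setting $R(a)=\{(u,v)\mid C(a)\subseteq E(u,v)\}$ for atomic programs $a$, $R(\nu)=W\times W$, $V=\beta$, and letting the program-forming clauses of \cpdl determine $R$ on all compound programs. Since $E$ is symmetric, every $R(a)$ is symmetric, so each instance of the schemes $\psi\ra K_a\neg K_a\neg\psi$ and $\neg K_a\neg K_a\psi\ra\psi$ is valid in $M$; hence $M,w\models\rho_1(\phi)$. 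A straightforward induction on $\psi$ then gives $M,u\models\psi\iff N,u\models\rho_{234}(\psi)$ for every world $u$: $K_a$ translates to $[a]$ verbatim, $\univ$ to $[\nu]$ (using $R(\nu)=W\times W$), and $C_G$ to $[((\bigcup_{a\in G}a);(\bigcup_{a\in G}a)^{*})]$ because that program is interpreted as the transitive closure of $\bigcup_{a\in G}R(a)$, which is exactly the set of pairs joined by a $G$-path of length at least $1$, i.e.\ the relation $\reach^M_G$ governing $C_G$. Therefore $N,w\models\rho(\phi)$.

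For the converse, suppose $N=(W,R,V)$ with $N,w\models\rho(\phi)$. Since $\rho(\phi)$ contains $[\nu]$ applied to the conjunction of all $\rho_{234}(\chi)$ with $\chi\in\mu(\phi)$, and $R(\nu)=W\times W$, every translated instance of the two symmetry schemes holds at \emph{every} world of $N$. I would then let $M=(W,E,C,\beta)$ be the weighted model with the same worlds, $\beta=V$, $C(a)=\{a\}$, and $E(u,v)=\{a\mid (u,v)\in R(a)\text{ or }(v,u)\in R(a)\}$, so that $M$ is symmetric and positivity holds by the identification convention above. The key claim, proved by induction on $\psi\in\cl(\phi)$ with \emph{no} modal-depth bound (avoidable precisely because $\mu(\phi)$ sits under $[\nu]$ and so holds globally, in contrast to Lemma~\ref{lem:sat-S5toC}), is that $M,u\models\psi\iff N,u\models\rho_{234}(\psi)$ for all $u\in W$. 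For $\psi=K_a\chi$ the $R(a)$-successors are handled directly by $[a]$, while a world $v$ with $(v,u)\in R(a)$ is handled by observing that $N,v\models\neg[a]\neg[a]\rho_{234}(\chi)$ (witnessed by $u$) and invoking the globally valid instance $\neg[a]\neg[a]\rho_{234}(\chi)\ra\rho_{234}(\chi)$ of the scheme. The case $\psi=\univ\chi$ is immediate since $R(\nu)=W\times W$ and the claim is available at every world. An alternative, equally workable route for this direction is a tree unraveling of $N$ over finite sequences, in the exact style of Lemma~\ref{lem:sat-toCA}.

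The main obstacle is the clause $\psi=C_G\chi$, where $\rho_{234}(\psi)=[((\bigcup_{a\in G}a);(\bigcup_{a\in G}a)^{*})]\rho_{234}(\chi)$ refers to unbounded reachability and $\reach^M_G$, the transitive closure of $\bigcup_{a\in G}(R(a)\cup R(a)^{-1})$, is strictly larger than the corresponding program relation in $N$. The direction $M,u\models C_G\chi\Ra N,u\models\rho_{234}(C_G\chi)$ is free, since the program relation is contained in $\reach^M_G$; the converse needs that $\rho_{234}(\chi)$ survives along an arbitrary $\reach^M_G$-path from $u$. I would argue this in two steps, both relying on $\mu(\phi)$ ranging over $\cl(\phi)\cup\{C_G\theta\mid G\text{ occurs in }\phi,\ \theta\in\cl(\phi)\}$: first, $\rho_{234}(C_G\chi)$ propagates along the path edge by edge---forward edges by transitivity of the program $X;X^{*}$, backward edges by the scheme instantiated at $C_G\chi$, using that $N,x\models\rho_{234}(C_G\chi)$ forces $N,x\models[a]\rho_{234}(C_G\chi)$ for $a\in G$; second, at the terminal world, $\rho_{234}(\chi)$ is obtained either directly from the last forward edge or, for a backward last edge, from the scheme instantiated at $\chi$. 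Making $\mu(\phi)$, the closure it ranges over, and the program-translation of $C_G$ fit together in exactly this step is the real content of the proof. Statement (2) is then routine: $\rho_1$ conjoins polynomially many instances of $\mu(\phi)$ under one $\univ$, and Steps (2)--(4) are local substitutions enlarging each occurrence of $C_G$ by only $O(|G|)$, so $|\rho(\phi)|$ is polynomial in $|\phi|$ and $\rho$ is polynomial-time computable; hence by (1) the satisfiability problem for \lcu reduces to that for \cpdl in polynomial time.
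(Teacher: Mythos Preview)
Your proposal is correct and follows essentially the same approach as the paper: identical model constructions in both directions, the same induction on $\psi\in\cl(\phi)$, and the same use of the globally valid scheme instances (via $[\nu]$) to handle reversed $R(a)$-edges. The only cosmetic difference is in the $C_G$ case of the right-to-left direction, where you propagate $\rho_{234}(C_G\chi)$ forward along the path while the paper propagates $\neg\rho_{234}(C_G\chi)$ backward; both arguments invoke exactly the same scheme instances (at $C_G\chi$ for intermediate edges, at $\chi$ for the terminal step) and are contrapositives of one another.
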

\begin{proof}
Left to right. Suppose $\phi$ is satisfiable at a world $w$ in a model $M = (W, E, C, \beta)$, i.e., $M, w \models_\lcu \phi$. It can be verified that $M, w \models_\lcu \rho_1(\phi)$.
Construct a \cpdl model $N = (W,R,V)$ where $R(a) = \{(u, v) \in W \times W \mid C(a) \subseteq E(u, v)\}$ for all $a \in \ag$, and $V= \beta$.
For every $a \in \ag$ and $u, v \in W$, $C(a) \subseteq E(u, v) \iff (u, v) \in R(a)$, ensuring $M, u \models_\lcu K_a \psi \iff N, u \models_\cpdl [a] \rho_{234}(\psi)$.
For every $G \in \gr$, $R(((\bigcup_{a\in G} a);(\bigcup_{a\in G}a)^*)$ is the transitive closure of $\bigcup_{a\in G} R(a)$, match the path semantics for $C_G$, so $M, u \models_\lcu C_G \psi \iff N, u \models_\cpdl  [((\bigcup_{a\in G}a);(\bigcup_{a\in G}a)^*] \rho_{234} (\psi)$ for all $u \in W$.
Furthermore, $R(\nu) = W \times W$, so $M, u \models_\lcu U \psi \iff N, u \models_\cpdl  [\nu] \rho_{234}(\psi)$ for all $u \in W$.
An inductive proof will show that for all \langcu-formulas $\psi$ and $u \in W$, $M, u \models_\lcu \psi \iff N, u \models_\cpdl \rho_{234}(\psi)$. Since $M, w \models_\lcu \rho_1(\phi)$, it follows that $N, w \models_\cpdl \rho_{234}(\rho_1(\phi))$, i.e., $N, w \models_\cpdl \rho(\phi)$, proving $\rho(\phi)$ is satisfiable in \cpdl.

Right to left. Suppose $\rho(\phi)$ is satisfied at a world $w \in W$ in a \cpdl model $N = (W, R, V)$, i.e., $N, w \models_\cpdl \rho(\phi)$. It follows that $N, w \models_\cpdl \rho_{234}(\phi)$. Construct a model $M = (W, E, C, \beta)$ where:
\begin{itemize}
\item $E: W \times W \to \wp(\ag)$ with $E(u,v) =\{ a \in \ag \mid (u, v) \in R(a) \text{ or } (v,u) \in R(a)\}$ for all $u, v \in W$;
\item $C: \ag \to \wp(\ag)$ with $C(x) = \{x\}$ for all $x \in \ag$;
\item $\beta = V$.
\end{itemize}
Using agents as skills is justified by Footnote~\ref{ft}, and $M$ can be verified to be a model.

We prove by induction on $\psi$ that:
For all \langcu-formulas $\psi$ with $\psi \in \cl(\phi)$ and $u \in W$, $M, u \models_\lcu \psi \iff N, u \models_\cpdl \rho_{234}(\psi)$.

$\bullet$ Atomic, Boolean and Universal cases: straightforward and omitted.

$\bullet$ Case $\psi = K_a \chi$: $\rho_{234}(\psi) = [a] \rho_{234}(\chi)$.
\underline{Left to right.} Suppose $N, u \not \models_\cpdl [a] \rho_{234}(\chi)$. Then there exists $v \in W$ such that $(u, v) \in R(a)$ and $N, v \not\models_\cpdl \rho_{234}(\chi)$. By induction hypothesis, $M, v \not\models_\lcu \chi$ and by the definition of $M$, $C(a) \subseteq E(u, v)$, so $M, u \not\models _\lcu K_a \chi$.
\underline{Right to left.} Suppose $M, u \not\models_\lcu K_a \chi$. Then there exists $v \in W$ such that $M, v \not\models_\lcu \chi$, and either $(u, v) \in R(a)$ or $(v, u) \in R(a)$. By induction hypothesis, $N, v \not\models_\cpdl \rho_{234}(\chi)$. If $(u, v) \in R(a)$, then $N, u \not\models_\cpdl [a] \rho_{234}(\chi)$ directly. If $(v, u) \in R(a)$, since $N, w \models_\cpdl \rho_{234}(\univ (\bigwedge_{\theta \in \mu(\phi)} \theta))$, $N, v \models_\cpdl \neg [a] \neg [a] \rho_{234}(\chi) \to \rho_{234}(\chi)$. Hence, $N, v \not \models_\cpdl \neg [a] \neg [a] \rho_{234}(\chi)$, so $N, v \models_\cpdl [a] \neg [a] \rho_{234}(\chi)$. It follows that $N, u \models_\cpdl \neg [a] \rho_{234}(\chi)$, hence $N, u \not \models_\cpdl [a] \rho_{234}(\chi)$.

$\bullet$ Case $\psi = C_{G} \chi$: $\rho_{234}(\psi) = [((\bigcup_{a\in G}a);(\bigcup_{a\in G}a)^*)] \rho_{234}(\chi)$.
\underline{Left to right.} Suppose $N, u \not \models_\cpdl [((\bigcup_{a\in G}a);(\bigcup_{a\in G}a)^*)] \rho_{234}(\chi)$. Then there exist $u_0, \dots, u_n \in W$, $n \geq 1$ and $a_1, \dots, a_n \in G$, such that $u = u_0$ and $(u_{i-1},u_{i}) \in R(a_i)$ for all $1 \leq i \leq n$ and $N, u_n \not\models_\cpdl \rho_{234}(\chi)$. By the induction hypothesis, $M, u_n \not\models_\lcu \chi$. By the definition of $E$, for all $1 \leq i \leq n$, $C(a_i) \subseteq E( u_{i-1}, u_{i} )$. Thus, $M, u \not \models C_G \chi$ by the semantics.
\underline{Right to left.} Suppose $M, u \not \models_\lcu C_G \chi$. Then there exist $u_0, \dots, u_n \in W$, $n \geq 1$ and $a_1, \dots, a_n \in G$, such that $u = u_0$ and $M, u \not \models_\lcu C_G \chi$ and for all $1 \leq i \leq n$, $C(a_i) \subseteq E( u_{i-1}, u_{i} )$. By the induction hypothesis, $N, u_n \not\models_\cpdl \rho_{234}(\chi)$. By the definition of $E$, for all $1 \leq i \leq n$, either $(u_{i-1},u_{i}) \in R(a_i)$ or $(u_{i},u_{i-1}) \in R(a_i)$. Similarly to the proof in Case $\psi = K_a \psi$, from $N, u_n \not \models_\cpdl \rho_{234}(\chi)$ and either $(u_{n-1},u_{n}) \in R(a_n)$ or $(u_{n},u_{n-1}) \in R(a_n)$, it follows that $N, u_{n-1} \not\models_\cpdl [a_n]\rho_{234}(\chi)$. Hence $N, u_{n-1} \not\models_\cpdl [((\bigcup_{a\in G}a);(\bigcup_{a\in G}a)^*)]\rho_{234}(\chi)$. Repeat the inference, from $N, u_{n-1} \not\models_\cpdl [((\bigcup_{a\in G}a);(\bigcup_{a\in G}a)^*)]\rho_{234}(\chi)$, and since either $(u_{n-2},u_{n-1}) \in R(a_{n-1})$ or $(u_{n-1},u_{n-2}) \in R(a_{n-1})$, it follows that $N, u_{n-2} \not\models_\cpdl [a_{n-1}] [((\bigcup_{a\in G}a);(\bigcup_{a\in G}a)^*)] \rho_{234}(\chi)$. Hence $N, u_{n-2} \not\models_\cpdl [((\bigcup_{a\in G}a);(\bigcup_{a\in G}a)^*)]\rho_{234}(\chi)$. Repeat the inferences, it follows that $N, u \not\models_\cpdl [((\bigcup_{a\in G}a);(\bigcup_{a\in G}a)^*)]\rho_{234}(\chi)$. I.e., $N, u \not\models_\cpdl \rho_{234}(C_G \chi)$.

Therefore, the induction holds for all $\psi \in \cl(\phi)$. Since $N, w \models_{\cpdl} \rho_{234}(\phi)$, by the induction claim applied to $\psi = \phi$ (noting $\phi \in \cl(\phi)$), it follows that $M, w \models_\lcu \phi$, proving $\phi$ is satisfiable in \lcu.

(2) By Lemma~\ref{lem:sat-toCPDL}, the function $\rho$ can reduce the satisfiability problem of $\l_{C\univ}$ to that of \textrm{CPDL} in polynomial time.
\end{proof}

With the results established, we are now positioned to state the following theorem, drawing on Lemmas~\ref{lem:CDEF-CU}, \ref{lem:S5C-C}, and \ref{lem:CU-CPDL}.

\begin{thm}\label{thm:sat-CtoCDEF}
The satisfiability problem for any logic introduced in Section~\ref{sec:logics} that includes common knowledge but excludes update and quantifying modalities is EXPTIME complete.
\end{thm}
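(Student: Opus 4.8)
The plan is to sandwich every logic in the statement between \lc and \lcdefu and then chain together the polynomial-time reductions assembled in Figure~\ref{fig:sat-C}. For the lower bound I would observe that \lc is a sublogic of every logic $\logic{X}$ containing the common-knowledge operators $C_G$, so \lc-satisfiability is the restriction of $\logic{X}$-satisfiability to \langc-formulas. Lemma~\ref{lem:S5C-C} supplies a polynomial-time reduction from \sfivec to \lc, and \sfivec-satisfiability is EXPTIME-hard by \cite[Section~3.5]{FHMV1995}; hence EXPTIME-hardness propagates to \lc and thus to every such $\logic{X}$.

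For the upper bound I would note that any logic meeting the description is a sublogic of \lcdefu $(= \lGA)$, so it suffices to place \lGA-satisfiability in EXPTIME. Here I compose two reductions: Lemma~\ref{lem:CDEF-CU} reduces \langGA-satisfiability to \langcu-satisfiability in polynomial time — the universal modality is merely passed through by the rewriting of Definition~\ref{def:rw-G-CA}, so the same construction also realises the reduction from \lG to \lcu mentioned in the roadmap — and Lemma~\ref{lem:CU-CPDL} reduces \langcu-satisfiability to \cpdl-satisfiability in polynomial time. Since \cpdl-satisfiability is in EXPTIME by \cite[Corollary~7.7]{PT1991}, and EXPTIME is closed under polynomial-time many-one reductions, \lGA-satisfiability and therefore the satisfiability problem of every sublogic in the statement lies in EXPTIME. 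Combining the two bounds gives EXPTIME-completeness.

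As to where the genuine work sits: the theorem is essentially bookkeeping once Lemmas~\ref{lem:S5C-C}, \ref{lem:CDEF-CU} and \ref{lem:CU-CPDL} are available. The only points that need care are (i) confirming the sandwich, i.e.\ that an arbitrary logic matching the description — with $D_G$, $E_G$, $F_G$ optionally present and the update and quantifying modalities absent, plus possibly the universal modality if one wants the fuller picture — indeed lies between \lc and \lcdefu; and (ii) checking that the three polynomial-time reductions compose, in the sense that the output of each rewriting is a legal input to the next and the polynomial size blow-ups compound to a polynomial. The substantive model-theoretic obstacle — building, in each reduction, a model of the source logic out of a model of the target logic (the ``right to left'' directions, where the group-indexed unravelings, the treatment of finite groups as skills, and the $K_c$- or $\univ$-guarded conjuncts of $\mu(\phi)$ do the heavy lifting) — has already been discharged inside those lemmas, so no new construction is required at this stage.
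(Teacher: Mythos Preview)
Your proposal is correct and mirrors the paper's own argument: the paper states the theorem immediately after Lemmas~\ref{lem:S5C-C}, \ref{lem:CDEF-CU} and \ref{lem:CU-CPDL} and simply says it is obtained ``drawing on'' these three lemmas, i.e., exactly the sandwich-plus-composition you describe. Your remark that the reduction of Lemma~\ref{lem:CDEF-CU} passes $\univ$ through unchanged---so the same rewriting serves for \lG in place of \lGA---matches the paper's own observation in the roadmap discussion preceding Figure~\ref{fig:sat-C}.
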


\subsubsection{Reduction from \texorpdfstring{\kutwo}{KU2} to \texorpdfstring{\lu}{LU}}

While Theorem~\ref{thm:sat-CtoCDEF} resolves the satisfiability problems for logics ranging from \lu to \lGA, it does not fully complete the roadmap outlined in Figure~\ref{fig:sat-C}. Specifically, the introduction of the universal modality ($U$) in our proofs, intended to streamline the analysis, results in 16 additional logics beyond those defined in Section~\ref{sec:logics}. These logics vary based on the inclusion of the operators $C$ (common knowledge), $D$ (distributed knowledge), $E$ (everyone knows), and $F$ (field knowledge). Among them, only those between \lcu and \lGA have been established as EXPTIME-complete for satisfiability, as shown in prior results (e.g., Lemma~\ref{lem:CU-CPDL}). The complexity of the satisfiability problems for the remaining logics remains unresolved. In this section, we address this gap, demonstrating that all logics incorporating the universal modality but excluding update and quantifying modalities have an EXPTIME-complete satisfiability problem.

To achieve this, we propose a transformation that converts any two-agent \langu-formula (with agents denoted $a, b \in \ag$) satisfiable in \kutwo---the classical bimodal logic extended with the universal modality---into an \langu-formula satisfiable in \lu. Recall that in a Kripke model $N = (W, R, V)$, the formula $U \phi$ holds at a world $w$, i.e., $N, w \models U \phi$, if and only if  $N, u \models \phi$ for all $u \in W$.

It is established in \cite[Corollary 5.4.8]{Spaan1993} that the satisfiability problem for \kutwo (denoted $L^\Box$ therein) is EXPTIME complete.

\begin{defi}[Rewriting]\label{def:rewrite-to=}
For a two-agent \lu-formula $\phi$ (with agents $a, b \in \ag$), define $\rho(\phi)$ as a four-agent \langu-formula (using agents $a_1, a_2, b_1, b_2 \in \ag$ that are distinct from $a, b$ and each other) by applying the following steps sequentially, where $p$ is a fresh atomic proposition not appearing in $\phi$:
\begin{enumerate}
\item Replace every occurrence of $K_a \theta$ in $\phi$ with $K_{a_1} K_{a_2} (p \ra \theta)$, every occurrence of $K_b \theta$ with $K_{b_1} K_{b_2} (p \ra \theta)$, and every occurrence of $\univ \theta$ in $\phi$ with $\univ( p \ra \theta)$. Denote the resulting formula by $\rho_1(\phi)$.

\item Define $\rho(\phi) = \rho_1(\phi) \wedge p \wedge \univ ((p \ra \bigwedge_{x \in \{a_1,a_2,b_1,b_2\}} K_{x} \neg p) \wedge (\neg p \ra \bigwedge_{x \in \{a_1,a_2,b_1,b_2\}} K_{x} p))$.
\end{enumerate}
\end{defi}
It is clear from the construction that if $\phi$ is a two-agent \langu-formula, then both $\rho (\phi)$ and $\rho_1(\phi)$ are four-agent \langu-formulas.

\begin{lem}\label{lem:sat-to=}\label{lem:red-to=}
\begin{enumerate}
\item For any two-agent \langu-formula $\phi$, $\phi$ is satisfiable in \kutwo if and only if $\rho(\phi)$ is satisfiable (in \lu);
\item The satisfiability problem for \kutwo is polynomial-time reducible to that for $\l_{\univ}$.
\end{enumerate}
\end{lem}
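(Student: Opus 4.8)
The plan is to argue exactly as in the earlier reduction lemmas (compare Lemma~\ref{lem:sat-KBtoL}, and the ``agents as skills'' device of Footnote~\ref{ft}). Statement~(2) will be immediate once~(1) is established, because the rewriting $\rho$ of Definition~\ref{def:rewrite-to=} is computable in polynomial time: Step~1 replaces each occurrence of $K_a$, $K_b$ and $\univ$ by a constant-size wrapper, and Step~2 appends a single fixed-size conjunct, so $|\rho(\phi)|$ is linear in $|\phi|$. Hence the whole effort goes into the biconditional~(1). The key point is the mismatch of model classes: in an \lu-model the accessibility relation of an agent $x$, namely $\{(y,z)\mid C(x)\subseteq E(y,z)\}$, is forced to be \emph{symmetric} (Definition~\ref{def:models}), whereas \kutwo imposes no condition on its two modal relations. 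The rewriting circumvents this by a bipartite gadget: the fresh atom $p$ splits the worlds of the target model into a ``$p$-layer'' simulating the original worlds and a ``$\neg p$-layer'' of auxiliary worlds, and doubling each agent ($a\mapsto a_1,a_2$, $b\mapsto b_1,b_2$) lets an arbitrary relation be recovered as the composition of two symmetric relations routed through the $\neg p$-layer; the conjunct $p\wedge\univ((p\ra\bigwedge_x K_x\neg p)\wedge(\neg p\ra\bigwedge_x K_x p))$ pins down this shape, and the guards $p\ra(\cdot)$ inserted by Step~1 make the translation see only the $p$-layer.

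For the direction ``$\phi$ is \kutwo-satisfiable $\Rightarrow$ $\rho(\phi)$ is \lu-satisfiable'', I start from a \kutwo-model $N=(W,R,V)$ with $N,w\models_{\kutwo}\phi$ and build an \lu-model $M$ on two disjoint copies $W^{+}=\{u^{+}\mid u\in W\}$ and $W^{-}=\{u^{-}\mid u\in W\}$ of $W$. I want the derived relation of $a_1$ in $M$ to be the symmetric relation $R_{a_1}=\{(u^{+},u^{-}),(u^{-},u^{+})\mid u\in W\}$ and that of $a_2$ to be $R_{a_2}=\{(u^{-},v^{+}),(v^{+},u^{-})\mid (u,v)\in R(a)\}$, and analogously $R_{b_1},R_{b_2}$ from $R(b)$, so that ``$R_{a_1}$ then $R_{a_2}$'' restricted to the $p$-layer is exactly $\{(u^{+},v^{+})\mid (u,v)\in R(a)\}$. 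Treating agents as skills (Footnote~\ref{ft}), I set $C(a_i)=\{a_i\}$, $C(b_i)=\{b_i\}$, $E(x,y)=\{z\in\{a_1,a_2,b_1,b_2\}\mid (x,y)\in R_z\}$, $\beta(u^{+})=V(u)\cup\{p\}$ and $\beta(u^{-})=\emptyset$; then $E$ is symmetric, and positivity holds vacuously since each $E(x,y)$ is finite while $\sk$ is infinite, so $M$ is a legitimate model. One checks directly from the bipartite shape that $M,w^{+}$ satisfies the two extra conjuncts of $\rho(\phi)$, and then proves by induction on subformulas $\theta$ of $\phi$ that $N,u\models_{\kutwo}\theta\iff M,u^{+}\models_{\lu}\rho_1(\theta)$ for all $u\in W$; in the case $\theta=K_a\chi$ the unique $R_{a_1}$-successor of $u^{+}$ is $u^{-}$, whose $R_{a_2}$-successors are precisely the $v^{+}$ with $(u,v)\in R(a)$ (all $p$-worlds, so the guard $p\ra(\cdot)$ disappears), and in the case $\theta=\univ\chi$ the guard relativises the universal quantifier back to the $p$-layer, a faithful copy of $W$. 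Taking $\theta=\phi$ yields $M,w^{+}\models\rho(\phi)$.

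For the converse, let $M=(W,E,C,\beta)$ with $M,w\models_{\lu}\rho(\phi)$; then $p\in\beta(w)$. Put $W'=\{v\in W\mid p\in\beta(v)\}$ and define a \kutwo-model $N=(W',R,V)$ with $V=\beta$ on $W'$ and $(u,v)\in R(a)$ iff there is $t\in W$ with $C(a_1)\subseteq E(u,t)$ and $C(a_2)\subseteq E(t,v)$ (and similarly $R(b)$ from $b_1,b_2$); \kutwo imposes no constraint on $R$, so $N$ is well formed. Since the $\univ$-conjunct of $\rho(\phi)$ holds at $w$, it holds at every world of $M$, which guarantees that any intermediate $t$ above is a $\neg p$-world and any $v$ reached from a $p$-world is again in $W'$ --- this is precisely what makes the guard $p\ra(\cdot)$ collapse in the induction. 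One then shows by induction on $\theta\in\cl(\phi)$ that $N,v\models_{\kutwo}\theta\iff M,v\models_{\lu}\rho_1(\theta)$ for every $v\in W'$: the case $\theta=K_a\chi$ unwinds $K_{a_1}K_{a_2}(p\ra\rho_1(\chi))$ at a $p$-world into a quantifier over exactly the $R(a)$-successors lying in $W'$, and the case $\theta=\univ\chi$ uses that $\univ(p\ra(\cdot))$ in $M$ quantifies over precisely $W'$, the domain of $N$. Applying the claim to $\theta=\phi$ and using $M,w\models\rho_1(\phi)$ gives $N,w\models_{\kutwo}\phi$, so $\phi$ is \kutwo-satisfiable.

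I expect the main obstacle to be the correctness of the gadget in the forward direction: one must verify that composing the two symmetric relations through the $\neg p$-layer produces \emph{exactly} the target relation with no spurious pairs and no cross-interference between the $a$- and $b$-gadgets, and that the resulting edge function still respects the symmetry and positivity constraints of Definition~\ref{def:models}. Dually, for the converse one must be sure the bookkeeping conjunct $p\wedge\univ(\cdots)$ is strong enough to force the bipartite structure in \emph{every} model of $\rho(\phi)$, so that the ``guard collapses'' step of the induction is legitimate. Once this bipartite invariant is nailed down, both inductions are routine, and statement~(2) follows since $\rho$ runs in polynomial time.
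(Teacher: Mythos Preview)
Your proposal is correct and follows the same overall strategy as the paper, including an identical construction for the backward direction. The one genuine difference is in the forward direction: the paper builds the target \lu-model on $W\cup(W\times W)$, assigning to each ordered pair $(u,v)\in R(a)$ its own intermediate $\neg p$-world, whereas you use only two copies $W^{+}\cup W^{-}$ and let \emph{all} outgoing $a$-edges from $u$ share the single intermediate world $u^{-}$. Your construction is more economical (linear rather than quadratic blow-up of the domain) and still works because $K_{a_1}K_{a_2}$ composes the relations in a fixed order: from $u^{+}$ the unique $a_1$-successor is $u^{-}$, and from $u^{-}$ the $a_2$-successors are exactly the $v^{+}$ with $(u,v)\in R(a)$, so no spurious pairs arise and the $a$- and $b$-gadgets do not interfere. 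The paper's per-edge intermediate worlds achieve the same effect with more redundancy; neither approach is needed for the complexity bound since only satisfiability (not model size) is at stake.
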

\begin{proof}
Left to right. Suppose there exists a Kripke model $N = (W, R, V)$ and a world $w \in W$ such that $N, w \models_\kutwo \phi$. Construct a model $M = (W', E, C, \beta)$ where:
\begin{itemize}
\item $W' = W \cup (W \times W)$ (with $W \times W$ denoted $W^2$ for short);
\item $E: W' \times W' \to \wp (\ag)$, defined as:
	\[
	E (x, y) = \left\{
	\begin{array}{ll}
	\emptyset,& \text{if $x, y \in W$,} \\
	\emptyset,& \text{if $x, y \in W^2$,} \\
	\emptyset,& \text{if $x \in W,\, y \in W^2,\, x \notin y$,} \\
	\{ a_1 \mid y \in R(a)\} \cup \{ b_1 \mid y \in R(b)\},& \text{if $x \in W,\, y \in W^2,\, x = l(y) \neq r(y)$,} \\
	\{ a_2 \mid y \in R(a)\} \cup \{ b_2 \mid y \in R(b)\},& \text{if $x \in W,\, y \in W^2,\, x = r(y) \neq l(y)$,} \\
	\{ a_1,a_2 \mid y \in R(a)\} \cup \{ b_1,b_2 \mid y \in R(b)\},& \text{if $x \in W,\, y \in W^2,\, y = (x, x)$,} \\
	\emptyset,& \text{if $y \in W,\, x \in W^2,\, y \notin x$,} \\
	\{ a_1 \mid x \in R(a)\} \cup \{ b_1 \mid x \in R(b)\},& \text{if $y \in W,\, x \in W^2,\, y = l(x) \neq r(x)$,} \\
	\{ a_2 \mid x \in R(a)\} \cup \{ b_2 \mid x \in R(b)\},& \text{if $y \in W,\, x \in W^2,\, y = r(x) \neq l(x)$,} \\
	\{ a_1,a_2 \mid x \in R(a)\} \cup \{ b_1,b_2 \mid x \in R(b)\},& \text{if $y \in W,\, x \in W^2,\, x = (y, y)$,} \\
	\end{array}
	\right.
	\]
where $l(z)$ and $r(z)$ denote the left and right elements of a pair $z \in W^2$;
\item $C:\ag \to \wp(\ag)$ with $C(x)=\{x\}$ for all $x \in \ag$;
\item $\beta: W' \to \wp(\pr)$ with $\beta(x) = V(x) \cup \{p\}$ and $\beta((x, y)) = \emptyset$ for all $x, y \in W$.
\end{itemize}
Using agents as skills is justified by Footnote~\ref{ft}, and $M$ can be verified to be a model.

We prove by induction:
for all two-agent \langu-formulas $\psi$ not containing $p$, and all $u \in W$, $M, u \models_\lu \rho_1(\psi) \iff N, u \models_\kutwo \psi$.

\begin{itemize}
\item Atomic and Boolean cases are omitted.
\item Case $\psi = K_a \chi$: $\rho_1(\psi) = K_{a_1} K_{a_2} (p \ra \rho_1(\chi))$.
Observe: for all $x \in W'$, $M, x \models_\lu p$ iff $x \in W$.
For any $u \in W$:

$M, u \not\models_\lu K_{a_1}K_{a_2} (p \to \rho_1(\chi))$ \\
iff there exists $v \in W$ such that $a_1 \in E(u,(u, v))$, $a_2 \in E((u, v),v)$ and $M, v \not\models_\lu \rho_1(\chi)$ \\
iff there exists $v \in W$ such that $(u, v) \in R(a)$ and $M, v \not\models_\lu \rho_1(\chi)$ \\
iff there exists $v \in W$ such that $(u,v) \in R(a)$ and $N, v \not\models_\kutwo \chi$ \\
iff $N, u \not\models_\kutwo K_a \chi$.

\item Case $\psi = K_b \chi$: analogous, using $b_1$ and $b_2$.

\item Case $\psi = \univ \chi$: $\rho_1(\psi) = \univ (p \ra \rho_1(\chi))$. For all $u \in W$:

$N, u \not \models_\kutwo \univ \chi$ \\
iff there exists $v \in W$ such that $N, v \not \models_\kutwo \chi$ \\
iff there exists $v \in W$ such that $M, v \not \models_\lu \rho_1(\chi)$ (by the induction hypothesis)\\
iff $M, u \not \models_\lu \univ (p \ra \rho_1(\chi))$ (since for any $u' \in W'$, $M, u'\models p$ iff $u' \in W$).
\end{itemize}

It then follows from the claim that $M, w \models_\lu \rho_1(\phi)$. It can be verified that $M, u \models_\lu \univ ((p \ra \bigwedge_{x \in \{a_1,a_2,b_1,b_2\}} K_{x} \neg p) \wedge (\neg p \ra \bigwedge_{x \in \{a_1,a_2,b_1,b_2\}} K_{x} p))$ for any $u \in W'$. Moreover, notice that $M, w \models p$. Thus, $M, w \models_\lu \rho(\phi)$, proving that $\rho(\phi)$ is satisfiable.

Right to left. Suppose there exists a model $M = (W, E, C, \beta)$ and a world $w \in W$ such that $M, w \models_\lu \rho(\phi)$. Then $M, w \models_\lu \rho_1(\phi)$ and $M, w \models_\lu p \wedge \univ ((p \ra \bigwedge_{x \in \{a_1,a_2,b_1,b_2\}} K_{x} \neg p) \wedge (\neg p \ra \bigwedge_{x \in \{a_1,a_2,b_1,b_2\}} K_{x} p))$.

Construct a two-agent Kripke model $N = (W',R,V)$ where:
\begin{itemize}
\item $W' = \{u \in W \mid M, u \models_\lu p \}$;
\item $R: \ag \to W' \times W'$ such that for any $u, v \in W'$:
	\begin{itemize}
	\item $(u,v) \in R(a)$ iff there exists $x \in W$ such that $C(a_1) \subseteq E(u, x)$ and $C(a_2) \subseteq E(x, v)$;
	\item $(u,v) \in R(b)$ iff there exists $x \in W$ such that $C(b_1) \subseteq E(u, x)$ and $C(b_2) \subseteq E(x, v)$;
	\end{itemize}
\item $V :W' \to \wp(\pr)$ with $V(u) = \beta(u)$ for all $u \in W'$.
\end{itemize}

We prove by induction:
for all two-agent \langu-formulas $\psi$ not containing $p$, and all $u \in W'$, $M,u \models_\lu \rho_1(\psi) \iff N, u \models_\kutwo \psi$.

\begin{itemize}
\item Atomic and Boolean cases are straightforward and omitted.

\item Case $\psi = K_a \chi$: $\rho_1(\psi) = K_{a_1} K_{a_2} (p \to \rho_1(\chi))$.
\underline{Left to right.} Suppose $N, u \not \models_\kutwo K_a \chi$, then there exists $v \in W'$ such that $(u,v) \in R(a)$ and $N, v \not \models_\kutwo \chi$. Then there exists $u' \in W$ such that $C(a_1) \subseteq E(u,u')$ and $C(a_2) \subseteq E(u',v)$. Notice that since $v \in W'$, so $M, v \models_\lu p$, by induction hypothesis, $M, v \not \models_\lu (p \ra \rho_1(\chi))$. Hence $M, u \not \models_\lu K_{a_1}K_{a_2} (p \to \rho_1(\chi))$.
\underline{Right to left.} Suppose $M, u \not \models_\lu K_{a_1}K_{a_2} (p \to \rho_1(\chi))$, then there exists $u',v \in W$ such that $C(a_1) \subseteq E(u,u')$ and $C(a_2) \subseteq E(u',v)$ and $M, v \not \models_\lu (p \to \rho_1(\chi))$. Hence $M, v \not \models_\lu \rho_1(\chi)$ and $M, v \models_\lu p$. So $v \in W'$ and $(u,v) \in R(a)$. By induction hypothesis, $N, v \not \models_\kutwo \chi$, thus $N, v \not \models_\kutwo K_a \chi$.

\item Case $\psi = \univ \chi$: $\rho_1(\psi) = \univ (p \ra \rho_1(\chi))$. For any $u \in W'$:

$N, u \not \models_\kutwo \univ \chi$ \\
iff there exists $v \in W'$ such that $N, v \not \models_\kutwo \chi$ \\
iff there exists $v \in W'$ such that $M, v \not \models_\lu \rho_1(\chi)$ (by the inductive hypothesis) \\
iff $M, u \not \models_\lu \univ (p \ra \rho_1(\chi))$ (since for any $v \in W$, $v \in W'$ iff $M, v\models p$).
\end{itemize}

Thus, $N, w \models_\kutwo \phi$, proving $\phi$ satisfiable in \kutwo.

(2) The transformation $\rho(\phi)$ is computable in polynomial time (linear in $|\phi|$), and statement (1) establishes it as a valid reduction.
\end{proof}

The computational complexity results from Sections~\ref{sec:mc} (model checking) and \ref{sec:sat} (satisfiability) are summarized in Table~\ref{tblr:complexity}.%
\footnote{The validity problem is dual to the satisfiability problem. Since co-PSPACE equals PSPACE and co-EXPTIME equals EXPTIME, the complexity of validity problems follows directly from satisfiability results.}
While the complexity of model checking is fully resolved for all 2048 logics proposed, determining the complexity of satisfiability and validity problems in a general manner proves more challenging. In particular, the complexity of the satisfiabilty and validity problems for \logic{\mathscr{C}} (logics with update modalities but no quantifiers) and \logic{\mathscr{D}} (logics with quantifiers) remains open.

\begin{longtblr}[
	caption = {Summary of the Complexity Results},
	label = {tblr:complexity},
	note{i} = {$\logic{\mathscr{A}}$ denotes any logic without common knowledge, update modalities or quantifiers, i.e., $\logic{DEF}$ or any of its sublogics. Formally, $\{ C, +, -, =, \equiv, \boxplus, \boxminus, \Box \} \cap \mathscr{A} = \emptyset$.},
	note{ii} = {$\logic{\mathscr{B}}$ denotes any logic with common knowledge but without update or quantifying modalities, i.e., $C \in \mathscr{B}$ and $\{ +, -, =, \equiv, \boxplus, \boxminus, \Box \} \cap \mathscr{B} = \emptyset$.},
	note{iii} = {$\logic{\mathscr{C}}$ denotes any logic with at least one update modality but without quantifiers, i.e., $\{ +, -, =, \equiv \} \cap \mathscr{C} \neq \emptyset$ and $\{ \boxplus, \boxminus, \Box \} \cap \mathscr{C} = \emptyset$.},
	note{iv} = {$\logic{\mathscr{D}}$ denotes any logic with at least one quantifier, i.e., $\{ \boxplus, \boxminus, \Box \} \cap \mathscr{D} \neq \emptyset$.},
]{
	columns = {c},
	column{1} = {l},
	colsep = {1em},
	row{1} = {font=\bfseries},
	rows = {2em, m},
	rowsep = {3pt},
	hline{1, 4} = {1pt, solid},
	hline{2},
}
Problems & $\logic{\mathscr{A}}$ & $\logic{\mathscr{B}}$ & $\logic{\mathscr{C}}$ & $\logic{\mathscr{D}}$
\\
Model checking & in P & in P & in P & {PSPACE\\[-3pt]complete}
\\
{Validity /\\Satisfiability} & {PSPACE\\[-3pt]complete} & {EXPTIME\\[-3pt]complete} & ? & ?
\end{longtblr}

\section{Discussion}
\label{sec:conclusion}

We have introduced a family of expressive epistemic logics that capture individual and group knowledge including common, mutual, distributed, and field knowledge, alongside epistemic actions such as knowing, forgetting, revising, and learning, as well as their necessity and possibility. Despite their high expressivity, these logics maintain reasonable computational complexity for central decision problems, namely satisfiability and model checking. Specifically:
\begin{itemize}
\item For logics without update modalities or quantifiers, satisfiability is PSPACE complete when common knowledge is absent, and EXPTIME complete when common knowledge is present. These results align with classical epistemic logics under standard Kripke semantics, as summarized in~\cite{FHMV1995}.
\item For logics without quantifiers, model checking is in P, consistent with many traditional epistemic logics.
\item For logics incorporating quantifiers, model checking becomes PSPACE complete, matching the complexity known from related frameworks such as Group Announcement Logic \cite{ABDS2010}, Coalition Announcement Logic \cite{Pauly2002,GAD2018,ADGW2021}, and Subset Space Arbitrary Announcement Logic \cite{BDK2013}.%
\footnote{It is noteworthy that model checking in Arbitrary Public Announcement Logic (APAL) has been claimed to be PSPACE complete \cite{BBDHHL2008}; however, we have not identified a detailed proof confirming this result.}
\end{itemize}
Our framework naturally generalizes to accommodate fuzzy skill sets and lattice-structured skills, enhancing its applicability to practical domains and real-world scenarios.

The decidability of validity and satisfiability problems in logics that employ quantification over epistemic updates has long intrigued logicians. Known negative results, such as the undecidability of Arbitrary Public Announcement Logic (APAL) and Group Announcement Logic~\cite{FD2008,ADF2016}, have motivated efforts toward identifying decidable fragments~\cite{FD2008,DFP2010,DF2022}. Even obtaining recursively axiomatizable systems constitutes notable progress~\cite{XW2018,BOS2023}, particularly given APAL's expected lack of recursive axiomatizability.
Past approaches, exemplified by~\cite{BBDHHL2008,ABDS2010,BDK2013}, predominantly rely on syntactic strategies---quantifying over formulas and indirectly updating models---which likely complicates satisfiability analysis. Our logic introduces an alternative semantic perspective, explicitly quantifying over semantic objects (updates of epistemic skills) instead of syntactic formulas. This semantic viewpoint complements other semantic frameworks, such as topological semantics explored in~\cite{WA2013SSPAL,BOS2017}, thereby enriching the theoretical landscape of epistemic update logics.

A primary goal of our ongoing research is to further delineate the decidability and computational complexity boundaries for satisfiability and validity problems within our logics. While we have established complexity results for simpler variants---for example, PSPACE-completeness for satisfiability without common knowledge, updates, or quantifiers (Theorem~\ref{thm:complexity-satpsp}), and EXPTIME-completeness for satisfiability with common knowledge but without updates or quantifiers (Theorem~\ref{thm:sat-CtoCDEF})---the computational complexity and decidability status of logics incorporating update modalities and quantifiers remain open challenges. In particular, the decidability of the full logic \langGUQ, which encompasses all knowledge modalities, update operations, and quantification mechanisms, remains unresolved.

Moreover, although some fragments of our logics have been completely axiomatized in earlier work~\cite{LW2022b,LW2024b}, a complete axiomatic system for the full logic has yet to be developed. Addressing these open problems constitutes an important direction for future research.

Additionally, we introduced a novel epistemic update modality, $(\equiv_b)_a$, representing the action wherein agent $a$ learns by adopting agent $b$'s skill set, effectively replacing $a$'s skills with those of $b$. We have also considered several variants to enable more nuanced skill modifications: incremental skill acquisition---adding $b$'s skills---via the operator $(+_b)_a$ (alternatively expressed using set notation as $(\cup_b)_a$); retaining only commonly beneficial skills via $(\cap_b)_a$; and removing undesirable skills via $(-_b)_a$ (or equivalently, $(\setminus_b)_a$). Further inspired by natural language, we are interested in studying the concept of ``deskilling,'' an epistemic update that reduces the complexity of skills required to distinguish epistemic possibilities, potentially enhancing knowledge by simplifying the underlying edge structure. Importantly, these diverse update modalities do not elevate the complexity of the model checking problem beyond P or PSPACE (depending on the presence of quantifiers), although they may complicate the satisfiability problem. Quantification over these richer learning operators offers a promising avenue for further study.

\section*{Acknowledgement}

We express our sincere gratitude to the anonymous reviewers of earlier drafts for their insightful and invaluable feedback that shaped this paper. We also thank the anonymous reviewers and the participants of GandALF 2024 for their constructive suggestions which significantly improved this work. Additionally, we acknowledge the financial support provided by the Project of Humanities and Social Sciences from the Ministry of Education of China (Grant No.\,24YJA72040002).

\bibliographystyle{alphaurl}
\bibliography{main}

\newcommand{\etalchar}[1]{$^{#1}$}
\begin{thebibliography}{vDvdHK08}

\bibitem[{\AA}BvDS10]{ABDS2010}
Thomas {\AA}gotnes, Philippe Balbiani, Hans van Ditmarsch, and Pablo Seban.
\newblock Group announcement logic.
\newblock {\em Journal of Applied Logic}, 8(1):62--81, 2010.
\newblock \href {https://doi.org/10.1016/j.jal.2008.12.002}
  {\path{doi:10.1016/j.jal.2008.12.002}}.

\bibitem[AGM85]{AGM1985}
Carlos~E. Alchourr\'on, Peter G\"ardenfors, and David Makinson.
\newblock On the logic of theory change: Partial meet contraction and revision
  functions.
\newblock {\em The Journal of Symbolic Logic}, 50:510--530, 1985.
\newblock \href {https://doi.org/10.2307/2274239} {\path{doi:10.2307/2274239}}.

\bibitem[{\AA}vDF16]{ADF2016}
T.~{\AA}gotnes, H.~van Ditmarsch, and T.~French.
\newblock The undecidability of quantified announcements.
\newblock {\em Studia Logica}, 104(4):597--640, 2016.
\newblock \href {https://doi.org/10.1007/s11225-016-9657-0}
  {\path{doi:10.1007/s11225-016-9657-0}}.

\bibitem[AvDGW21]{ADGW2021}
Natasha Alechina, Hans van Ditmarsch, Rustam Galimullin, and Tuo Wang.
\newblock Verification and strategy synthesis for coalition announcement logic.
\newblock {\em Journal of Logic, Language and Information}, 30(4):671--700,
  2021.
\newblock \href {https://doi.org/10.1007/s10849-021-09339-6}
  {\path{doi:10.1007/s10849-021-09339-6}}.

\bibitem[BBvD{\etalchar{+}}08]{BBDHHL2008}
Philippe Balbiani, Alexandru Baltag, Hans van Ditmarsch, Andreas Herzig,
  Tomohiro Hoshi, and Tiago de~Lima.
\newblock `knowable' as `known after an announcement'.
\newblock {\em The Review of Symbolic Logic}, 1(3):305--334, 2008.
\newblock \href {https://doi.org/10.1017/S1755020308080210}
  {\path{doi:10.1017/S1755020308080210}}.

\bibitem[BMS98]{BMS1998}
Alexandru Baltag, Lawrence~S. Moss, and S{\l}awomir Solecki.
\newblock The logic of public announcements, common knowledge, and private
  suspicions.
\newblock In I.~Gilboa, editor, {\em Proceedings of the 7th Conference on
  Theoretical Aspects of Rationality and Knowledge ({TARK 98})}, pages 43--56,
  1998.
\newblock \href {https://doi.org/10.5555/645876.671885}
  {\path{doi:10.5555/645876.671885}}.

\bibitem[B{\"O}S23]{BOS2023}
Alexandru Baltag, Ayb{\"u}ke {\"O}zg{\"u}n, and Ana Lucia~Vargas Sandoval.
\newblock Arbitrary public announcement logic with memory.
\newblock {\em Journal of Philosophical Logic}, 52(1):53--110, 2023.
\newblock \href {https://doi.org/10.1007/s10992-022-09664-6}
  {\path{doi:10.1007/s10992-022-09664-6}}.

\bibitem[B{\"O}VS17]{BOS2017}
Alexandru Baltag, Ayb{\"u}ke {\"O}zg{\"u}n, and Ana~Lucia Vargas~Sandoval.
\newblock Topo-logic as a dynamic-epistemic logic.
\newblock In Alexandru Baltag, Jeremy Seligman, and Tomoyuki Yamada, editors,
  {\em Logic, Rationality, and Interaction}, pages 330--346. Springer Berlin
  Heidelberg, 2017.
\newblock \href {https://doi.org/10.1007/978-3-662-55665-8_23}
  {\path{doi:10.1007/978-3-662-55665-8_23}}.

\bibitem[BvDK13]{BDK2013}
Philippe Balbiani, Hans van Ditmarsch, and Andrey Kudinov.
\newblock Subset space logic with arbitrary announcements.
\newblock In {\em Proceedings of ICLA 2013}, pages 233--244, 2013.
\newblock \href {https://doi.org/10.1007/978-3-642-36039-8_21}
  {\path{doi:10.1007/978-3-642-36039-8_21}}.

\bibitem[DLW21]{DLW2021}
Huimin Dong, Xu~Li, and Y\`{i}~N. W{\'{a}}ng.
\newblock Weighted modal logic in epistemic and deontic contexts.
\newblock In Sujata Ghosh and Thomas Icard, editors, {\em Proceedings of the
  Eighth International Conference on Logic, Rationality and Interaction (LORI
  2021)}, volume 13039 of {\em Lecture Notes of Theoretical Computer Science},
  pages 73--87. Springer, 2021.
\newblock \href {https://doi.org/10.1007/978-3-030-88708-7_6}
  {\path{doi:10.1007/978-3-030-88708-7_6}}.

\bibitem[FH88]{FH1988}
Ronald Fagin and Joseph~Y. Halpern.
\newblock Belief, awareness, and limited reasoning.
\newblock {\em Artificial Intelligence}, 34(1):39--76, 1988.
\newblock \href {https://doi.org/10.1016/0004-3702(87)90003-8}
  {\path{doi:10.1016/0004-3702(87)90003-8}}.

\bibitem[FHMV95]{FHMV1995}
Ronald Fagin, Joseph~Y. Halpern, Yoram Moses, and Moshe~Y. Vardi.
\newblock {\em Reasoning about Knowledge}.
\newblock The MIT Press, 1995.
\newblock \href {https://doi.org/10.7551/mitpress/5803.001.0001}
  {\path{doi:10.7551/mitpress/5803.001.0001}}.

\bibitem[FSU93]{FSU1993}
Aviezri~S Fraenkel, Edward~R Scheinerman, and Daniel Ullman.
\newblock Undirected edge geography.
\newblock {\em Theoretical Computer Science}, 112(2):371--381, 1993.
\newblock \href {https://doi.org/10.1016/0304-3975(93)90026-P}
  {\path{doi:10.1016/0304-3975(93)90026-P}}.

\bibitem[FvD08]{FD2008}
Tim French and Hans van Ditmarsch.
\newblock Undecidability for arbitrary public announcement logic.
\newblock In Carlos Areces and Robert Goldblatt, editors, {\em Advances in
  Modal Logic}, volume~7, pages 23--42. College Publications, 2008.

\bibitem[GAvD18]{GAD2018}
Rustam Galimullin, Natasha Alechina, and Hans van Ditmarsch.
\newblock Model checking for coalition announcement logic.
\newblock In Frank Trollmann and Anni-Yasmin Turhan, editors, {\em KI 2018:
  Advances in Artificial Intelligence}, pages 11--23, Cham, 2018. Springer
  International Publishing.
\newblock \href {https://doi.org/10.1007/978-3-030-00111-7_2}
  {\path{doi:10.1007/978-3-030-00111-7_2}}.

\bibitem[Hin62]{Hintikka1962}
Jaakko Hintikka.
\newblock {\em Knowledge and Belief: An Introduction to the Logic of Two
  Notions}.
\newblock Cornell University Press, Ithaca, New York, 1962.

\bibitem[HLMP18]{HLMP2018}
Mikkel Hansen, Kim~Guldstrand Larsen, Radu Mardare, and Mathias~Ruggaard
  Pedersen.
\newblock Reasoning about bounds in weighted transition systems.
\newblock {\em Logical Methods in Computer Science}, 14(4):1--32, 2018.
\newblock \href {https://doi.org/10.23638/LMCS-14(4:19)2018}
  {\path{doi:10.23638/LMCS-14(4:19)2018}}.

\bibitem[LLM03]{LLM2003}
J\^{o}me Lang, Paolo Liberatore, and Pierre Marquis.
\newblock Propositional independence: Formula-variable independence and
  forgetting.
\newblock {\em Journal of Artificial Intelligence Research}, 18(1):391--443,
  2003.
\newblock \href {https://doi.org/10.5555/1622420.1622431}
  {\path{doi:10.5555/1622420.1622431}}.

\bibitem[LM14]{LM2014}
Kim~G. Larsen and Radu Mardare.
\newblock Complete proof systems for weighted modal logic.
\newblock {\em Theoretical Computer Science}, 546(12):164--175, 2014.
\newblock \href {https://doi.org/10.1016/j.tcs.2014.03.007}
  {\path{doi:10.1016/j.tcs.2014.03.007}}.

\bibitem[LR94]{LR1994}
Fangzhen Lin and Ray Reiter.
\newblock Forget it!
\newblock In {\em Working Notes of {AAAI} Fall Symposium on Relevance}, pages
  154--159, 1994.

\bibitem[LS80]{LS1980}
David Lichtenstein and Michael Sipser.
\newblock {GO} is polynomial-space hard.
\newblock {\em Journal of the {ACM}}, 27(2):393--401, 1980.
\newblock \href {https://doi.org/10.1145/322186.322201}
  {\path{doi:10.1145/322186.322201}}.

\bibitem[LW22a]{LW2022b}
Xiaolong Liang and Y\`{i}~N. W{\'{a}}ng.
\newblock Epistemic logic over weighted graphs.
\newblock In {\em Proceedings of the Second International Workshop on Logics
  for New-Generation AI}, pages 43--58. College Publications, 2022.

\bibitem[LW22b]{LW2022}
Xiaolong Liang and Y\`{i}~N. W{\'{a}}ng.
\newblock Epistemic logic via distance and similarity.
\newblock In Sankalp Khanna, Jian Cao, Quan Bai, and Guandong Xu, editors, {\em
  PRICAI 2022: Trends in Artificial Intelligence}, pages 32--45, Cham, 2022.
  Springer Nature Switzerland.
\newblock \href {https://doi.org/10.1007/978-3-031-20862-1_3}
  {\path{doi:10.1007/978-3-031-20862-1_3}}.

\bibitem[LW24a]{LW2024c}
Xiaolong Liang and Y{\`\i}~N. W{\'a}ng.
\newblock Characterization of similarity metrics in epistemic logic.
\newblock In Rafik Hadfi, Patricia Anthony, Alok Sharma, Takayuki Ito, and Quan
  Bai, editors, {\em PRICAI 2024: Trends in Artificial Intelligence}, pages
  106--118, Singapore, 2024. Springer Nature Singapore.
\newblock \href {https://doi.org/10.1007/978-981-96-0116-5_9}
  {\path{doi:10.1007/978-981-96-0116-5_9}}.

\bibitem[LW24b]{LW2024}
Xiaolong Liang and Y\`i~N. W\'ang.
\newblock Epistemic skills: Logical dynamics of knowing and forgetting.
\newblock In Antonis Achilleos and Adrian Francalanza, editors, {\em {\rm
  Proceedings Fifteenth International Symposium on} Games, Automata, Logics,
  and Formal Verification, {\rm Reykjavik, Iceland, 19-21 June 2024}}, volume
  409 of {\em Electronic Proceedings in Theoretical Computer Science}, pages
  120--137. Open Publishing Association, 2024.
\newblock \href {https://doi.org/10.4204/EPTCS.409.12}
  {\path{doi:10.4204/EPTCS.409.12}}.

\bibitem[LW24c]{LW2024b}
Xiaolong Liang and Y{\`\i}~N. W{\'a}ng.
\newblock Field knowledge as a dual to distributed knowledge: A
  characterization by weighted modal logic.
\newblock In B.~Liao, J.~Pang, and T.~Rienstra, editors, {\em Fourth
  International Workshop on Logics for New-Generation Artificial Intelligence
  (LNGAI 2024)}, pages 9--31. College Publications, 2024.

\bibitem[LW25]{LW2025}
Xiaolong Liang and Y\`i~N. W\'ang.
\newblock Weighted epistemic logic: Skill assessment and rough set
  applications.
\newblock In Yi~Mei, Chao Qian, Quan Bai, Bing Xue, and Sankalp Khanna,
  editors, {\em Proceedings of the 22nd Pacific Rim International Conference on
  Artificial Intelligence (PRICAI)}, 2025.
\newblock \href {https://doi.org/10.1007/978-981-95-7072-0_20}
  {\path{doi:10.1007/978-981-95-7072-0_20}}.

\bibitem[MvdH95]{MvdH1995}
John-Jules~Ch. Meyer and Wiebe van~der Hoek.
\newblock {\em Epistemic Logic for AI and Computer Science}.
\newblock Cambridge University Press, 1995.
\newblock \href {https://doi.org/10.1017/CBO9780511569852}
  {\path{doi:10.1017/CBO9780511569852}}.

\bibitem[NT15]{NT2015}
Pavel Naumov and Jia Tao.
\newblock Logic of confidence.
\newblock {\em Synthese}, 192:1821--1838, 2015.
\newblock \href {https://doi.org/10.1007/s11229-014-0655-3}
  {\path{doi:10.1007/s11229-014-0655-3}}.

\bibitem[OP84]{OP1984}
Ewa Or{\l}owska and Zdzis{\l}aw Pawlak.
\newblock Representation of nondeterministic information.
\newblock {\em Theoretical Computer Science}, 29(1):27--39, 1984.
\newblock \href {https://doi.org/10.1016/0304-3975(84)90010-0}
  {\path{doi:10.1016/0304-3975(84)90010-0}}.

\bibitem[Orl88]{Orlowska1988}
Ewa Orlowska.
\newblock Logical aspects of learning concepts.
\newblock {\em International Journal of Approximate Reasoning}, 2(4):349--364,
  1988.
\newblock \href {https://doi.org/10.1016/0888-613X(88)90109-0}
  {\path{doi:10.1016/0888-613X(88)90109-0}}.

\bibitem[Or{\l}90]{Orlowska1990}
Ewa Or{\l}owska.
\newblock Kripke semantics for knowledge representation logics.
\newblock {\em Studia Logica}, 49(2):255--272, 1990.
\newblock \href {https://doi.org/10.1007/BF00935602}
  {\path{doi:10.1007/BF00935602}}.

\bibitem[Pau02]{Pauly2002}
Marc Pauly.
\newblock A modal logic for coalition power in games.
\newblock {\em Journal of Logic Computation}, 12(1):149--166, 2002.
\newblock \href {https://doi.org/10.1093/logcom/12.1.149}
  {\path{doi:10.1093/logcom/12.1.149}}.

\bibitem[Pla89]{Plaza1989}
Jan~A. Plaza.
\newblock Logics of public communications.
\newblock In M.~L. Emrich, M.~S. Pfeifer, M.~Hadzikadic, and Z.~W. Ras,
  editors, {\em Proceedings of the 4th International Symposium on Methodologies
  for Intelligent Systems (ISMIS '89)}, pages 201--216. Oak Ridge National
  Laboratory, 1989.

\bibitem[PT91]{PT1991}
Solomon Passy and Tinko Tinchev.
\newblock An essay in combinatory dynamic logic.
\newblock {\em Information and Computation}, 93(2):263--332, 1991.
\newblock \href {https://doi.org/10.1016/0890-5401(91)90026-X}
  {\path{doi:10.1016/0890-5401(91)90026-X}}.

\bibitem[Qui56]{Quine1956}
W.~V. Quine.
\newblock Quantifiers and propositional attitudes.
\newblock {\em The Journal of Philosophy}, 53(5):177--187, 1956.
\newblock \href {https://doi.org/10.2307/2022451} {\path{doi:10.2307/2022451}}.

\bibitem[Sah75]{Sahlqvist1975}
Henrik Sahlqvist.
\newblock Completeness and correspondence in the first and second order
  semantics for modal logic.
\newblock In Stig Kanger, editor, {\em Proceedings of the Third Scandinavian
  Logic Symposium}, volume~82 of {\em Studies in Logic and the Foundations of
  Mathematics}, pages 110 -- 143. Elsevier, 1975.
\newblock \href {https://doi.org/10.1016/S0049-237X(08)70728-6}
  {\path{doi:10.1016/S0049-237X(08)70728-6}}.

\bibitem[Sch78]{Schaefer1978}
Thomas~J. Schaefer.
\newblock On the complexity of some two-person perfect-information games.
\newblock {\em Journal of Computer and System Sciences}, 16(2):185--225, 1978.
\newblock \href {https://doi.org/10.1016/0022-0000(78)90045-4}
  {\path{doi:10.1016/0022-0000(78)90045-4}}.

\bibitem[Spa93]{Spaan1993}
Edith Spaan.
\newblock {\em Complexity of Modal Logics}.
\newblock PhD thesis, University of Amsterdam, 1993.

\bibitem[Vak89]{Vakarelov1989}
Dimiter Vakarelov.
\newblock Modal logics for knowledge representation systems.
\newblock In Albert~R. Meyer and Michael~A. Taitslin, editors, {\em Logic at
  Botik '89}, pages 257--277, Berlin, Heidelberg, 1989. Springer.
\newblock \href {https://doi.org/10.1007/3-540-51237-3_21}
  {\path{doi:10.1007/3-540-51237-3_21}}.

\bibitem[vDF22]{DF2022}
Hans van Ditmarsch and Tim French.
\newblock Quantifying over boolean announcements.
\newblock {\em Logical Methods in Computer Science}, 18(1), January 2022.
\newblock \href {https://doi.org/10.46298/lmcs-18(1:20)2022}
  {\path{doi:10.46298/lmcs-18(1:20)2022}}.

\bibitem[vDFP10]{DFP2010}
Hans van Ditmarsch, Tim French, and Sophie Pinchinat.
\newblock Future event logic -- axioms and complexity.
\newblock In Lev~D. Beklemishev, Valentin Goranko, and Valentin~B. Shehtman,
  editors, {\em Advances in Modal Logic 8, papers from the eighth conference on
  "Advances in Modal Logic," held in Moscow, Russia, 24-27 August 2010}, pages
  77--99. College Publications, 2010.

\bibitem[vDHLM09]{DHLM2009}
Hans van Ditmarsch, Andreas Herzig, J{\'e}r{\^o}me Lang, and Pierre Marquis.
\newblock Introspective forgetting.
\newblock {\em Synthese}, 169(2):405--423, 2009.
\newblock \href {https://doi.org/10.1007/s11229-009-9554-4}
  {\path{doi:10.1007/s11229-009-9554-4}}.

\bibitem[vDvdHK08]{DHK2008}
Hans van Ditmarsch, Wiebe van~der Hoek, and Barteld Kooi.
\newblock {\em Dynamic Epistemic Logic}, volume 337 of {\em Synthese Library}.
\newblock Springer Netherlands, 2008.
\newblock \href {https://doi.org/10.1007/978-1-4020-5839-4}
  {\path{doi:10.1007/978-1-4020-5839-4}}.

\bibitem[vW51]{Wright1951}
Georg~H. von Wright.
\newblock {\em An Essay in Modal Logic}.
\newblock Studies in Logic and the Foundations of Mathematics. North-Holland
  Publishing Company, 1951.

\bibitem[W{\AA}13]{WA2013SSPAL}
Y\`{i}~N. W\'{a}ng and Thomas {\AA}gotnes.
\newblock Subset space public announcement logic.
\newblock In Kamal Lodaya, editor, {\em Proceedings of ICLA}, volume 7750 of
  {\em Lecture Notes in Computer Science}, pages 245--257. Springer, 2013.
\newblock \href {https://doi.org/10.1007/978-3-642-36039-8_22}
  {\path{doi:10.1007/978-3-642-36039-8_22}}.

\bibitem[XW18]{XW2018}
Kang Xu and Y\`{i}~N. W\'{a}ng.
\newblock Group simple announcement logic.
\newblock {\em Studies in Logic}, 11(1):1--22, 2018.

\bibitem[YL96]{YL1996}
Y.Y. Yao and T.Y. Lin.
\newblock Generalization of rough sets using modal logics.
\newblock {\em Intelligent Automation \& Soft Computing}, 2(2):103--119, 1996.
\newblock \href {https://doi.org/10.1080/10798587.1996.10750660}
  {\path{doi:10.1080/10798587.1996.10750660}}.

\bibitem[ZZ09]{ZZ2009}
Yan Zhang and Yi~Zhou.
\newblock Knowledge forgetting: Properties and applications.
\newblock {\em Artificial Intelligence}, 173(16):1525--1537, 2009.
\newblock \href {https://doi.org/10.1016/j.artint.2009.07.005}
  {\path{doi:10.1016/j.artint.2009.07.005}}.

\end{thebibliography}
\end{document}